%% file: CAMERA_READY/AISTATS_camera_ready/AISTATS_paper.tex
\documentclass[twoside]{article}

\usepackage[accepted]{aistats2026}

\input{CAMERA_READY/AISTATS_camera_ready/macros}

\begin{document}

%

%

\twocolumn[

\aistatstitle{A Finite Time Analysis of Thompson Sampling for Bayesian Optimization with Preferential Feedback}

\aistatsauthor{ Joseph Lazzaro\And Davide Buffelli \And  Da-shan Shiu \And Sattar Vakili }

\aistatsaddress{ Imperial College London \And MediaTek Research \And MediaTek Research \And MediaTek Research } ]

\begin{abstract}
Preference feedback, in the form of pairwise comparisons rather than scalar scores, has seen increasing use in applications such as human-, laboratory-, and expert-in-the-loop design, as well as scientific discovery. We propose a Thompson Sampling (TS) approach to Bayesian optimization with preferential feedback that models comparisons using a monotone link on latent utility differences and leverages the dueling kernel induced by a base kernel. We provide a finite-time analysis showing that the performance of the proposed method matches that of standard TS for conventional Bayesian optimization with scalar feedback. The analysis exploits the anchor invariance of TS for challenger selection and introduces a double-TS pairing variant. We also demonstrate the performance of the method on both synthetic and real-world examples.
\end{abstract}

\input{CAMERA_READY/AISTATS_camera_ready/1_intro}

\input{CAMERA_READY/AISTATS_camera_ready/2_problem_form}
\input{CAMERA_READY/AISTATS_camera_ready/3_algorithm}
\input{CAMERA_READY/AISTATS_camera_ready/4_analysis}

\input{CAMERA_READY/AISTATS_camera_ready/5_experiments}

\input{CAMERA_READY/AISTATS_camera_ready/6_conclusion}

\bibliography{CAMERA_READY/AISTATS_camera_ready/bibliography}
\bibliographystyle{abbrvnat}

\input{CAMERA_READY/AISTATS_camera_ready/checklist}

\newpage
\onecolumn
\input{CAMERA_READY/AISTATS_camera_ready/appendix}

\end{document}

%% file: CAMERA_READY/AISTATS_camera_ready/macros.tex
\usepackage{amsmath}
\usepackage{amssymb} 
\usepackage{mathtools}
\usepackage{dsfont}
\usepackage{bbm} 
\usepackage{amsthm}
\usepackage{bm}
\usepackage{algorithm}
\usepackage{algorithmic}

\usepackage[authoryear,round]{natbib}

\usepackage[dvipsnames]{xcolor}
\usepackage[
  colorlinks=true,
  citecolor=NavyBlue,
  linkcolor=NavyBlue,
  urlcolor=NavyBlue
]{hyperref}

\DeclareMathOperator*{\argmax}{arg\,max}
\DeclareMathOperator*{\argmin}{arg\,min}
\DeclareMathOperator{\Var}{Var}

\newcommand{\cX}{\mathcal{X}}

\newcommand{\cH}{\mathcal{H}}
\newcommand{\cL}{\mathcal{L}}
\newcommand{\cO}{\mathcal{O}}
\newcommand{\tcO}{\tilde{\mathcal{O}}}
\newcommand{\cF}{\mathcal{F}}

\newcommand{\Hh}{H} 

\newcommand{\Rr}{\mathbb{R}}

\newcommand{\EE}{\mathbb{E}}

\newcommand{\kd}{k^{\Delta}}  
\newcommand{\Kd}{K^{\Delta}} 

\newcommand{\gp}{\text{GP}}

\newtheorem{assumption}{Assumption}
\newtheorem{proposition}{Proposition}
\newtheorem{theorem}{Theorem}
\newtheorem{lemma}{Lemma}

\newtheorem{remark}{Remark}

\usepackage{graphicx}
\usepackage{subcaption}

%% file: CAMERA_READY/AISTATS_camera_ready/1_intro.tex
\section{INTRODUCTION}

Optimizing a black-box objective when only \emph{pairwise preferences} between candidates are available, rather than calibrated scalar evaluations, arises in many modern applications. Examples include expert- and lab-in-the-loop design and evaluation, learning from human feedback, and scientific discovery settings in which absolute scoring is unavailable, too noisy, costly, or ill-defined, but relative judgments are comparatively reliable. These challenges have motivated a line of work on \emph{Bayesian Optimization from Human Feedback} (BOHF), which models preferences via a monotone link applied to latent utility differences, most commonly through the Bradley--Terry--Luce model \citep{bradley1952rank}. BOHF retains the sample-efficiency benefits of kernel methods while operating under strictly weaker feedback than conventional Bayesian optimization. Performance is typically measured by \emph{regret}, which quantifies the loss relative to the globally optimal action across both elements of each queried pair \citep{pasztor2024bandits,xu2024principled,kayal2025bayesian}.

In this paper, we advocate \emph{Thompson Sampling} (TS), which samples from the current posterior and uses the resulting draw as the acquisition function, as a particularly natural rule for BOHF. This contrasts with classic Bayesian optimization acquisition functions such as GP-UCB \citep{srinivas2009gaussian}, which selects points with the largest upper confidence bound; GP-EI \citep{bull2011convergence,wang2014theoretical}, which maximizes expected improvement; and GP-PI \citep{kushner1964new}, which maximizes the probability of improvement. Two properties make TS especially well suited to the preferential setting. First, \emph{anchor independence}: a TS draw of the comparison function decomposes as a difference of latent utilities, so maximizing the draw with respect to the first argument while holding the other fixed is equivalent to maximizing a draw of the latent utility itself. This reduces pair selection to two independent single-argument maximizations without introducing anchor bias. Second, TS naturally supports drawing multiple independent samples from the same posterior surrogate, providing a simple and symmetric mechanism for selecting both elements of a queried pair at each round. These advantages contrast with acquisition functions such as UCB, whose indices couple the two arguments and are deterministic, making it difficult to select two points in a symmetric and fully sequential manner.

TS is a widely used and principled approach for sequential decision-making that naturally balances exploration and exploitation through posterior sampling. The idea is to sample a model from the posterior and select actions that are optimal under that model. This inherent randomness aligns exploration with the agent’s uncertainty and often leads to more efficient learning compared to confidence-bound methods such as UCB~\citep{russo2014learning}. Empirically, TS has been observed to outperform UCB-style acquisitions across a range of bandit and Bayesian optimization problems~\citep{agrawal2012analysis,kaufmann2012thompson}.

Despite TS tracing back to \citet{thompson1933likelihood}, finite-time analyses even for classical bandits emerged only in the 2010s \citep{agrawal2012analysis,kaufmann2012thompson,russo2014learning}. For kernelized settings with scalar feedback, regret guarantees are known for GP-TS variants \citep{chowdhury2017kernelized,vakili2021scalable}. Preference feedback introduces additional challenges: the link nonlinearity and inference from pairwise comparisons, the design of pair-selection mechanisms rather than single-point queries, and the need to relate the resulting choices to standard steps in BO analysis. We develop an analysis of TS tailored to BOHF and show that its regret matches the rates achieved in standard BO with scalar feedback, thereby closing the gap between preferential and scalar-feedback settings. In Theorem~\ref{theorem_regret}, we prove that, with probability at least $1-\delta$, over $T$ rounds the cumulative regret is
\begin{equation}\label{eq:perf_int}
\tcO\!\big(\beta_T(\delta)\,\sqrt{T\,\Gamma(T)}\big), 
\end{equation}
where $\beta_T(\delta) =\cO\left(\sqrt{\Gamma(T)+\log\frac{1}{\delta}}\right)$ is a $(1-\delta)$ confidence-width multiplier and $\Gamma(T)$ is the maximum information gain. This matches the regret bounds established by \citet{chowdhury2017kernelized} for standard TS in the conventional scalar-feedback setting.

Our work relates closely to recent BOHF literature. \citet{pasztor2024bandits} proposed a Stackelberg game to pick the pair (MaxMinLCB). \citet{xu2024principled} presented an optimistic, UCB-based method that fixes one action to one of the points chosen in the previous round. The former attained the same theoretical guarantee as ours given in Equation~\eqref{eq:perf_int}, while the latter achieved a looser guarantee of $\tcO\!\big((T\,\Gamma(T))^{3/4}\big)$. Most recently, \citet{kayal2025bayesian} developed a multi-round, batch exploration scheme that leverages tighter, offline-style confidence intervals to achieve an improved regret bound of $\tcO\!\big(\sqrt{T\,\Gamma(T)}\big)$, at the expense of full sequentiality. In contrast, our approach preserves the simplicity and fully sequential nature of standard acquisition-based BO, selects both points in an independent and symmetric manner, and provides finite-time guarantees that match those for scalar-feedback BO.

\subsection{Contributions}
In this paper, we make the following contributions.
(i) We present a simple TS algorithm for BOHF that exploits anchor independence and multiple posterior samples to select \emph{both} elements of each pair in a \emph{fully sequential and symmetric} manner. Prior BOHF methods are either asymmetric, for example by fixing one action to a previously chosen point, or non-sequential due to batching. Our design proposes two contenders at every round without privileging either side. This is often desirable, and in some cases necessary, when eliciting preferences from human evaluators or large language model (LLM) judges.
(ii) We provide a finite-time analysis establishing regret bounds that match the scalar-feedback setting. We extend the analysis of standard TS to preference feedback, which introduces additional challenges due to the link nonlinearity and the pairwise design. Our proof leverages anchor independence, proven via Proposition~\ref{prop_theta_concentration}, together with novel techniques that relate the symmetrically selected pair to the uncertainty in their utility difference. Combined with information-theoretic bounds based on information gain, this yields $\tilde{\mathcal{O}}\!\big(\beta_T(\delta)\sqrt{T\,\Gamma(T)}\big)$ regret.
(iii) We give a practical implementation recipe using kernel surrogates and validate it on synthetic benchmarks and a real catalyst design task using the OCx24 dataset \citep{abed2024open}, showing competitive or improved performance compared to existing methods.

\subsection{Application and Experimental Setting}

\paragraph{Experimental design and human-in-the-loop.}
In many laboratory and design workflows, absolute measurements are scarce, costly, or poorly calibrated, whereas {relative} judgments are cheap and reliable. For instance, in protein engineering or materials discovery one can often decide which of two candidates yields higher fluorescence, stability, or band gap under identical conditions without running (or trusting) a full calibration pipeline \citep{yang2019machine,dang2025preferential,tucker2020preference}; similar pairwise judgments arise in perceptual tuning and A/B testing. This motivates BOHF framework: a GP surrogate over pairwise differences that preserves BO’s sample-efficiency while operating under strictly weaker, 1-bit feedback~\citep{pasztor2024bandits,xu2024principled,kayal2025bayesian}.

\paragraph{Foundation models as preference oracles.}
Pretrained foundation models, most notably large language models (LLMs), can act as noisy \emph{judges} over pairs by providing preference labels that encode broad domain knowledge \citep{zheng2023judging, lee2024rlaif, liu2024pairs}. This idea is central to learning from AI feedback in alignment and evaluation, where models are trained or assessed using pairwise comparisons rather than scalar scores \citep{christiano2017deep,ouyang2022training,bai2022constitutional,zheng2023judging,liu2023geval}. Combining such an LLM judge with BOHF turns preference-only optimization into a form of \emph{knowledge discovery} from the model. In this setting, the optimizer queries comparative judgments to navigate large design spaces, while the GP surrogate aggregates these feedback signals into principled uncertainty estimates and acquisition decisions.

\subsection{Related Work}

\paragraph{Bayesian Optimization with Scalar Feedback.}
BO is a powerful framework for global optimization of expensive black-box functions, widely used in machine learning, science and engineering applications \citep{wang2023recent}. Classical methods include Expected Improvement \citep{EI_1998,pmlr-v151-tran-the22a}, Upper Confidence Bound \citep{srinivas2009gaussian}, and Thompson Sampling \citep{chowdhury2017kernelized}, which have achieved regret bounds of order $\Gamma(T)\sqrt{T}$, where $\Gamma(T)$ is the maximum information gain after $T$ rounds. Further refinements have achieved either improved regret bounds (e.g., \citealt{whitehouse2024sublinear}) or near-optimal rates, for instance through batched sampling methods \citep{pmlr-v151-li22a}, \emph{Sup} variants of the UCB algorithm~\citep{valko2013finite}, and domain-shrinking methods~\citep{salgia2021domain}, which attain order $\sqrt{\Gamma(T)T}$ matching the lower bounds of \citet{scarlett2017lower}.

\paragraph{Dueling Bandits.}  The dueling bandits problem extends the classical multi-armed bandit framework to settings where feedback is provided in the form of pairwise preferences rather than absolute rewards. In this paradigm, the learner selects two options at each round and receives feedback indicating which is preferred, rather than a numerical score. This approach is particularly relevant in applications where quantifying utility is challenging or expensive, but relative comparisons are feasible or cheap, such as recommendation systems and medical treatment design \citep[for a survey of methods and applications, see][]{bengs2021preference}. Noisy pairwise comparisons between arms are modeled by a preference matrix and the objective is typically defined relative to a Condorcet winner (an arm that beats every other with probability greater than $1/2$) or, when none exists, a Copeland winner (maximizing the number of pairwise wins). Various methods have been proposed \citep[e.g. ][]{yue2012k,pmlr-v32-ailon14,zoghi2015copeland,jamieson2015sparse,ramamohan2016dueling}, including natural optimistic extensions like Relative UCB \citep{pmlr-v32-zoghi14} and Double Thompson Sampling \citep{wu2016double} which have been shown to perform well both numerically and theoretically with logarithmic regret. However, their performance and applicability is highly dependent on the size of the action space being moderately small.  

Methods have also been proposed for larger action spaces. Firstly, linear contextual dueling bandits has been studied, where it is assumed that the arm rewards are linear across the action space \citep[e.g. ][]{dudik2015contextual,saha2022efficient,bengs2022stochastic}. A form of Thompson sampling, called Feel-Good Thompson Sampling, has also been applied to linear contextual dueling bandits \citep{lifeel} and achieved near optimal regret of order $d\sqrt{T}$. For convex structures on the rewards, gradient descent methods have been proposed \citep{yue2009interactively,pmlr-v139-saha21b}. In this paper we will consider much more broad class of reward functions, which we will outline in Section \ref{sec:prelim}.

\paragraph{Bayesian Optimization with Preference Feedback.}

In addition to the three works above that study the same BOHF setting as ours
\citep{xu2024principled,pasztor2024bandits,kayal2025bayesian}, kernel-based learning
from preferential feedback has also been analyzed by \citet{xu2020zeroth} and
\citet{mehta2025sample}, with applications to sample-efficient preference
alignment in large language models. Both works, however, rely on strong
assumptions: they introduce a \emph{Borda} function—the probability that an action
is preferred over a uniformly random alternative—under which the problem
effectively reduces to standard Bayesian optimization.

A closely related setting is also studied by \citet{astudillo2023qeubo}, who
consider Bayesian optimization from preference feedback over finite action
spaces and derive asymptotic, instance-independent upper bounds on the simple
regret. In contrast, we study continuous action spaces and provide
non-asymptotic, instance-dependent regret guarantees.

There have also been a variety of heuristic approaches proposed for BOHF, which
lack both finite-time theoretical guarantees and asymptotic convergence results
\citep{mikkola2020projective,takeno2023towards,siivola2021preferential,gonzalez2017preferential}.

%% file: CAMERA_READY/AISTATS_camera_ready/2_problem_form.tex
\section{PRELIMINARIES AND PROBLEM FORMULATION}\label{sec:prelim}
In this section, we overview the BOHF setup and the prediction and uncertainty tools that underlie our algorithmic design and analysis.

\subsection{BOHF Framework}\label{BOHF}
At each round $t=1,2,\dots,T$, the agent selects a pair $(x_t,x'_t)$ from the action domain $\cX$. The feedback is a binary preference
\[
y_t=\mathds{1}\{x_t \succ x'_t\},\qquad y_t\in\{0,1\},
\]
where $x_t\succ x'_t$ denotes that $x_t$ is preferred to $x'_t$ and $\mathds{1}\{\cdot\}$ is the indicator. Following standard practice, for any pair $(x,x')\in\cX\times\cX$ the random variable $y=\mathds{1}\{x\succ x'\}$ is modeled as
\[
\Pr(y=1\mid x,x')=\mu\big(f(x)-f(x')\big),
\]
with a known, strictly increasing link $\mu:\Rr\to[0,1]$ satisfying $\mu(0)=\tfrac12$; we adopt the logistic (sigmoid) link $\mu(u)=(1+e^{-u})^{-1}$. The unknown latent utility $f:\cX\to\Rr$ quantifies the value of each action. This is the Bradley–Terry–Luce (BTL) model~\citep{bradley1952rank}, widely used for bandits and RL with preference feedback~\citep{pasztor2024bandits,xu2024principled,zhan2023provable,wu2023making, kayal2025bayesian}. When $f(x)>f(x')$ we have $\Pr(x\succ x')>\tfrac12$, and conversely if $f(x)<f(x')$.

This feedback is strictly weaker than in standard BO, where a scalar observation of $f$ is revealed at each step. The objective is to adaptively select pairs so as to approach the globally preferred action
\[
x^\star\in\argmax_{x\in\cX} f(x).
\]
A common performance measure is the cumulative regret over $T$ steps,
\begin{equation}\label{regret_definition_duel}
R(T)=\sum_{t=1}^T \frac{\Pr(x^\star\succ x_t)+\Pr(x^\star\succ x'_t)-1}{2}.
\end{equation}
This is equivalent (up to link-dependent constants) to the utility-based variant $\sum_{t=1}^T \big(f(x^\star)-\tfrac12(f(x_t)+f(x'_t))\big)$ used by~\citet{xu2024principled}; see~\citet{saha2021optimal}.



\subsection{Preliminaries and Assumptions}
As in~\citet{pasztor2024bandits,xu2024principled, kayal2025bayesian}, we assume $f$ lies in the reproducing kernel Hilbert space (RKHS) of a known positive definite kernel $k:\cX\times\cX\to\Rr$. This is a broad assumption: RKHSs of common kernels can approximate essentially all continuous functions on compact subsets of $\Rr^d$~\citep{srinivas2009gaussian}. Let $\cH_k$ be the RKHS of $k$, with inner product $\langle\cdot,\cdot\rangle_{\cH_k}$ and norm $\|\cdot\|_{\cH_k}$. The reproducing property gives $\langle f,k(\cdot,x)\rangle_{\cH_k}=f(x)$. Under mild conditions, Mercer’s theorem yields
\begin{equation}\label{eq:Mercer}
k(x,x')=\sum_{m=1}^{\infty}\gamma_m\,\varphi_m(x)\,\varphi_m(x'),
\end{equation}
with $\gamma_m>0$ and $\{\sqrt{\gamma_m}\varphi_m\}_m$ an orthonormal basis of $\cH_k$. Any $f\in\cH_k$ admits the expansion
$f=\sum_{m=1}^{\infty} w_m\sqrt{\gamma_m}\varphi_m$, with $\|f\|_{\cH_k}^2=\sum_{m} w_m^2$ (details in Appendix~\ref{appendix:background}).

Write $z=(x,x')$ and $h(z)=f(x)-f(x')$ for $(x,x')\in\cX\times\cX$. Following~\citet{pasztor2024bandits}, define the \emph{dueling} kernel
\begin{equation}\label{eq:duel-kernel}
\kd(z_1,z_2)=k(x_1,x_2)+k(x'_1,x'_2)-k(x_1,x'_2)-k(x'_1,x_2),
\end{equation}
for $z_1=(x_1,x'_1)$ and $z_2=(x_2,x'_2)$. One has $\|f\|_{\cH_k}=\|h\|_{\cH_{\kd}}$~\citep[][Prop.~4]{pasztor2024bandits}.

\begin{assumption}\label{ass:RKHS_norm}
The utility $f$ belongs to $\cH_k$ and satisfies $\|f\|_{\cH_k}\le B$ for some $B>0$. Without loss of generality, $k$ is normalized so that $k(\cdot,\cdot)\le 1$ on $\cX\times\cX$.
\end{assumption}

\subsection{Preference Prediction and Uncertainty}\label{prediction_uncertainty}
We briefly recall the standard kernel ridge regression (KRR) predictor for scalar-feedback before turning to preferences. Given scalar data $\{(x_i,o_i)\}_{i=1}^t$ with $o_i=f(x_i)+\varepsilon_i$, KRR yields
\begin{align}
\hat f_t(x)&=k_t^\top(x)\,(K_t+\lambda I)^{-1}\bm o_t,\label{eq:st_krr}\\
k_t(x,x')&=k(x,x')-k_t^\top(x)\,(K_t+\lambda I)^{-1}k_t(x'),\nonumber
\end{align}
where $k_t(x)=[k(x,x_i)]_{i=1}^t$, $K_t=[k(x_i,x_j)]_{i,j=1}^t$, $\lambda>0$, and $\bm o_t=[o_i]_{i=1}^t$. The estimator $\hat f_t$ solves
\[
\hat f_t=\argmin_{g\in\cH_k}\sum_{i=1}^t \big(g(x_i)-o_i\big)^2+\tfrac{\lambda}{2}\|g\|_{\cH_k}^2,
\]
and admits confidence bounds of the form
\[
|f(x)-\hat f_t(x)|\le \hat\beta(\delta)\,\sigma_t(x),
\]
where $\sigma_t^2(x):=k_t(x,x)$ denotes the posterior variance, and $\hat\beta(\delta)$ is a confidence-width multiplier, under standard conditions~\citep{abbasi2013online,chowdhury2017kernelized,vakili2021optimal,whitehouse2024sublinear}.

With preference feedback, there is no closed-form analogue of~\eqref{eq:st_krr}. The setting is classification-like with binary labels. For a history $\Hh_t=\{(x_i,x'_i,y_i)\}_{i=1}^t$, consider the regularized logistic loss over $h\in\cH_{\kd}$:
\begin{align*}
\cL_{\kd}(h,\Hh_t)
&= \sum_{i=1}^t \Big[-y_i\log \mu\big(h(x_i,x'_i)\big)\\
&\hspace{-3em}-(1-y_i)\log\big(1-\mu(h(x_i,x'_i))\big)\Big]+\frac{\lambda}{2}\|h\|_{\cH_{\kd}}^2.
\end{align*}
Define the predictor $h_t$ as the minimizer
\begin{equation}\label{eq:pred}
h_t=\argmin_{h\in\cH_{\kd}}\cL_{\kd}(h,\Hh_t).
\end{equation}
By the representer theorem, $h_t$ admits the finite expansion
\begin{equation}\label{eq:predtheta}
h_t(\cdot)=\sum_{i=1}^t \theta_i\,\kd\big(\cdot,(x_i,x'_i)\big),
\end{equation}
with coefficients $\bm\theta_t=[\theta_1,\dots,\theta_t]^\top\in\Rr^t$. Writing $\kd_t(z)=[\kd(z,(x_j,x'_j))]_{j=1}^t$ and $\kd_t=[\kd((x_i,x'_i),(x_j,x'_j))]_{i,j=1}^t$, the loss in parameter space is
\begin{align}\label{dueling_logistic_loss}
\cL_{\kd}(\bm\theta,\Hh_t)
&=\sum_{i=1}^t \Big[-y_i\log \mu\big(\bm\theta^\top \kd_t(x_i,x'_i)\big)\nonumber\\
&\hspace{-6em}\qquad\quad-(1-y_i)\log\big(1-\mu(\bm\theta^\top \kd_t(x_i,x'_i))\big)\Big]
+\frac{\lambda}{2}\|\bm\theta\|_2^2.
\end{align}

In analogy with~\eqref{eq:st_krr}, an uncertainty proxy is
\begin{equation}\label{std_krr}
\kd_t(z, z')\;=\;\kd(z,z')\;-\;(\kd_t)^\top(z)\,\big(\Kd_t+\lambda\,\kappa\,I\big)^{-1}\,\kd_t(z'),
\end{equation}
where
\begin{equation}\label{eq:kappa}
\kappa \;:=\; \sup_{(x,x')\in\cX^2}\,\frac{1}{\dot\mu\!\big(f(x)-f(x')\big)}
\;=\; \sup_{z\in\cX^2}\,\frac{1}{\dot\mu\!\big(h(z)\big)},
\end{equation}
with $\dot\mu$ the derivative of the link $\mu$. 
We use the notation $\sigma_t^2(z)=\kd_t(z, z)$. 
We formally define the maximum information gain with $t$ preference queries as  
\begin{equation}\label{eq:max_info_gain}
    \Gamma(t) \;=\; \max_{(x_1,x_1'), \dots, (x_t,x_t')} \;
    \tfrac{1}{2} \log \det \!\left(I + (\lambda\kappa)^{-1} K_t^\Delta\right).
\end{equation}
We can then provide the following confidence sequence for our estimate of the difference function $h_t$ from \eqref{eq:pred}.

\begin{lemma}\label{lem:confidence_bound}
    Under Assumption~\ref{ass:RKHS_norm}, with probability at least $1-\delta$,  
    for all $t \geq 1$ and all $x, x' \in \mathcal{X}$,  
    \begin{equation*}
        |\,h_t(x,x') - h(x,x')\,| \;\leq\; \beta_t(\delta)\,\sigma_t(x,x'),
    \end{equation*}
    where
    \begin{equation}\label{eq:beta_def}
        \beta_t(\delta) \;=\; 4B \;+\; 2\sqrt{\tfrac{2\kappa}{\lambda}\,\Big(\Gamma(t) + \log(1/\delta)\Big)}.
    \end{equation}
\end{lemma}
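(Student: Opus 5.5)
The plan is to follow the standard logistic kernelized-bandit confidence argument (as in \citet{pasztor2024bandits} and the generalized linear bandit analyses), working in the feature space of $\kd$. Write $\phi(z)$ for the feature map of $\kd$, so that $h(z)=\langle h,\phi(z)\rangle_{\cH_{\kd}}$ and $\|h\|_{\cH_{\kd}}=\|f\|_{\cH_k}\le B$. Define the regularized design operator
\[
V_t=\lambda\kappa I+\sum_{i\le t}\phi(z_i)\phi(z_i)^\top .
\]
Then, by the standard matrix identity (push-through / Woodbury), $\lambda\kappa\,\|\phi(z)\|^2_{V_t^{-1}}=\sigma_t^2(z)$ with $\sigma_t$ as in \eqref{std_krr}. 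Hence it suffices to show, uniformly in $t$,
\[
\|h_t-h\|_{V_t}\le \sqrt{\lambda\kappa}\,\beta_t(\delta),
\]
followed by Cauchy--Schwarz: $|h_t(z)-h(z)|\le \|h_t-h\|_{V_t}\|\phi(z)\|_{V_t^{-1}}$.

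\textbf{Step 1: first-order condition.} Set the gradient of $\cL_{\kd}$ to zero at $h_t$:
\[
\sum_i(\mu(h_t(z_i))-y_i)\phi(z_i)+\lambda h_t=0 .
\]
Writing $y_i=\mu(h(z_i))+\epsilon_i$, where $\epsilon_i$ is a conditionally $\tfrac12$-sub-Gaussian martingale difference, define
\[
g_t(g)=\sum_i\mu(g(z_i))\phi(z_i)+\lambda g .
\]
This gives
\[
g_t(h_t)-g_t(h)=\sum_i\epsilon_i\phi(z_i)-\lambda h=:S_t-\lambda h .
\]
By the mean value theorem,
\[
g_t(h_t)-g_t(h)=G_t(h_t-h),\qquad G_t=\sum_i\alpha_i\phi(z_i)\phi(z_i)^\top+\lambda I,
\]
where $\alpha_i=\int_0^1\dot\mu(\cdot)$ is taken along the segment between $h(z_i)$ and $h_t(z_i)$.

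\textbf{Step 2: linearization (main obstacle).} We need $G_t\succeq \kappa^{-1}V_t$, i.e.\ $\alpha_i\ge 1/\kappa$. However, $\kappa$ only lower bounds $\dot\mu$ at the true values $h(z)$, not along the segment to $h_t(z_i)$. My first attempt is to use the self-concordance of the logistic link, $|\ddot\mu|\le\dot\mu$, which gives $\alpha_i\ge\dot\mu(h(z_i))/(1+|h_t(z_i)-h(z_i)|)$. Alternatively, one can restrict to a projected estimator with bounded norm, as in the GLM-bandit literature, so that both $h_t$ and $h$ have norm at most order $B$ and the factor is absorbed into $\kappa$ (its sup over that ball). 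I expect the constant $4B$ in \eqref{eq:beta_def}, twice the naive $2B$, to come exactly from this step, using $\|h_t\|\le\|h\|+\|h_t-h\|$-type bounds on the argument range. I will follow \citet{pasztor2024bandits}'s handling, interpreting $\kappa$ as the corresponding sup so that $G_t\succeq\kappa^{-1}V_t$.

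\textbf{Step 3: combine.} Given $G_t\succeq\kappa^{-1}V_t$,
\[
\|h_t-h\|_{V_t}^2\le\kappa\,\|G_t(h_t-h)\|^2_{V_t^{-1}}\cdot\kappa\ \ \text{(up to the usual sandwich)}\;\Rightarrow\;\|h_t-h\|_{V_t}\le\kappa\,\|S_t-\lambda h\|_{V_t^{-1}} .
\]
Bound the two parts separately. First, $\lambda\|h\|_{V_t^{-1}}\le\sqrt{\lambda/\kappa}\,B$. Second, the self-normalized martingale inequality of \citet{abbasi2013online,chowdhury2017kernelized} (with sub-Gaussian parameter $1/2$) gives, with probability $1-\delta$ simultaneously for all $t$,
\[
\|S_t\|_{V_t^{-1}}\le\sqrt{2\bigl(\tfrac12\log\det(I+(\lambda\kappa)^{-1}\Kd_t)+\log(1/\delta)\bigr)}\le\sqrt{2(\Gamma(t)+\log(1/\delta))} .
\]
Converting back to $\sigma_t$ via $\|\phi(z)\|_{V_t^{-1}}=\sigma_t(z)/\sqrt{\lambda\kappa}$, the constants collapse to
\[
\beta_t = O(B)+2\sqrt{\tfrac{2\kappa}{\lambda}(\Gamma(t)+\log(1/\delta))} .
\]
The exact factor bookkeeping (including how the $\sqrt{\kappa}$ powers cancel) will be checked carefully, since the scaling of $V_t$ by $\lambda\kappa$ is precisely what makes the weight $\kappa/\lambda$ appear inside the square root.
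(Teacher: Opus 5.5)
The paper does not prove this itself: it imports Corollary~5 of \citet{pasztor2024bandits} and drops the outer link composition. At that level of rigor your outline is equivalent to the paper's, because you also defer the key step to that citation. Your skeleton is the standard machinery behind the cited result: the first-order condition, mean-value linearization, the self-normalized bound with $V_t=\lambda\kappa I+\sum_i\phi(z_i)\phi(z_i)^\top$, and Cauchy--Schwarz via $\lambda\kappa\|\phi(z)\|^2_{V_t^{-1}}=\sigma_t^2(z)$. Your bookkeeping is also correct. Granting $G_t\succeq\kappa^{-1}V_t$, you get $\|h_t-h\|_{V_t}\le\kappa\|S_t-\lambda h\|_{V_t^{-1}}$, hence $|h_t(z)-h(z)|\le\bigl(B+\sqrt{\kappa/\lambda}\,\|S_t\|_{V_t^{-1}}\bigr)\sigma_t(z)$, which lies inside $\beta_t(\delta)$. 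The $B$-term comes out as $B$, so the $4B$ is slack in the statement; Step~2 does not need to produce it.

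As a standalone proof, however, Step~2 is a genuine gap, and it is the only non-routine step. Take $\kappa$ as in \eqref{eq:kappa}, a supremum of $1/\dot\mu$ at the \emph{true} values $h(z)$ only, and $h_t$ the \emph{unconstrained} minimizer in \eqref{eq:pred}. Then nothing yields $\alpha_i\ge1/\kappa$, and none of your three routes repairs this.
\begin{itemize}
\item The self-concordance bound $\alpha_i\ge\dot\mu(h(z_i))/(1+|h_t(z_i)-h(z_i)|)$ is circular: the denominator is the very error you are trying to control. The only a priori control on the unprojected estimator is $\|h_t\|_{\cH_{\kd}}\le\sqrt{2t\log 2/\lambda}$, from $\cL_{\kd}(h_t)\le\cL_{\kd}(0)$, which gives $|h_t(z_i)|\le 2\|h_t\|_{\cH_{\kd}}$. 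This would inflate $\beta_t$ by a factor of order $B+\sqrt{t/\lambda}$ and destroy the rate.
\item The projection route proves a bound for a different, projected estimator, not for $h_t$ as defined in the statement.
\item ``Interpreting $\kappa$ as the sup over a ball'' changes the $\kappa$ in the statement. It also still needs the whole segment between $h(z_i)$ and $h_t(z_i)$ to lie in that ball, which is exactly what is unknown for $h_t$.
\end{itemize}
To make the argument self-contained you must commit to one mechanism. One option is a projected or constrained estimator, with $\kappa$ taken over $\{g:\|g\|_{\cH_{\kd}}\le B\}$; convexity of the ball then covers the segment, and the triangle inequality in the projection costs an overall factor of $2$. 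The other is an explicit localization showing that, on the $1-\delta$ event and uniformly in $t$, $h_t$ stays where $1/\dot\mu\le\kappa$. Either choice may alter the estimator or the definition of $\kappa$ in the lemma as stated.
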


\begin{proof}
This result follows from the proof of \citep[Corollary 5]{pasztor2024bandits}, with the composition by the link function omitted. As a consequence, the Lipschitz constant does not appear in the scaling of $\beta_t(\delta)$.
\end{proof}

%% file: CAMERA_READY/AISTATS_camera_ready/3_algorithm.tex
\section{THOMPSON SAMPLING: FROM SCALAR TO PREFERENTIAL FEEDBACK}
\label{sec:ts_pref}

We describe a TS policy for BOHF, which we refer to as Preferential Feedback TS (PF-TS). The policy uses the pairwise predictor $h_t$ together with the associated uncertainty (covariance) $\kd_t$ from Section~\ref{prediction_uncertainty} (equations~\eqref{eq:pred} and~\eqref{std_krr}) to define a Gaussian {surrogate posterior} on pairs, from which we draw sample paths.

\paragraph{Surrogate posterior on pairs.}
At round $t$, we work with the GP on $\cX\times\cX$ having mean function $h_t$ and covariance $\kd_t$, and draw two \emph{independent} samples
\[
\tilde h_t^{(1)}(\cdot,\cdot),\ \tilde h_t^{(2)}(\cdot,\cdot)\ \sim\ \gp\!\big(h_t,v_t^2\kd_t\big).
\]
To mitigate under-exploration, we include an exploration scale $v_t>0$, as in standard TS for scalar-feedback Bayesian optimization, which multiplies the posterior standard deviation in the sampling step. Its schedule is discussed in Section~\ref{sec:analysis}, and in particular we will set $v_t = \beta_t(\delta)$ in our analysis.

\begin{algorithm}[h]
\caption{Preferential Feedback Thompson Sampling (PF-TS) }
\label{algo:thompson}
\begin{algorithmic}[1]
\REQUIRE Horizon $T$
\STATE Initialize $t\gets 1$.
\WHILE{$t \le T$}
    \STATE Compute $h_t$ and $\kd_t$ according to \eqref{eq:pred} and \eqref{std_krr}.
    \STATE Sample $\tilde h_t^{(1)}(\cdot,\cdot) \sim \gp\!\big(h_t,v_t^2\kd_t\big)$.
    \STATE Sample $\tilde h_t^{(2)}(\cdot,\cdot) \sim \gp\!\big(h_t,v_t^2\kd_t\big)$.
    \STATE Pick an anchor $x_0\in\cX$.
    \STATE Select
    $
        x_t  \in \argmax_{x\in\cX}\ \tilde h_t^{(1)}(x,x_0)$, \\
    \STATE Select $
        x'_t \in \argmax_{x\in\cX}\ \tilde h_t^{(2)}(x,x_0).
    $
    \STATE Query $(x_t,x'_t)$, observe preference $y_t\in\{0,1\}$, and augment the data.
    \STATE Update $t\gets t+1$.
\ENDWHILE
\end{algorithmic}
\end{algorithm}

\subsection{Anchor independence and reduction to single-argument maximization.}\label{subsection:anchor_indep}
Fix any anchor $x_0\in\cX$ as a device to score candidates via $\tilde h_t^{(i)}(\cdot,x_0)$. Because the comparison function is a difference,
$
h(x,x')=f(x)-f(x'),
$
maximizing over $x$ yields
\[
x^\star \;\in\; \argmax_{x\in\cX}\, h(x,x_0)
\;=\; \argmax_{x\in\cX}\, f(x),
\]
so the maximizer is independent of the anchor $x_0$. In Section~\ref{subsection:anchor_independence} we show that any posterior Thompson sample $\tilde h_t$ is also separable, i.e., $\tilde h_t(x,x')=\tilde f_t(x)-\tilde f_t(x')$, and the same argument applies. Hence, the algorithm’s performance is \emph{anchor-independent}; any choice of $x_0$ yields the same selected actions.

\subsection{Action selection}\label{subsection:action_selection}
The pair at round $t$ is chosen by a \emph{double} Thompson Sampling procedure from two independently sampled difference functions. Draw two independent surrogates $\tilde h_t^{(1)},\tilde h_t^{(2)} \sim \gp(h_t,v_t^2\kd_t)$ and fix an anchor $x_0\in\cX$. Select
\[
x_t \in \argmax_{x\in\mathcal{X}}\ \tilde h_t^{(1)}(x,x_0),
\qquad
x'_t \in \argmax_{x\in\mathcal{X}}\ \tilde h_t^{(2)}(x,x_0).
\]
Finally, play the pair $(x_t,x'_t)$, observe $y_t$, and recompute $h_{t+1}$ and $\kd_{t+1}$ via~\eqref{eq:pred} and~\eqref{std_krr}.

%% file: CAMERA_READY/AISTATS_camera_ready/4_analysis.tex
\section{ANALYSIS}\label{sec:analysis}

In this section, we analyze the performance of the PF-TS algorithm.

\subsection{Anchor-independence}\label{subsection:anchor_independence}

A key property that makes TS naturally extendable to the BOHF setting is the {anchor-independence} of the acquisition (TS samples), in contrast to other acquisitions such as UCB. We formalize this below.

\begin{proposition} \label{prop_theta_concentration}
Let $\kd$ be a dueling kernel and let $\Hh_t=\{(x_i,x'_i,y_i)\}_{i=1}^t$ be the observation set as in Section~\ref{BOHF}. Let $h_t$ and $\kd_t$ be the predictor and uncertainty estimates defined via~\eqref{eq:pred} and~\eqref{std_krr}. Let $\tilde h_t \sim \gp(h_t,v_t^2\kd_t)$ be a Thompson sample on $\cX^2$, where $v_t$ is a fixed sequence. Then there exists a function $\tilde f_t$ such that, for all $(x,x')\in\cX^2$, almost surely,
\begin{equation}
\tilde h_t(x,x') \;=\; \tilde f_t(x)\;-\;\tilde f_t(x').
\end{equation}
\end{proposition}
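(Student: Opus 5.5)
The plan is to show separately that the mean $h_t$ and the centred random field $\tilde h_t - h_t$ are differences of single-argument functions.

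\emph{Mean.} By the representer expansion \eqref{eq:predtheta},
\[
h_t(x,x')=\sum_{i=1}^t\theta_i\big[k(x,x_i)-k(x,x'_i)-k(x',x_i)+k(x',x'_i)\big]=m_t(x)-m_t(x'),
\]
where $m_t(x):=\sum_i\theta_i\big(k(x,x_i)-k(x,x'_i)\big)$. This step is immediate from \eqref{eq:duel-kernel}.

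\emph{Covariance.} I will show that $\kd_t$ is itself a dueling kernel built from a single-argument kernel. Let $a_t(x):=[k(x,x_i)-k(x,x'_i)]_{i=1}^t$, so that $\kd_t(z)=a_t(x)-a_t(x')$ for $z=(x,x')$. Define
\[
c_t(u,w):=k(u,w)-a_t(u)^\top(\Kd_t+\lambda\kappa I)^{-1}a_t(w).
\]
Expanding \eqref{std_krr} bilinearly then gives, for $z_1=(x_1,x'_1)$ and $z_2=(x_2,x'_2)$,
\[
\kd_t(z_1,z_2)=c_t(x_1,x_2)+c_t(x'_1,x'_2)-c_t(x_1,x'_2)-c_t(x'_1,x_2).
\]
Moreover $c_t$ is positive semidefinite. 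It is a Schur complement of the Gram matrix of the jointly PSD kernel on evaluations $\{k(\cdot,u)\}$ together with the functionals $k(\cdot,x_i)-k(\cdot,x'_i)$, regularized by $\lambda\kappa I$. Equivalently, $c_t$ is the posterior covariance of a GP $g\sim\gp(0,k)$ after observing $g(x_i)-g(x'_i)+\epsilon_i$ with $\epsilon_i\sim N(0,\lambda\kappa)$ noise.

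\emph{Construction and matching of laws.} Let $g_t\sim\gp(0,v_t^2 c_t)$ on $\cX$ and set $\tilde f_t:=m_t+g_t$. Then $(x,x')\mapsto\tilde f_t(x)-\tilde f_t(x')$ is a Gaussian process on $\cX^2$ with mean $h_t$ and covariance $v_t^2\kd_t$ by the identity above. So it has the same law as $\tilde h_t$, and we may realize $\tilde h_t$ this way.

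\emph{Main obstacle: ``almost surely'' for a given $\tilde h_t$.} Realizing $\tilde h_t$ via the construction settles the claim only in law; to get the statement almost surely for a prescribed sample, I would construct $\tilde f_t$ directly from $\tilde h_t$. Fix $x_0$ and define $\tilde f_t(x):=\tilde h_t(x,x_0)$. The residual
\[
D(x,x'):=\tilde h_t(x,x')-\tilde h_t(x,x_0)+\tilde h_t(x',x_0)
\]
is a centred Gaussian, since its mean vanishes by the mean step. Its variance, computed from the dueling form of $v_t^2\kd_t$, is the variance of $g(x)-g(x')-(g(x)-g(x_0))+(g(x')-g(x_0))=0$, hence zero. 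Therefore $D(x,x')=0$ almost surely for each pair. Passing from ``for each pair'' to ``simultaneously for all pairs'' uses continuity of sample paths: with a continuous kernel, $\cX$ compact, and a version with continuous paths, it suffices to check a countable dense set.
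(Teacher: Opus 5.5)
Your proposal is correct and follows essentially the same route as the paper: set $\tilde f_t(x)=\tilde h_t(x,x_0)$, get zero mean for the residual from the representer expansion, and get zero variance because the posterior covariance $\kd_t$ keeps the dueling form. You make that form explicit through $c_t$ (with a clean positive-semidefiniteness argument), whereas the paper uses $k_t(x,u)=\kd_t\big((x,x_0),(u,x_0)\big)$. Your extra step of upgrading to simultaneous almost-sure equality via sample-path continuity goes beyond the paper, which proves only the pointwise statement; that step needs an added assumption that a continuous version exists, since a continuous kernel alone does not guarantee one.
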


Here, $\tilde f_t$ can be interpreted as a posterior sample of the latent utility $f$. A detailed proof is provided in Appendix~\ref{appendix:anchor}.

This contrasts with the UCB index, which is not separable in this way. As a result, GP-UCB is not anchor-independent: the choices of $x_t$ and $x'_t$ generally depend on the anchor, and the reduction used for TS does not apply. Figure~\ref{fig:anchor} illustrates this by plotting the argmax of the first argument ($x_t$) while fixing the second argument to an anchor point $x_0$. We compare UCB acquisitions (constructed using Lemma~\ref{lem:confidence_bound}) with Thompson samples for different anchors $x_0$. While $x_t$ is invariant to the anchor under TS, it clearly depends on the anchor under UCB. 

The anchor-independence property is also practically relevant. When selecting an action by maximizing the acquisition function, as described in Section~\ref{subsection:action_selection}, it suffices to optimize over a single parameter at a time. This stands in contrast to prior BOHF methods, which require searching over pairs of parameters and therefore incur a higher computational burden. In our current implementations, we perform this optimization by searching over a discrete set of candidate parameters; further details are provided in Appendices~\ref{app:theorem_proof} and~\ref{appendix:experiment}.

Bayesian optimization settings often assume that data acquisition, such as querying a human or model-based preference oracle, is the dominant cost. As a result, computational savings in optimizing the acquisition function are typically secondary to sample efficiency, which is the primary focus of this paper. Nevertheless, the uncertainty estimates used by our method can themselves be approximated. For instance, posterior variances may be approximated via Nyström methods applied to the kernel matrix inverse. Such approximations preserve the regret guarantees, provided they only affect the exploration scale up to constant factors, as established in related settings by \citet{vakili2022improved}.

\begin{figure}[t]
    \centering
    \includegraphics[width=0.8\linewidth]{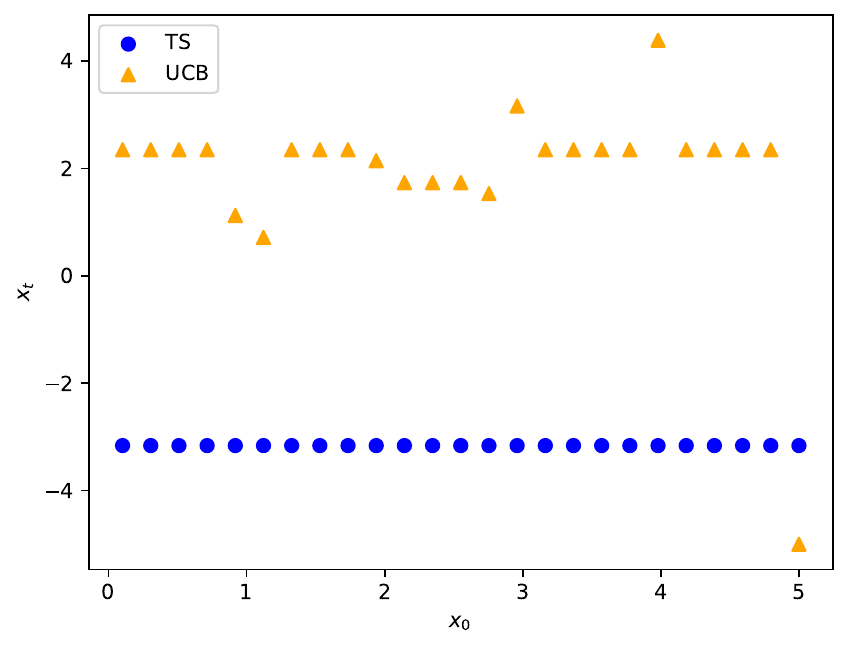}
    \caption{Selected $x_t$ under TS (blue) and UCB (orange) as the anchor $x_0$ varies. Utility is the Ackley function; the history contains 200 uniformly sampled pairs. TS is anchor invariant: the chosen $x_t$ does not change with $x_0$, whereas the point chosen by UCB depends on the anchor.}
    \label{fig:anchor}
\end{figure}

\subsection{The performance analysis of PF-TS}

In this section, we provide a finite-time analysis of the PF-TS algorithm. We recover the same guarantees as vanilla TS with scalar-valued feedback, despite operating under a more restrictive feedback model. In particular: (i) preference feedback supplies only minimal, 1-bit information per query; (ii) unlike vanilla TS, our selection uses an anchor point, whose effect we handle via the separability (anchor-independence) property; and (iii) we select the pair by drawing two independent Thompson samples and maximizing against the same anchor, while the observed label depends on the {pair}, necessitating arguments different from standard TS in conventional BO. Our proof develops techniques that relate the symmetrically selected pair to the posterior uncertainty of their utility difference and leverages bounds based on information gain.

\begin{theorem} \label{theorem_regret}
Consider the BOHF problem introduced in Section~\ref{BOHF}. Under Assumption~\ref{ass:RKHS_norm}, the performance of the PF-TS given in Algorithm~\ref{algo:thompson}, with $v_t=\beta_t(\delta)$, satisfies
\begin{equation}
    R(T) \;=\; \Tilde{\mathcal{O}}\!\left(\beta_T(\delta)\,\sqrt{T\,\Gamma(T)}\right),
\end{equation}
with probability greater than $1-2\delta$. Here $\Gamma(T)$ denotes the maximum information gain under the dueling kernel $\kd$ and we have omitted logarithmic terms.
\end{theorem}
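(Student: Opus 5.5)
We follow the Chowdhury--Gopalan analysis of GP-TS, adapted to pairs. First, since $\mu$ is $L$-Lipschitz with $L\le 1/4$, each term of \eqref{regret_definition_duel} satisfies
\[
\tfrac12\big[\mu(h(x^\star,x_t))-\tfrac12+\mu(h(x^\star,x'_t))-\tfrac12\big]\le \tfrac{L}{2}\big(h(x^\star,x_t)+h(x^\star,x'_t)\big).
\]
Hence it suffices to bound $\sum_t r_t$ with $r_t:=h(x^\star,x_t)+h(x^\star,x'_t)$. We work on the event $E$ of Lemma~\ref{lem:confidence_bound}, which has probability at least $1-\delta$. We also use an event $\tilde E$ on which, for every $t$ and on a discretisation $D_t$ of $\cX$ that becomes fine enough as $t$ grows (as in Appendix~\ref{app:theorem_proof}), the samples satisfy $|\tilde h_t^{(i)}(z)-h_t(z)|\le v_t c_t\sigma_t(z)$ with $c_t=\cO(\sqrt{\log(t^2|D_t|^2/\delta)})$. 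This gives the second $\delta$ in the failure probability $2\delta$. By Proposition~\ref{prop_theta_concentration} we may write $\tilde h_t^{(i)}(x,x')=\tilde f_t^{(i)}(x)-\tilde f_t^{(i)}(x')$. Then $x_t\in\argmax\tilde f^{(1)}_t$ and $x'_t\in\argmax\tilde f^{(2)}_t$, whatever anchor is used. Discretisation errors contribute $\cO(1)$ and will be ignored below.

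\textbf{Step 1 (saturated/unsaturated split for each sample).} At round $t$, call $x$ \emph{saturated} if $h(x^\star,x)>\beta_t(1+c_t)\sigma_t(x^\star,x)$. Anti-concentration of the Gaussian sample (with $v_t=\beta_t$) gives
\[
\Pr\big(\tilde h^{(1)}_t(x^\star,x_0')>\ldots\big)\ \text{i.e.}\ \Pr\big(\tilde h_t^{(1)}(x^\star,\cdot)\text{ exceeds }h(x^\star,\cdot)\big)\ \ge p:=\tfrac{1}{4e\sqrt\pi}.
\]
Concretely, we use separability to take the anchor equal to the \emph{other selected point}. Conditionally on $\cF_{t-1}$ and on $x'_t$, the sample $\tilde f^{(1)}_t$ is independent of the second draw. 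The comparison $\tilde h^{(1)}_t(x^\star,x'_t)$ versus $\tilde h^{(1)}_t(\bar x_t,x'_t)$ then mirrors the standard argument, where $\bar x_t$ is the unsaturated point minimising $\sigma_t(\bar x_t,x'_t)$. This yields the analogue of Chowdhury--Gopalan's Lemma:
\[
\EE[r^{(1)}_t\mid\cF_{t-1}]\le \frac{C\beta_t(1+c_t)}{p}\,\EE[\sigma_t(x_t,x'_t)\mid\cF_{t-1}]+\frac{2B'}{t^2},
\]
where $r^{(1)}_t=h(x^\star,x_t)$ and $B'$ bounds $|h|$. The key identity is the decomposition
\[
h(x^\star,x_t)=h(x^\star,\bar x_t)+h(\bar x_t,x_t),
\]
in which both terms are controlled by widths on pairs of the form $(\cdot,x'_t)$. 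This works because $\sigma_t$ is a seminorm-type quantity on differences: $\sigma_t(a,b)\le\sigma_t(a,c)+\sigma_t(c,b)$, since $\sigma_t(x,x')$ is the $L^2$ norm of the Gaussian increment $\tilde f(x)-\tilde f(x')$. The term $r^{(2)}_t$ is treated symmetrically with the roles of the two samples swapped.

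\textbf{Step 2 (main obstacle: relating to the queried pair's width).} The standard proof bounds regret by $\sigma_t$ of the queried point. Here only $\sigma_t(x_t,x'_t)$ enters the information gain. The difficulty is therefore to express $\sigma_t(\bar x_t,x'_t)$ and $\sigma_t(x^\star,x'_t)$ in terms of $\sigma_t(x_t,x'_t)$. My plan is to use the fact that $x_t$ and $\bar x_t$ are both drawn from the same conditional law given $x'_t$. Because $\bar x_t$ minimises width among unsaturated points,
\[
\EE[\sigma_t(x_t,x'_t)]\ge \sigma_t(\bar x_t,x'_t)\,\Pr(x_t\text{ unsaturated})\ge \sigma_t(\bar x_t,x'_t)\,(p-1/t^2).
\]
The unsaturated probability is lower bounded via the optimism event, exactly as in standard TS but relative to the anchor $x'_t$. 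This is the step I expect to be delicate, since it needs saturation to be defined relative to the random anchor $x'_t$ while still being $\cF_{t-1}\cup\sigma(x'_t)$-measurable. Independence of the two samples is what makes this work.

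\textbf{Step 3 (summation).} Summing the two bounds gives $\EE[r_t\mid\cF_{t-1}]\lesssim\beta_t c_t\,\EE[\sigma_t(x_t,x'_t)\mid\cF_{t-1}]+\cO(t^{-2})$. Azuma--Hoeffding applied to the supermartingale differences $r_t-\EE[r_t\mid\cF_{t-1}]$, whose increments are bounded by $\cO(\beta_tc_t)$ on $E$, gives, with probability $1-\delta$,
\[
\sum_t r_t\lesssim\beta_Tc_T\Big(\sum_t\sigma_t(x_t,x'_t)+\sqrt{T\log(1/\delta)}\Big).
\]
Finally, Cauchy--Schwarz together with the standard elliptical-potential bound $\sum_t\sigma_t^2(z_t)\le \frac{2\lambda\kappa}{\log(1+1/(\lambda\kappa))}\Gamma(T)$ under the scaled kernel gives $\sum_t\sigma_t(z_t)=\cO(\sqrt{T\Gamma(T)})$. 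The factor $c_T$ is logarithmic, so $R(T)=\tcO(\beta_T(\delta)\sqrt{T\Gamma(T)})$.
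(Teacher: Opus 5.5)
Your architecture (Lipschitz reduction, posterior concentration plus anti-concentration, a least-variance unsaturated point with the other played action as anchor, supermartingale plus Azuma--Hoeffding, Cauchy--Schwarz with the information-gain bound) is the paper's. However, the saturation definition in Step~1 leaves a term you cannot control.

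You call $x$ saturated when $h(x^\star,x)>d_t\sigma_t(x^\star,x)$, with $d_t=\beta_t(1+c_t)$. Unsaturation of $\bar x_t$ then only gives $h(x^\star,\bar x_t)\le d_t\sigma_t(x^\star,\bar x_t)$. Your triangle inequality turns this into $d_t\big(\sigma_t(x^\star,x'_t)+\sigma_t(\bar x_t,x'_t)\big)$. The term $\sigma_t(x^\star,x'_t)$ is the exact analogue of $\sigma_{t-1}(x^\star)$ in scalar GP-TS, i.e.\ precisely what the saturated-set device exists to eliminate. Step~2 gives no way to bound it by $\sigma_t(x_t,x'_t)$: since $x^\star$ is always unsaturated, minimality of $\bar x_t$ yields $\sigma_t(\bar x_t,x'_t)\le\sigma_t(x^\star,x'_t)$, which is the wrong direction.

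A symptom that your definition measures the wrong quantity is that anti-concentration is not even needed with it. By separability, for every saturated $x$ on the good events, $\tilde h^{(1)}_t(x,x'_t)-\tilde h^{(1)}_t(x^\star,x'_t)=\tilde h^{(1)}_t(x,x^\star)\le -h(x^\star,x)+d_t\sigma_t(x^\star,x)<0$. So $x_t$ is unsaturated essentially deterministically. That is the anchor-$x^\star$ analysis in disguise, and it can only ever produce widths $\sigma_t(x^\star,\cdot)$, never the queried width.

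The missing step is the one you gesture at in Step~2 but do not carry out. Saturation must be defined relative to the anchor, replacing your Step~1 definition and the triangle inequality altogether. The paper sets $S_t(a)=\{x:\ h(x^\star,x)>d_t\sigma_t(x,a)\}$, with $a=x'_t$ for the first draw and $a=x_t$ for the second. Then:
\begin{itemize}
\item $x^\star\notin S_t(a)$ for every $a$.
\item Anti-concentration of $\tilde h^{(1)}_t(x^\star,a)$ above $h(x^\star,a)$, combined with concentration at the pairs $(x,a)$, gives $x_t\notin S_t(a)$ with probability at least $p-\cO(t^{-2})$. You condition on $x'_t=a$ and use independence of the two draws, which is how your measurability concern is resolved.
\item Unsaturation of $\bar x_t(a)$ gives $h(x^\star,\bar x_t)\le d_t\sigma_t(\bar x_t,x'_t)$ directly.
\end{itemize}
Together with $h(\bar x_t,x_t)\le d_t\big(\sigma_t(\bar x_t,x'_t)+\sigma_t(x_t,x'_t)\big)$ from the maximization at anchor $x'_t$, every term has the form $\sigma_t(\cdot,x'_t)$. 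No $x^\star$-width ever appears, and your Step~2 inequality closes the argument.

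A smaller point concerns the sample-concentration event. It should be per round, with conditional failure probability $\cO(t^{-2})$ absorbed into the conditional expectation, not a global $1-\delta$ event. You cannot condition on a global event over future draws inside the supermartingale step. In the paper the second $\delta$ comes from Azuma--Hoeffding, whereas your accounting gives $1-3\delta$.
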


\paragraph{Proof sketch.}
On a high-probability event (uniform in $t$), both the surrogate mean $h_{t-1}$ and the two TS draws $\tilde h_{t}^{(1)},\tilde h_{t}^{(2)}$ lie within $\beta_t(\delta)$ standard deviations of the true $h$. Using separability $h(x,x')=f(x)-f(x')$ and any anchor $x_0$, write $h(x^\star,x_t)=h(x^\star,x_0)-h(x_t,x_0)$, add and subtract the TS draw used to pick $x_t$, and use that it maximizes $x\mapsto \tilde h_t^{(1)}(x,x_0)$. Choosing the anchor by \emph{anchor independence} ($x_0=x'_t$ for the first draw and $x_0=x_t$ for the second) yields the per-round utility regret bounded by (up to constants)
\begin{align*}
\beta_t(\delta)\sqrt{\log\!\left(\tfrac{t}{\delta}\right)}\Big[\sigma_{t-1}(x_t,x'_t)\;&+\;\sigma_{t-1}(x^\star,x_t)\;\\
&~~~~~~~~~~~~ +\;\sigma_{t-1}(x^\star,x'_t)\Big],    
\end{align*}
where the Lipschitzness of the link converts utility gaps to probability gaps in~\eqref{regret_definition_duel}. 

Intuitively, the terms involving $x^\star$ are small: as the algorithm explores, points near the maximizer are repeatedly compared, so the posterior uncertainty for pairs comparing $x^\star$ shrinks quickly. We prove this by extending standard TS tools: an anti-concentration argument ensures that, with constant probability each round, the TS draw picks a point whose variance (relative to the other action as anchor) is not overly small; this lets us {link} the $x^\star$-terms to the variance at the played pair and bound
$
\sigma_{t-1}(x^\star,x_t)+\sigma_{t-1}(x^\star,x'_t)
$
in expectation by
$
\sigma_{t-1}(x_t,x'_t)
$
(up to constants). Finally, the sum of sequential standard deviations at the queried pairs is controlled by an information-gain bound (analogous to the elliptical potential lemma in linear bandits), yielding
\[
\sum_{t=1}^T \sigma_{t-1}(x_t,x'_t)\;=\;\tilde{\mathcal{O}}\!\big(\sqrt{T\,\Gamma(T)}\big),
\]
and thus the overall regret rate.

\begin{remark}
The quantity $\Gamma(T)$ is a kernel-dependent, algorithm-independent complexity term that appears throughout BO and BOHF analyses (see, e.g., \citealp{srinivas2009gaussian,pasztor2024bandits,xu2024principled}). Its growth with $T$ is well understood for common kernels: for linear kernels, $\Gamma(T)=\mathcal{O}(d\log T)$; for kernels with exponentially decaying Mercer eigenvalues (e.g., Squared Exponential), $\Gamma(T)=\mathrm{polylog}(T)$; and for Matérn kernels with smoothness $\nu>1/2$ in dimension $d$, $\Gamma(T)=\tilde{\mathcal{O}}\!\big(T^{\,d/(2\nu+d)}\big)$ \citep[see, e.g.,][]{vakili2021optimal}. Furthermore, \citet[Prop.~4]{pasztor2024bandits} show that the dueling kernel has eigenvalues exactly twice those of the base kernel, so the \emph{scaling} of $\Gamma(T)$ with $T$ is identical for the dueling and base kernels. Consequently, our bound attains the same regret order as GP-TS in conventional BO with scalar feedback.
\end{remark}

%% file: CAMERA_READY/AISTATS_camera_ready/5_experiments.tex
\section{EXPERIMENTS}\label{sec:experiments}

In this section we illustrate the numerical performance of PF-TS. In the interest of space, we leave some additional experiments to be presented in Appendix \ref{appendix:additional_experiments}.

\begin{figure*}[ht]
\centering
\begin{subfigure}{0.49\linewidth}
\centering
    \includegraphics[width=0.7\linewidth]{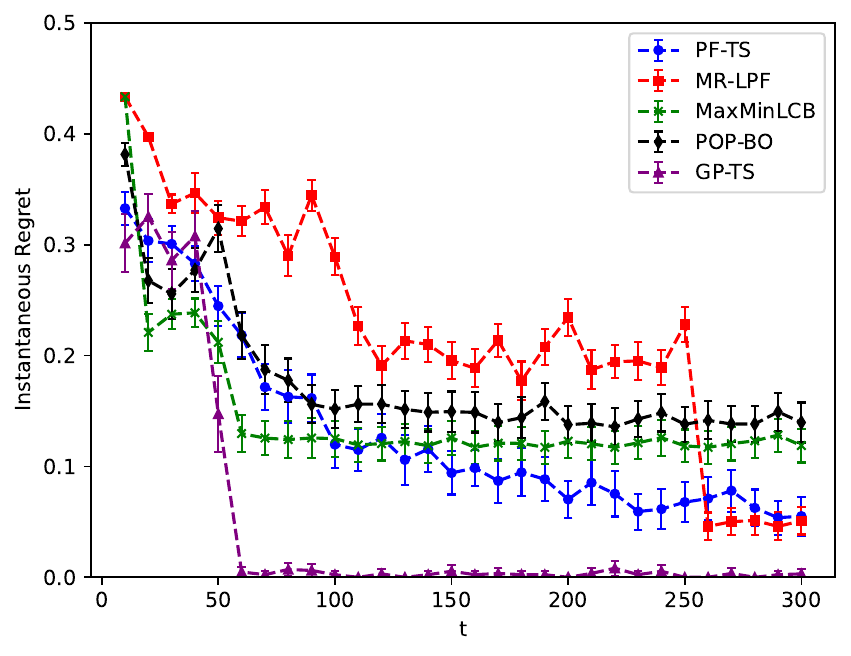}
        \caption{\textbf{(Ackley)} Instantaneous Regret}
    \label{fig:ackley_1}
\end{subfigure}
\begin{subfigure}{0.49\linewidth}
\centering
    \includegraphics[width=0.7\linewidth]{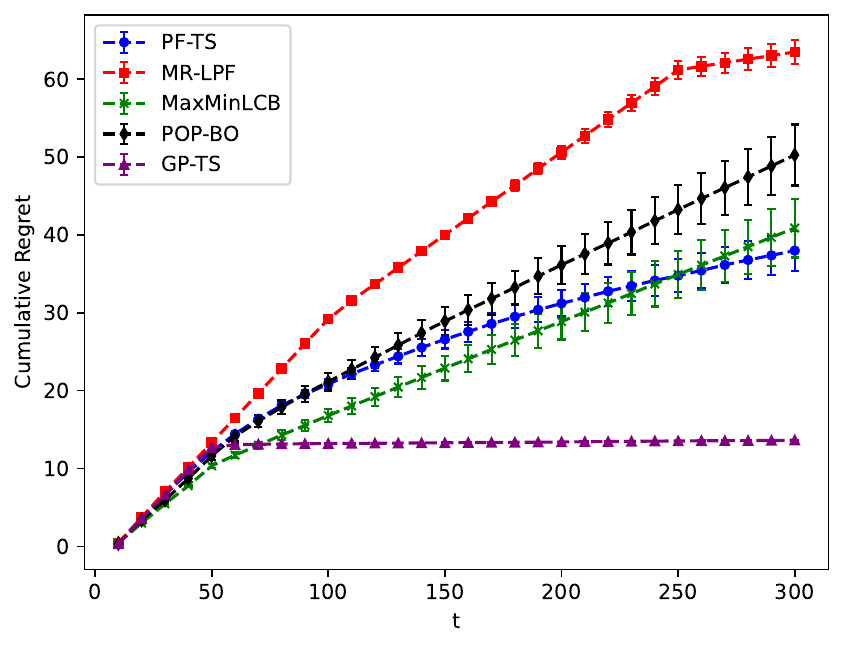}
        \caption{\textbf{(Ackley)} Cumulative Regret}
    \label{fig:ackley_2}
\end{subfigure}
\begin{subfigure}{0.49\linewidth}
\centering
    \includegraphics[width=0.7\linewidth]{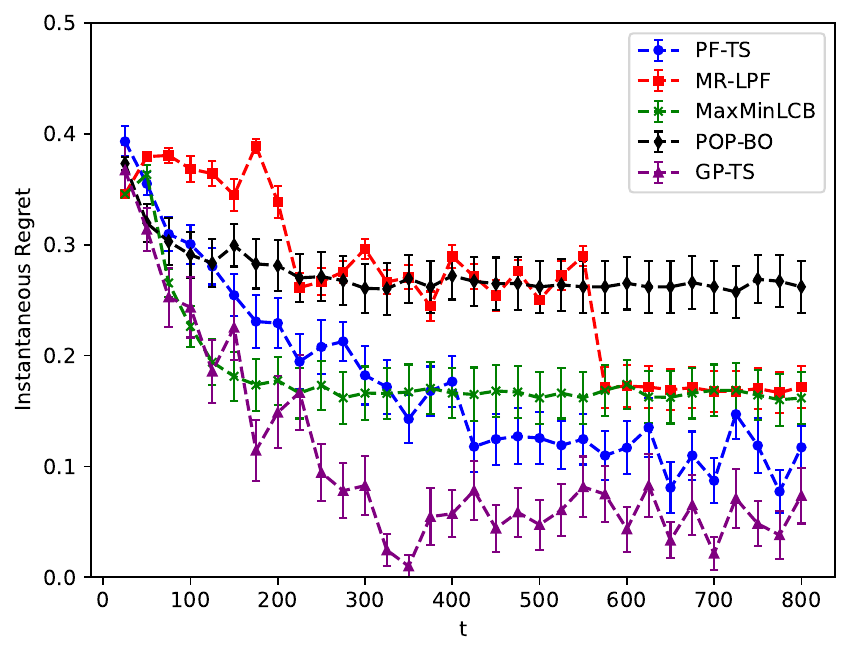}
        \caption{\textbf{(Catalyst)} Instantaneous Regret}
    \label{fig:cat_1}
\end{subfigure}
\begin{subfigure}{0.49\linewidth}
\centering
    \includegraphics[width=0.7\linewidth]{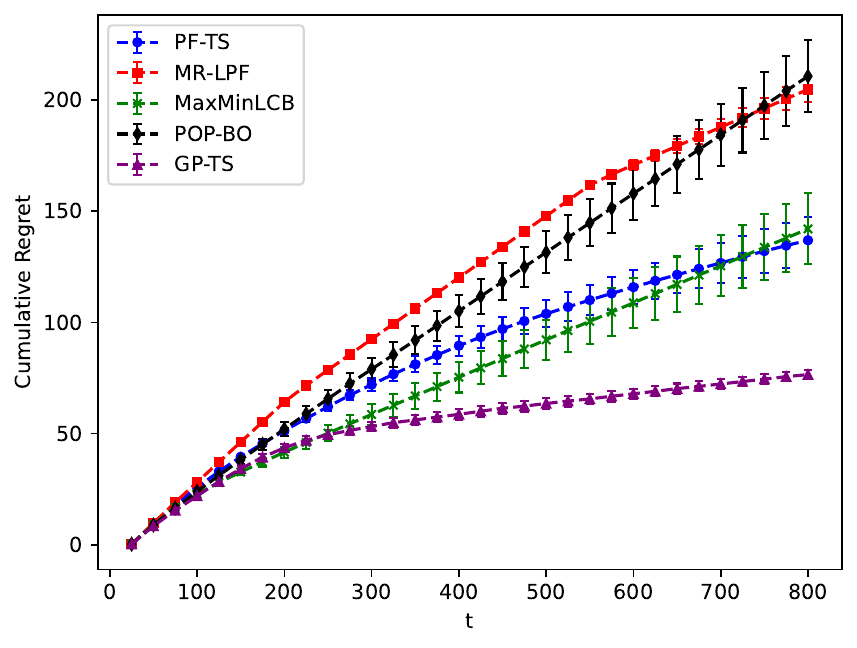}
        \caption{\textbf{(Catalyst)} Cumulative Regret}
    \label{fig:cat_2}
\end{subfigure}
\caption{Regret of PF-TS, MR-LPF, MaxMinLCB, POP-BO, and GP-TS. Mean over 30 runs; bands show $\pm$1 standard error. (a,b) Ackley objective ($T=300$). (c,d) Catalyst hydrogen-yield objective ($T=800$). Left column: instantaneous (simple) regret. Right column: cumulative regret~\eqref{regret_definition_duel}.}

\label{fig:cat_combined}
\vskip -0.2in
\end{figure*}

\subsection{Comparison with Existing Methods}\label{subsection:ackley_experiments}
We start by considering a synthetic utility function $f$ given by the one-dimensional Ackley function, a nonconvex benchmark with many local optima, widely used to test Bayesian optimization algorithms \citep{jamil2013literature} and adopted in recent BOHF studies \citep{pasztor2024bandits,kayal2025bayesian}. In Figures~\ref{fig:ackley_1} and~\ref{fig:ackley_2}, we plot the regret for PF-TS (ours), MR-LPF \citep{kayal2025bayesian}, MaxMinLCB \citep{pasztor2024bandits}, and POP-BO \citep{xu2024principled}. We use the same inference procedure as in Section~\ref{prediction_uncertainty} and a Mat\'ern-$5/2$ kernel for all methods. Figure~\ref{fig:ackley_1} reports the mean instantaneous regret, and Figure~\ref{fig:ackley_2} the mean cumulative regret (with standard errors), over 30 runs up to $T=300$. For comparison, we also include vanilla 
GP-TS~\citep{chowdhury2017kernelized}, which receives direct scalar feedback on the same $f$ (with standard Gaussian observation noise). To place them on the same regret scale as the BOHF methods, we evaluate their regret by repeating the same action at each time $t$, i.e., $x'_t = x_t$.

From Figure~\ref{fig:ackley_2}, the cumulative regret of our method after 300 iterations is significantly smaller than that of MR-LPF and POP-BO, and competitive with MaxMinLCB. We also see in Figure~\ref{fig:ackley_1} that the instantaneous (simple) regret achieved by our method at $T=300$ is lower than that of MaxMinLCB. By design, MR-LPF attains competitive instantaneous regret at this horizon, consistent with its asymptotically optimal regret order \citep{kayal2025bayesian}. Note, the cumulative regret curve appears smoother than the instantaneous regret curve because it aggregates the instantaneous values over time, averaging out short-term fluctuations and noise.

\subsection{Catalyst Design with Preference Feedback (OCx24)}\label{subsection:catalyst_experiments}

We evaluate a practical case study: optimizing catalyst composition to maximize hydrogen yield. Let $f:\cX\to\mathbb{R}$ denote the achieved yield for a composition $x\in\cX$. This setting naturally supports preference-based feedback: precise yield measurements can be slow or costly, whereas pairwise judgments between candidates are often much easier to obtain. In many laboratory workflows, early-time indicators (e.g., initial rates or partial conversion) are sufficiently predictive to rank two catalysts, making preference labels a viable and efficient proxy.

We use the {Open Catalyst Experiments 2024 (OCx24)} dataset released by \emph{FAIR Chemistry} (Meta AI) \citep{abed2024open}, which reports hydrogen yields for $63$ catalyst compositions formed from silver (Ag), gold (Au), and zinc (Zn). Each composition is a triplet $(x_{\mathrm{Ag}},x_{\mathrm{Au}},x_{\mathrm{Zn}})\in[0,1]^3$ with $x_{\mathrm{Ag}}+x_{\mathrm{Au}}+x_{\mathrm{Zn}}=1$. We set the utility $f$ to the recorded hydrogen yield for each composition and synthetically generate preference feedback from $f$ in our implementation.

Figures~\ref{fig:cat_1} and~\ref{fig:cat_2} plot regret for PF-TS (ours), MR-LPF \citep{kayal2025bayesian}, MaxMinLCB \citep{pasztor2024bandits}, and POP-BO \citep{xu2024principled}, using the same inference procedure as Section~\ref{prediction_uncertainty} and a Mat\'ern-$5/2$ kernel with hyperparameters same as the previous experiments. For reference, we also include cumulative regret for vanilla GP-TS with direct scalar feedback on $f$. Results are averaged over 30 runs. Over $T=800$ iterations, PF-TS achieves the lowest instantaneous regret among BOHF methods and substantially lower cumulative regret than MR-LPF and POP-BO, while remaining competitive with MaxMinLCB.

\begin{figure}[t]
    \centering
    \includegraphics[width=0.9\linewidth]{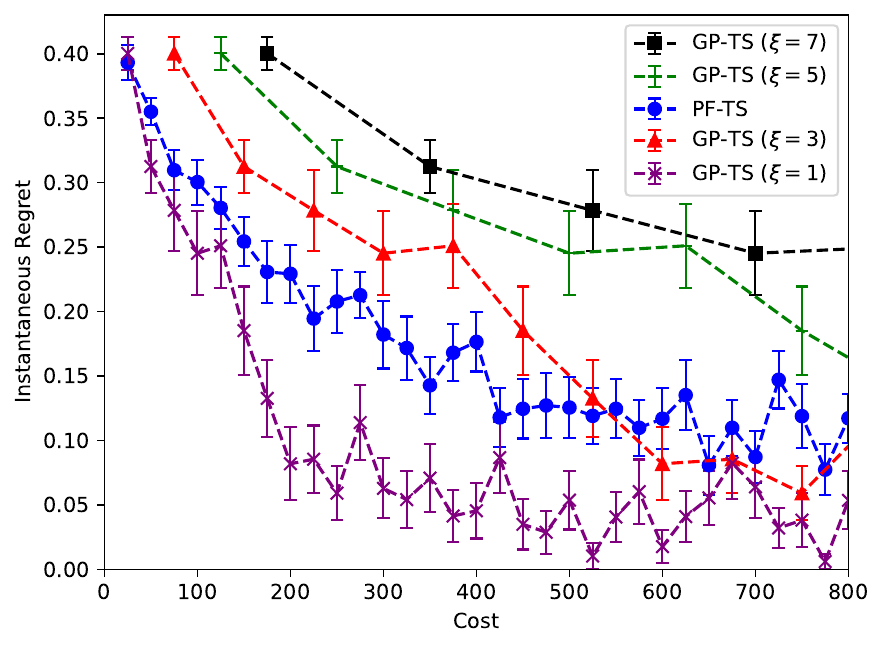}
    \caption{Instantaneous (simple) regret versus total cost $c$ (preference units) up to $800$ for PF-TS and GP-TS under cost ratios $\xi\in\{1,3,5,7\}$ with Catalyst data. For GP-TS, $c$ buys $c/\xi$ scalar iterations; PF-TS runs $c$ preference iterations. We plot regret and $\pm$1 standard error after each additional 25 iterations can be bought.}
    \label{fig:cat_cost}
    \vspace{-0.6cm}
\end{figure}
\subsection{Preference vs.\ Scalar Feedback}
Since the goal is to optimize the latent utility $f$, direct scalar observations are inherently more informative than pairwise preferences. Accordingly, in Figure~\ref{fig:cat_combined} the vanilla scalar-feedback method
(GP-TS)
accrues markedly less regret than BOHF methods when compared at the same \emph{number of iterations}. However, when preference labels are substantially cheaper to obtain than scalar measurements, BOHF can be advantageous under a fixed \emph{cost} budget.

To make this concrete, suppose a single scalar observation costs $\xi\!\ge\!1$ units relative to one preference query (cost $1$). In Figure~\ref{fig:cat_cost} we compare our PF-TS to GP-TS on the OCx24 dataset under a total cost budget $c$ (in preference units), for $\xi\in\{1,3,5,7\}$. PF-TS runs for $c$ preference iterations, while GP-TS is allotted only $c/\xi$ scalar iterations. We plot the instantaneous (simple) regret of each method at cost $c\le 800$. When preference feedback is sufficiently cheaper (larger $\xi$), PF-TS becomes preferable on a cost-adjusted basis.

%% file: CAMERA_READY/AISTATS_camera_ready/6_conclusion.tex
\section{CONCLUSION}
We introduced PF-TS, a Thompson-sampling approach to BOHF that leverages anchor independence and draws multiple posterior samples to select {both} actions each round in a fully sequential and {symmetric} manner. By our main guarantee (Regret bound in Theorem~\ref{theorem_regret}), PF-TS achieves cumulative regret $\tilde{\mathcal{O}}\!\big(\beta_T(\delta)\sqrt{T\,\Gamma(T)}\big)$, matching the order attained by GP-TS under scalar feedback. Empirically, on both synthetic (Ackley) and real (OCx24) objectives, PF-TS delivers competitive or improved performance to recent BOHF baselines at practical horizons.

Its symmetry in pair selection and full sequentiality make PF-TS a natural fit for applications where pairwise judgments are cheap and timely, e.g., human-in-the-loop design, LLM-as-judge evaluation, and high-throughput scientific discovery. Under cost budgets where scalar labels are more expensive than preferences, PF-TS can outperform standard BO on cost-adjusted regret. Looking ahead, promising directions include cost-aware hybrids of scalar and preference feedback, \emph{one-out-of-many} preference queries (in contrast to the \emph{one-out-of-two} BOHF setting), and large-scale experiments with LLM judges, including active data collection for fine-tuning large models.

\section*{Acknowledgments}

This work was completed while Joseph Lazzaro was an intern at MediaTek Research.

%% file: CAMERA_READY/AISTATS_camera_ready/checklist.tex
\section*{Checklist}

\begin{enumerate}

  \item For all models and algorithms presented, check if you include:
  \begin{enumerate}
    \item A clear description of the mathematical setting, assumptions, algorithm, and/or model. [Yes] This has been outlined in detail in Sections \ref{sec:prelim} and \ref{sec:ts_pref}.
    \item An analysis of the properties and complexity (time, space, sample size) of any algorithm. [Yes] One main focus of this paper is the analysis of our proposed PF-TS algorithm, with both theoretical results in Section \ref{sec:analysis} and empirical studies in Section \ref{sec:experiments}.
    \item (Optional) Anonymized source code, with specification of all dependencies, including external libraries. [Yes] This will be included in the supplementary material. 
  \end{enumerate}

  \item For any theoretical claim, check if you include:
  \begin{enumerate}
    \item Statements of the full set of assumptions of all theoretical results. [Yes] Assumptions have been clearly stated in Section \ref{sec:prelim} and for Theorem statements in Section \ref{sec:analysis}. Any missing details are readily available in the appendix.
    \item Complete proofs of all theoretical results. [Yes] Complete proofs can be found in the appendix.
    \item Clear explanations of any assumptions. [Yes] Clear explanations are given in Sections \ref{sec:prelim} and \ref{sec:analysis}.     
  \end{enumerate}

  \item For all figures and tables that present empirical results, check if you include:
  \begin{enumerate}
    \item The code, data, and instructions needed to reproduce the main experimental results (either in the supplemental material or as a URL). [Yes] Code and directions can be found in the supplementary material.
    \item All the training details (e.g., data splits, hyperparameters, how they were chosen). [Yes] This is discussed in detail in the appendix.
    \item A clear definition of the specific measure or statistics and error bars (e.g., with respect to the random seed after running experiments multiple times). [Yes] We state and include standard error bars in all plots.
    \item A description of the computing infrastructure used. (e.g., type of GPUs, internal cluster, or cloud provider). [Yes] This is discussed in the appendix.
  \end{enumerate}

  \item If you are using existing assets (e.g., code, data, models) or curating/releasing new assets, check if you include:
  \begin{enumerate}
    \item Citations of the creator If your work uses existing assets. [Yes]
    \item The license information of the assets, if applicable. [Not Applicable]
    \item New assets either in the supplemental material or as a URL, if applicable. [Not Applicable]
    \item Information about consent from data providers/curators. [Not Applicable]
    \item Discussion of sensible content if applicable, e.g., personally identifiable information or offensive content. [Not Applicable]
  \end{enumerate}

  \item If you used crowdsourcing or conducted research with human subjects, check if you include:
  \begin{enumerate}
    \item The full text of instructions given to participants and screenshots. [Not Applicable]
    \item Descriptions of potential participant risks, with links to Institutional Review Board (IRB) approvals if applicable. [Not Applicable]
    \item The estimated hourly wage paid to participants and the total amount spent on participant compensation. [Not Applicable]
  \end{enumerate}

\end{enumerate}

%% file: CAMERA_READY/AISTATS_camera_ready/appendix.tex
\appendix
\section*{Appendix}

\section{Notation Summary}

\begin{table}[h]
\centering
\small
\begin{tabular}{ll}
\hline
\textbf{Notation} & \textbf{Description} \\
\hline
$\mathcal{X}$ &
Action (design) space. \\

$x, x', x_t, x'_t$ &
Actions selected at a given round; $(x_t, x'_t)$ is the queried pair at round $t$. \\

$T$ &
Total number of optimization rounds (horizon). \\

$f : \mathcal{X} \to \mathbb{R}$ &
Unknown latent utility function to be maximized. \\

$x^\star$ &
Globally optimal action, $x^\star \in \arg\max_{x \in \mathcal{X}} f(x)$. \\

$y_t \in \{0,1\}$ &
Observed preference feedback at round $t$; $y_t = 1$ if $x_t \succ x'_t$. \\

$\mu(\cdot)$ &
Known monotone link function mapping utility differences to preference probabilities
(e.g., logistic link). \\

$h(x,x')$ &
Latent comparison function, defined as $h(x,x') = f(x) - f(x')$. \\

$k(\cdot,\cdot)$ &
Base positive-definite kernel on $\mathcal{X} \times \mathcal{X}$. \\

$\mathcal{H}_k$ &
Reproducing kernel Hilbert space (RKHS) induced by kernel $k$. \\

$\|f\|_{\mathcal{H}_k}$ &
RKHS norm of the latent utility function $f$. \\

$k^\Delta(\cdot,\cdot)$ &
Dueling (difference) kernel on $\mathcal{X} \times \mathcal{X}$ defined from $k$. \\

$\mathcal{H}_{k^\Delta}$ &
RKHS induced by the dueling kernel $k_\Delta$. \\

$h_t$ &
Regularized empirical risk minimizer estimating $h$ at round $t$. \\

$k^{\Delta}_t(\cdot, \cdot)$ &
Posterior covariance (uncertainty) kernel for $h$ after $t$ observations. \\

$\sigma_t^2(z)$ &
Posterior variance proxy at pair $z=(x,x')$, given by $k_{\Delta,t}(z,z)$. \\

$\lambda$ &
Regularization parameter in kernelized estimation. \\

$\kappa$ &
Uniform bound on the inverse derivative of the link function $\mu$. \\

$\Gamma(t)$ &
Maximum information gain after $t$ preference queries under kernel $k_\Delta$. \\

$\beta_t(\delta)$ &
Confidence-width multiplier ensuring high-probability error bounds. \\

$\tilde h_t$ &
Thompson sample from the Gaussian surrogate posterior over comparison functions. \\

$\tilde f_t$ &
Latent utility function corresponding to a separable Thompson sample $\tilde h_t$. \\

$v_t$ &
Exploration scale used in Thompson sampling (typically set to $\beta_t(\delta)$). \\

$R(T)$ &
Cumulative regret over $T$ rounds. \\

\hline
\end{tabular}
\caption{Summary of notation used throughout the main body of the paper.}
\end{table}

\section{Background Material} \label{appendix:background}

This section provides standard background material on Bayesian Optimisation, included for completeness and to aid in the interpretation of our results. 

\subsection{Reproducing Kernel Hilbert Space (RKHS)}
Let $\mathcal{X}$ be a nonempty set and let $k : \mathcal{X} \times \mathcal{X} \to \mathbb{R}$ be a
\emph{positive-definite kernel}, i.e.,
for any $n\in\mathbb{N}$, any points $x_1,\dots,x_n \in \mathcal{X}$ and any coefficients
$c_1,\dots,c_n \in \mathbb{R}$,
\[
    \sum_{i=1}^n\sum_{j=1}^n c_i c_j k(x_i, x_j) \ \geq \ 0.
\]
The RKHS $\mathcal{H}_k$ associated with $k$ is the unique Hilbert space of functions
$f:\mathcal{X}\rightarrow \mathbb{R}$ satisfying:
\begin{enumerate}
    \item For each $x \in \mathcal{X}$, the \emph{representer function} $k(\cdot, x)$ belongs to $\mathcal{H}_k$.
    \item \emph{Reproducing property:} For all $f \in \mathcal{H}_k$ and all $x \in \mathcal{X}$,
    \[
        f(x) \ =\ \langle f, \, k(\cdot, x) \rangle_{\mathcal{H}_k}.
    \]
\end{enumerate}
Intuitively, $\mathcal{H}_k$ is the completion of the linear span of $\{k(\cdot, x) : x \in \mathcal{X}\}$
with respect to the inner product induced by $k$. The RKHS norm $\|f\|_{\mathcal{H}_k}$ measures the smoothness of $f$
relative to the kernel $k$, and plays a central role in the theoretical analysis of BO.

\subsection{Mercer’s Theorem}

Mercer’s theorem provides a spectral decomposition of
continuous, symmetric, and positive-definite kernels on compact domains, and
connects the kernel $k$ with an orthonormal basis of the associated
$L^2$ space.

Let $k: \mathcal{X} \times \mathcal{X} \to \mathbb{R}$ be a continuous symmetric
positive-definite kernel, with $\mathcal{X} \subset \mathbb{R}^d$ compact and
equipped with a probability measure $\mu$. Mercer’s theorem states that there
exists a sequence $\{\gamma_m\}_{m=1}^\infty$ of positive eigenvalues and an
orthonormal set of eigenfunctions $\{\varphi_m\}_{m=1}^\infty$ in $L^2(\mathcal{X}, \mu)$
such that the kernel admits the expansion
\begin{equation}
    k(x, x') \ =\ \sum_{m=1}^{\infty} \gamma_m \, \varphi_m(x) \, \varphi_m(x'),
    \label{eq:mercer}
\end{equation}
where the series converges absolutely and uniformly on
$\mathcal{X} \times \mathcal{X}$.

For the RKHS $\mathcal{H}_k$ induced by $k$, the eigenfunctions
$\{\varphi_m\}$ form an orthonormal basis of $L^2(\mathcal{X},\mu)$, but not
necessarily of $\mathcal{H}_k$. Any function $f \in \mathcal{H}_k$ can be written
as
\begin{equation}
    f(x) \ =\ \sum_{m=1}^{\infty} w_m \sqrt{\gamma_m} \, \varphi_m(x),
    \label{eq:rkhs-expansion}
\end{equation}
for some coefficients $\{w_m\}_{m=1}^\infty$ with
\[
    \| f \|_{\mathcal{H}_k}^2 \ =\ \sum_{m=1}^{\infty} w_m^2.
\]
This expansion makes explicit the connection between the geometry of
$\mathcal{H}_k$ and the spectral properties of the kernel $k$.



\subsection{Common Kernel Functions}
The choice of kernel $k$ in a GP determines the properties of the functions drawn from it.
Two widely used kernels are described below.

\paragraph{Squared Exponential (RBF) Kernel.}
The squared exponential (SE), or radial basis function (RBF), kernel is given by
\[
    k_{\mathrm{SE}}(x,x') = \sigma_f^2
    \exp\left(-\frac{\|x - x'\|_2^2}{2\ell^2}\right),
\]
where $\ell > 0$ is the characteristic lengthscale and $\sigma_f^2$ is the signal variance.
This kernel is infinitely mean-square differentiable, encoding a prior belief in extreme smoothness.

\paragraph{Matérn Kernel.}
The Matérn kernel is a generalisation that allows control over the smoothness
of sample paths via a parameter $\nu>0$:
\[
    k_{\mathrm{Mat\acute{e}rn}}(x,x')
    = \sigma_f^2 \frac{2^{1-\nu}}{\Gamma(\nu)}
    \left( \frac{\sqrt{2\nu}\, r}{\ell} \right)^\nu
    K_{\nu}\left( \frac{\sqrt{2\nu}\, r}{\ell} \right),
\]
where $r = \|x - x'\|_2$, $\Gamma$ is the gamma function,
and $K_{\nu}$ is the modified Bessel function of the second kind.
When $\nu = 1/2$, the kernel reduces to the exponential kernel; when $\nu \to \infty$,
it converges to the squared exponential kernel. Finite $\nu$ yields sample paths that are
$\lceil \nu - 1 \rceil$ times mean-square differentiable.


\section{Proof of Proposition~\ref{prop_theta_concentration}}\label{appendix:anchor}

By construction (see~\eqref{eq:pred}), $h_t\in\cH_{\kd}$ is the minimizer of a regularized empirical risk; by the representer theorem there exist coefficients $\{\bar\theta_i\}_{i=1}^t$ such that
\begin{equation}\label{eq:repr-ht}
h_t(\cdot,\cdot)\;=\;\sum_{i=1}^t \bar\theta_i\,\kd\!\big((\cdot,\cdot),(x_i,x'_i)\big).
\end{equation}
Since $\kd$ is the \emph{dueling} kernel induced by a base kernel $k$ on $\cX$,
\[
\kd\!\big((x,x'),(u,u')\big)\;=\;k(x,u)+k(x',u')-k(x,u')-k(x',u),
\]
and therefore $h_t$ is a \emph{difference} function: there exists $g_t\in\cH_k$ with
\begin{equation}\label{eq:mean-diff}
h_t(x,x')\;=\;g_t(x)-g_t(x')\qquad\forall(x,x')\in\cX^2.
\end{equation}
(Equivalently, $T:\cH_k\to\cH_{\kd}$ defined by $(Tf)(x,x')=f(x)-f(x')$ is an isometry onto; \eqref{eq:repr-ht} lies in its range.)

Let $\tilde h_t\sim\gp(h_t,\kd_t)$ be the Thompson sample on $\cX^2$. We will construct $\tilde f_t$ explicitly and then verify the identity. Fix an arbitrary anchor $x_0\in\cX$ and define
\[
\tilde f_t(x)\;:=\;\tilde h_t(x,x_0),\qquad x\in\cX.
\]
Consider the (centered) discrepancy
\[
S(x,x')\;:=\;\tilde h_t(x,x')-\big(\tilde f_t(x)-\tilde f_t(x')\big)
\;=\;\tilde h_t(x,x')-\tilde h_t(x,x_0)+\tilde h_t(x',x_0).
\]
Since $\tilde h_t$ is Gaussian, $S(x,x')$ is Gaussian for each $(x,x')$.
We show that $S(x,x')$ is almost surely zero.

\emph{Mean of $S$.}
Using \eqref{eq:mean-diff},
\[
\mathbb{E}\,S(x,x')\;=\;h_t(x,x')-h_t(x,x_0)+h_t(x',x_0)
\;=\;\big(g_t(x)-g_t(x')\big)-\big(g_t(x)-g_t(x_0)\big)+\big(g_t(x')-g_t(x_0)\big)\;=\;0.
\]

\emph{Variance of $S$.}
We next show $\Var(S(x,x'))=0$. It suffices to verify that the covariance kernel $\kd_t$ inherits the dueling \emph{separable-difference} structure.
Write the usual kernel ridge–style posterior covariance (on any finite set) as
\[
\kd_t(\cdot,\cdot)\;=\;\kd(\cdot,\cdot)\;-\;\kd_{\!Z}(\cdot)^\top\big(\Kd_Z+\lambda\kappa I\big)^{-1}\kd_{\!Z}(\cdot),
\]
where $Z=\{(x_i,x'_i)\}_{i=1}^t$, $\Kd_Z=[\kd(z_i,z_j)]_{i,j=1}^t$ and $\kd_{\!Z}(z)=[\kd(z,z_i)]_{i=1}^t$. Each term on the right-hand side is a finite linear combination of \emph{dueling} kernels, hence $\kd_t$ is again dueling: there exists a positive semidefinite kernel $k_t$ on $\cX$ such that
\begin{equation}\label{eq:duel-form-post}
\kd_t\!\big((x,x'),(u,u')\big)\;=\;k_t(x,u)+k_t(x',u')-k_t(x,u')-k_t(x',u).
\end{equation}
(One concrete choice is $k_t(x,u):=\kd_t\!\big((x,x_0),(u,x_0)\big)$; positive semidefiniteness follows from that of $\kd_t$.)

Now expand the variance using~\eqref{eq:duel-form-post}:
\begin{align*}
\Var\big(S(x,x')\big)
&=\kd_t\!\big((x,x'),(x,x')\big)
 +\kd_t\!\big((x,x_0),(x,x_0)\big)
 +\kd_t\!\big((x',x_0),(x',x_0)\big) \\
&\quad -2\,\kd_t\!\big((x,x'),(x,x_0)\big)
 +2\,\kd_t\!\big((x,x'),(x',x_0)\big)
 -2\,\kd_t\!\big((x,x_0),(x',x_0)\big) \\
&=\big[k_t(x,x)+k_t(x',x')-2k_t(x,x')\big]
 +\big[k_t(x,x)+k_t(x_0,x_0)-2k_t(x,x_0)\big] \\
&\quad +\big[k_t(x',x')+k_t(x_0,x_0)-2k_t(x',x_0)\big] \\
&\quad -2\big[k_t(x,x)+k_t(x',x_0)-k_t(x,x_0)-k_t(x',x)\big] \\
&\quad +2\big[k_t(x,x')+k_t(x',x_0)-k_t(x,x_0)-k_t(x',x')\big] \\
&\quad -2\big[k_t(x,x')+k_t(x_0,x_0)-k_t(x,x_0)-k_t(x',x_0)\big] \\
&=0,
\end{align*}
by term-by-term cancellation. Hence $S(x,x')$ is Gaussian with zero mean and zero variance, so $S(x,x')=0$ almost surely.

\emph{Conclusion.}
For all $(x,x')\in\cX^2$,
\[
\tilde h_t(x,x')\;=\;\tilde f_t(x)-\tilde f_t(x')\qquad\text{a.s.,}
\]
with the explicit choice $\tilde f_t(x)=\tilde h_t(x,x_0)$ (unique up to an additive constant). This proves the proposition.

\medskip
\emph{Remarks on representer structure:}
The use of the representer theorem above ensures the \emph{mean} $h_t$ admits a decomposition through $k$ via \eqref{eq:mean-diff}. For the sampled component, one can also view finite-dimensional marginals of $\tilde h_t$ as living in the linear span of kernel sections $\{\kd(\cdot,z_i)\}_{i=1}^t$; because $\kd$ is separable through $k$, the difference property is preserved by posterior linear updates, yielding \eqref{eq:duel-form-post}. The anchor construction then provides the pathwise identification $\tilde f_t(x)=\tilde h_t(x,x_0)$ and completes the argument.

\section{Proof of Theorem \ref{theorem_regret}} \label{app:theorem_proof}



\paragraph{Structure of the Proof.}
The structure of the proof is as follows: In Sections~\ref{proof_concentration}--\ref{proof_events}, we develop high-probability concentration inequalities for posterior samples and define uniform events ensuring reliable uncertainty control. Section~\ref{subsec_anti_conc} introduces an anti-concentration argument that guarantees the selection of `unsaturated' actions with sufficient probability, thereby ensuring exploration. Finally, by combining these lemmas with information-gain bounds, the analysis (Sections~\ref{subsec_proof_least_var}--\ref{subsec_proof_putting_together}) derives the cumulative regret rate $\tilde{O}(\beta_T(\delta)\sqrt{T\Gamma(T)})$, matching that of standard GP-TS under scalar feedback. Before beginning the proof, we briefly comment on continuous action spaces.

\paragraph{Continuous action spaces} We emphasize that the statement in Theorem \ref{theorem_regret} is valid for both discrete \emph{and} continuous action spaces. For technical reasons, in the case of continuous action spaces we need to follow existing work analyzing TS (for vanilla BO), by discretizing the action space in each round. In particular, we will pick a decision set $\mathcal{C}_t \subset \mathcal{X}$. Namely, the actions in PF-TS will be selected as 
\[
x_t \in \argmax_{x\in\mathcal{C}_t}\ \tilde h_t^{(1)}(x,x_0),
\qquad
x'_t \in \argmax_{x\in\mathcal{C}_t}\ \tilde h_t^{(2)}(x,x_0).
\]
For the remainder of the proof we will consider this latter setting of a continuous action space. However, we note that the same order regret is achievable for finite, constant size action spaces with only minor modifications of the proof. We will choose the same discretization as \citep{chowdhury2017kernelized}. Namely, without loss of generality, let $\mathcal{X} = [0,w]^d \subset \mathbb{R}^d$. Then let the decision set $\mathcal{C}_t$ be evenly spaced across the action space with  $$|\mathcal{C}_t| = (BGwdt^2)^d,$$ where $B$ is the norm bound from Assumption \ref{ass:RKHS_norm}, $w$ is the width of the action space in each dimension, $d$ is the number of dimensions, and we define $$G = \sup_{x \in \mathcal{X}} \sup_{j \in [d]} \left(\frac{\partial^2 k(p,q)}{\partial p_j \partial q_j}\big|_{p=q=x}\right).$$ 
 This choice helps us control the error caused by the discretisation. In particular, let $[x]_t$ be the closest element of the decision set to $x \in \mathcal{X}$. Then, as discussed in \citep[Section 4.2, ][]{chowdhury2017kernelized}, selecting a decision set of this size guarantees for all $x\in \mathcal{X}$ that 
\begin{equation}\label{eqn:discretisation_fine}
    |f(x)-f([x]_t)|< 1/t^2.
\end{equation}
This will ensure that the decision set is sufficiently fine as to not accumulate excessive regret, as described blow. Finally, regardless of the action space we define the exploration scale $$v_t = \beta_t(\delta).$$



\paragraph{Discrete regret \& Lipschitz reduction} 

We begin by reducing the regret expressed in terms of preference probabilities to a form involving utility differences, which are easier to analyze. Recall that under the BTL model the probability of preference is given by $\Pr(x\succ x')=\mu\big(h(x,x')\big)$ with an increasing $L_\mu$–Lipschitz link. Because the regret definition depends on these probabilities, directly controlling it is inconvenient. However, the Lipschitz property of the link function allows us to relate differences in preference probabilities to differences in the latent utility function. This enables us to bound the instantaneous dueling regret in terms of utility gaps.

In addition, following standard analyses of Thompson sampling for continuous action spaces, we work with a discretized decision set and introduce a notion of discrete regret with respect to the best action in this set. We denote the instantaneous dueling regret with respect to the global optimum by $r_t$ and the discrete regret with respect to the best action in the decision set by $\Delta_t$. Using the Lipschitz property of the link function, we obtain the following relationship.
\begin{align}
    r_t &:=\frac{\mu\big(h(x^\star,x_t)\big)+\mu\big(h(x^\star,x'_t)\big)-1}{2}
\;\le\; \frac{L_\mu}{2}\,\Big(h(x^\star,x_t)+h(x^\star,x'_t)\Big),\nonumber\\
\Delta_t &:=\frac{\mu\big(h([x^\star]_t,x_t)\big)+\mu\big(h([x^\star]_t,x'_t)\big)-1}{2}
\;\le\; \frac{L_\mu}{2}\,\Big(h([x^\star]_t,x_t)+h([x^\star]_t,x'_t)\Big).\label{eqn_discrete_regret_lipschitz}
\end{align}
Furthermore, using the Lipschitz property again and as a consequence of the fineness of the decision set \eqref{eqn:discretisation_fine} we have 
\begin{align}
    |r_t - \Delta_t| &\leq \frac{L_\mu}{2}|h(x^*, x_t) + h(x^*, x_t') - h([x^*]_t, x_t) + h([x^*]_t, x_t')|\nonumber\\
    &\leq L_\mu |f(x^*) - f([x^*]_t)|\nonumber\\
    &\leq \frac{L_\mu }{t^2} \label{eqn_discrete_cont_regret_distance}
\end{align}

\subsection{Concentration of a posterior sample}\label{proof_concentration}
We have the following general concentration result for a posterior sample of a difference function, which will be useful for our analysis. Intuitively, it shows that a Thompson sample remains close to the posterior mean, with deviations controlled by the posterior standard deviation. This allows us to relate sampled comparison functions to the estimated function in a uniform way over the candidate set. Here we denote the filtration $\mathcal{F}t$ as the history up until time $t$, namely $H{t-1}$ defined in Section \ref{sec:prelim}.

\begin{lemma} \label{lem:posterior_concentration}
    Consider a posterior sample of the difference function $\tilde{h}_t\sim GP(h_t, v_t^2k_t^\Delta)$. Let $\mathcal{C}$ be a finite subset of the action space $\mathcal{X}$ and $\delta \in (0,1)$, then 
    \begin{equation*}
        \mathbb{P}\left(\forall x,x'\in \mathcal{C}, |\tilde{h}_t(x,x') - h_t(x,x')|\leq v_t\sqrt{2\log\left(|\mathcal{C}|^2 /\delta \right)} \sigma_{t}(x,x') \bigg| \mathcal{F}_t\right) \geq 1-\delta
    \end{equation*}
\end{lemma}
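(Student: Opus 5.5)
Conditioned on $\mathcal{F}_t$, the mean $h_t$ and covariance $\kd_t$ are deterministic. So for each fixed pair $(x,x')\in\mathcal{C}^2$ the sample value $\tilde h_t(x,x')$ is a scalar Gaussian with mean $h_t(x,x')$ and variance $v_t^2\sigma_t^2(x,x')$. The plan is a standard Gaussian tail bound applied pointwise, followed by a union bound over the $|\mathcal{C}|^2$ ordered pairs.

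\textbf{Step 1 (pointwise tail bound).} Fix $(x,x')\in\mathcal{C}^2$. If $\sigma_t(x,x')=0$, then $\tilde h_t(x,x')=h_t(x,x')$ almost surely, and the event holds trivially. This covers, for instance, the diagonal $x=x'$, where $\kd\big((x,x),\cdot\big)\equiv 0$. Otherwise, standardize:
\[
Z=\frac{\tilde h_t(x,x')-h_t(x,x')}{v_t\sigma_t(x,x')}\sim\mathcal{N}(0,1).
\]
Apply the bound $\Pr(|Z|>c)\le e^{-c^2/2}$, which holds for all $c\ge 0$; for $c\ge 1$ it follows from the Mills-ratio estimate $\Pr(|Z|>c)\le \sqrt{2/\pi}\,c^{-1}e^{-c^2/2}$. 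Taking $c=\sqrt{2\log(|\mathcal{C}|^2/\delta)}$ gives
\[
\Pr\!\left(|\tilde h_t(x,x')-h_t(x,x')|>v_t c\,\sigma_t(x,x')\,\middle|\,\mathcal{F}_t\right)\le \frac{\delta}{|\mathcal{C}|^2}.
\]

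\textbf{Step 2 (union bound).} Summing the failure probabilities over the at most $|\mathcal{C}|^2$ ordered pairs gives total failure probability at most $\delta$. This proves the lemma.

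\textbf{Main obstacle.} The only point needing care is the tail inequality $\Pr(|Z|>c)\le e^{-c^2/2}$ on the full range $c\ge 0$, since the Mills-ratio argument alone only covers $c\ge 1$. I will use the Chernoff-type bound $\Pr(|Z|>c)\le 2e^{-c^2/2}$ and note that the spare factor of $2$ is absorbed by the union bound. The reason is that the pairs $(x,x')$ and $(x',x)$ give the same event, because $\tilde h_t(x',x)=-\tilde h_t(x,x')$ by Proposition~\ref{prop_theta_concentration}, and $\sigma_t$ is symmetric. Hence only $|\mathcal{C}|(|\mathcal{C}|-1)/2\le |\mathcal{C}|^2/2$ distinct nontrivial events occur, each failing with probability at most $2\delta/|\mathcal{C}|^2$, for a total of at most $\delta$. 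Alternatively, one can directly check that $e^{-c^2/2}$ dominates the Gaussian tail whenever $|\mathcal{C}|\ge 2$, since then $c\ge\sqrt{2\log 4}>1$; and if $|\mathcal{C}|=1$, only the diagonal pair exists, where $\sigma_t=0$ and the claim is trivial.
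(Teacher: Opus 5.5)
Your proof is correct and follows the same route as the paper. Given $\mathcal{F}_t$, you apply a pointwise Gaussian tail bound to $\tilde h_t(x,x')\sim\mathcal{N}\big(h_t(x,x'),v_t^2\sigma_t^2(x,x')\big)$ (the paper simply cites Lemma B4 of Hoffman et al.\ for $\Pr(|Z|>c)\le e^{-c^2/2}$), then take a union bound over the $|\mathcal{C}|^2$ pairs. Your ``main obstacle'' is not actually one: $\Pr(|Z|>c)\le e^{-c^2/2}$ holds for every $c\ge 0$, because $\tfrac12 e^{-c^2/2}-\Pr(Z>c)$ is zero at $c=0$, first increases and then decreases, and tends to zero as $c\to\infty$. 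Both of your fallback arguments are nonetheless valid.
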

\begin{proof}
    As discussed in Appendix \ref{appendix:background}, we know that for any $x,x' \in D'$ that the posterior sample at that pair has distribution $\tilde{h}_t(x,x') \sim N(h_t(x,x'), v_t^2\sigma^2_{t}(x,x'))$. Hence, from \citep[Lemma B4, ][]{hoffman2013exploiting} we know that for any $x,x' \in D'$.
    \begin{equation*}
        \mathbb{P}\left(|\tilde{h}_t - h_t(x,x')| \leq v_t\sqrt{2\log\left(1 /\delta \right)} \sigma_{t}(x,x')\bigg| \mathcal{F}_t\right) \geq 1-\delta.
    \end{equation*}
    Hence, by taking a union bound over all $|\mathcal{C}|^2$ pairs of actions from $\mathcal{C}$, we attain the desired result.
\end{proof}

\subsection{Uniform high probability events}\label{proof_events}

 We now introduce two high probability events which uniformly control the distance between the estimated difference function $h_t$ and the true difference function $h$, as well as the difference between the estimated difference function $h_t$ and each of the sampled difference functions $\tilde{h}_t^{(i)}$. Namely, define the following events, each required to hold \emph{for all} $z\in\cX^2$:
\[
E_1:\ \ |h(z)-h_t(z)| \le \beta_t(\delta)\,\sigma_t(z)\ \  \text{for all}\ \  t\le T,\quad
E_{2,t}:\ \ \big|\tilde h_t^{(i)}(z)-h_t(z)\big| \le \beta_t(\delta)\sqrt{2\log\left(|\mathcal{C}_t|^2 t^2 \right)}\,\sigma_t(z)\ \ \text{for }i\in\{1,2\}.
\]
By Lemma \ref{lem:confidence_bound} and Lemma \ref{lem:posterior_concentration}, we know that these events hold with high probability, namely
\begin{align}
    \mathbb{P}\left(E_1\right) &\geq 1-\delta \label{eqn:E1_prob}\\
    \mathbb{P}\left(E_{2,t}|\mathcal{F}_t\right) &\geq 1-\frac{2}{t^2}\label{eqn:E2_prob}
\end{align}

Furthermore, for ease of notation, we define the constant 
\begin{equation}\label{eqn_d_t_definition}
    d_t = \beta_t(\delta)\left(1+\sqrt{2\log\left(|\mathcal{C}_t|^2 t^2 \right)}\right)
\end{equation}
and observe that, under both events $E_1 \cap E_{2,t}$ we have for all $t\le T$, all $z\in\cX^2$, and for $i\in\{1,2\}$
\begin{equation}\label{eqn:posterior_is_d_sigma_close}
    \big|\tilde h_t^{(i)}(z)-h(z)\big| \leq d_t \sigma_t(z).
\end{equation}

\subsection{Anti-concentration ensures unsaturated selection}\label{subsec_anti_conc}

To analyze the exploration behavior of the algorithm, we partition the action space into saturated and unsaturated points relative to a chosen anchor. Intuitively, a point is saturated if its posterior uncertainty is small compared to its sub-optimality gap, meaning the algorithm is already confident it is suboptimal. Unsaturated points are those for which uncertainty remains large enough that they may still plausibly be optimal. More formally, we define the saturated set $S_t(a)$ as the set of actions whose posterior uncertainty with an anchor point $a$ is smaller than their discrete sub-optimality gap. Namely 
\begin{equation}\label{eqn:def_saturated_set}
    S_t(a)\;:=\;\big\{x\in \mathcal{C}_t:\ h([x^\star]_t,x)\ >\ d_t\sigma_t(x,a)\big\}.
\end{equation}
We will refer to actions as unsaturated (with respect to $a$) when they are not in the saturated set $S_t(a)$.\\

We can then extend the Gaussian anti-concentration arguments used in \citep{chowdhury2017kernelized} in order to prove that we will always play an unsaturated action with some probability, through the following two lemmas. Intuitively, saturated points are those whose sub-optimality gap already exceeds their posterior uncertainty, meaning the algorithm has enough evidence that they are suboptimal. In contrast, unsaturated points still have sufficiently large uncertainty relative to their gap and therefore require further exploration. The following results show that the randomness of the Thompson sample ensures that such points are selected with non-negligible probability: Lemma \ref{lemma_anti_concentration} establishes a constant-probability anti-concentration property of the posterior sample, and Lemma \ref{lemma_high_prob_unsaturated} uses this to show that the maximizer of the sampled function lies outside the saturated set with constant probability. This guarantees that the algorithm continues to explore points whose uncertainty remains significant, which will be a key ingredient for controlling the regret in the subsequent analysis.\\

\begin{lemma}\label{lemma_anti_concentration}
    For any filtration $\mathcal{F}_t$ in which $E_1$ is true, for any posterior sample $\tilde{h}_t$ and for any $x,x' \in \mathcal{X}$ we can lower bound the probability
$$ \mathbb{P}\left(\tilde{h}_t(x,x')>h(x,x')\bigg| E_1\right) \geq p$$
for some universal constant $p = \frac{1}{4e\sqrt{\pi}}$.
\end{lemma}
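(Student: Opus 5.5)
The plan is to condition on $\mathcal{F}_t$ and standardize the Gaussian sample, then reduce the event to a standard normal tail probability. Given $\mathcal{F}_t$, the value $\tilde h_t(x,x')$ is distributed as $N\big(h_t(x,x'),v_t^2\sigma_t^2(x,x')\big)$. If $\sigma_t(x,x')=0$, then under $E_1$ we have $h_t(x,x')=h(x,x')$, and the sample equals this value almost surely. This degenerate case must be handled separately, either by noting it is irrelevant for the later use or by interpreting the event with a weak inequality. For $\sigma_t(x,x')>0$, write $Z=(\tilde h_t(x,x')-h_t(x,x'))/(v_t\sigma_t(x,x'))\sim N(0,1)$. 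Then
\[
\mathbb{P}\big(\tilde h_t(x,x')>h(x,x')\mid\mathcal{F}_t\big)=\mathbb{P}\Big(Z>\frac{h(x,x')-h_t(x,x')}{v_t\,\sigma_t(x,x')}\Big).
\]

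Next I would invoke $E_1$, which gives $|h(x,x')-h_t(x,x')|\le\beta_t(\delta)\sigma_t(x,x')$. With the choice $v_t=\beta_t(\delta)$ this means the threshold is at most $1$ in absolute value. Hence the probability above is at least $\mathbb{P}(Z>1)$, uniformly over $x,x'$ and over every history in which $E_1$ holds. Averaging over such histories then gives the stated conditional bound given $E_1$.

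It remains to lower bound $\mathbb{P}(Z>1)$ by the claimed constant. I would use the standard Gaussian anti-concentration bound from \citet{chowdhury2017kernelized}, namely $\mathbb{P}(Z>c)\ge \frac{e^{-c^2}}{4\sqrt{\pi}c}$ for $c\ge 1$ (or equivalently Mills-ratio bounds). At $c=1$ this gives exactly $p=\frac{1}{4e\sqrt{\pi}}$; the true value $\approx 0.159$ is comfortably larger.

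The main subtlety is not the calculation but the bookkeeping. Strictly speaking, $E_1$ is an event over all $t\le T$, so conditioning on it could alter the conditional law of $\tilde h_t$. I would handle this by working conditionally on $\mathcal{F}_t$ for histories in which the time-$t$ confidence inequality holds, as the statement's phrase ``for any filtration $\mathcal{F}_t$ in which $E_1$ is true'' suggests. The fresh sample $\tilde h_t$ is independent of the past given $\mathcal{F}_t$, so its conditional Gaussian law is unaffected, and only this time-$t$ part of $E_1$ is actually used.
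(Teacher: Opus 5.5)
Your proposal is correct and follows the same route as the paper: standardize $\tilde h_t(x,x')$, use $E_1$ with $v_t=\beta_t(\delta)$ to bound the threshold by $1$ in absolute value, and invoke the Gaussian anti-concentration bound of Chowdhury and Gopalan. Your version is in fact a little more careful than the paper's in two places. First, you reduce to $\mathbb{P}(Z>1)$ and apply the tail bound only at $c=1$, whereas the paper applies $\mathbb{P}(Z>\theta_t)\ge e^{-\theta_t^2}/(4\sqrt{\pi}\,\theta_t)$ directly for $\theta_t\le 1$, which is false for small $\theta_t$ because the right-hand side blows up. Second, you correctly flag the degenerate case $\sigma_t(x,x')=0$ (e.g.\ $x=x'$), where the strict-inequality statement fails; there the weak-inequality version you suggest is enough for its use in Lemma~\ref{lemma_high_prob_unsaturated}.
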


\begin{proof}
    We first rewrite the event in terms of a standard normal random variable, recalling that a posterior sample for the difference function at $x,x' \in \mathcal{X}$ is distributed $\tilde{h}_t(x,x') \sim N(h_t(x,x'),v_t^2\sigma_t^2(x,x'))$. Let $\theta_t = \frac{|h(x,x') - h_t(x,x')|}{v_t\sigma_t(x,x')}$, then we can write
    \begin{align*}
        \mathbb{P}\left(\tilde{h}_t(x,x')>h(x,x')\bigg| E_1\right) &\geq \mathbb{P}\left(\frac{\tilde{h}_t(x,x') - h_t(x,x')}{v_t \sigma_t(x,x')}>\frac{h(x,x') - h_t(x,x')}{v_t\sigma_t(x,x')}\bigg| E_1\right)\\
        &\geq \mathbb{P}\left(\frac{\tilde{h}_t(x,x') - h_t(x,x')}{v_t \sigma_t(x,x')}>\frac{|h(x,x') - h_t(x,x')|}{v_t\sigma_t(x,x')}\bigg| E_1\right)\\
        &\geq \frac{1}{4\theta_t\sqrt{\pi}} \exp(-\theta_t^2)\\
        &\geq \frac{1}{4e\sqrt{\pi}}
    \end{align*}
    The penultimate inequality holds from via a Gaussian anti-concentration \citep[Lemma 7, ][]{chowdhury2017kernelized}. The final inequality holds since, by definition under event $E_1$ (see Section \ref{proof_events}), 
    $$\theta_t = \frac{|h(x,x') - h_t(x,x')|}{v_t\sigma_t(x,x')} \leq 1$$
\end{proof}

\begin{lemma}\label{lemma_high_prob_unsaturated}
For any filtration $\mathcal{F}_{t}$ such that $E_1$ is true and for any $a,a' \in \mathcal{C}_t$,

\begin{align*}
    \mathbb{P}\left[ x_t \in \mathcal{C}_t \setminus S_t(a) \mid \mathcal{F}_{t} \right] &\ge p - \frac{2}{t^2}\\
    \mathbb{P}\left[ x_t' \in \mathcal{C}_t \setminus S_t(a') \mid \mathcal{F}_{t} \right] &\ge p - \frac{2}{t^2},\\
\end{align*}
with $p = \frac{1}{4e\sqrt{\pi}}.$

\end{lemma}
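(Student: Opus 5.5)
The argument follows the standard GP-TS anti-concentration argument of \citet{chowdhury2017kernelized}, applied to the separable sample $\tilde f_t$. We treat $x_t$; the claim for $x'_t$ is identical with $\tilde h_t^{(2)}$ and $a'$ in place of $\tilde h_t^{(1)}$ and $a$.

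\textbf{Step 1: move the anchor to $a$.} By Proposition~\ref{prop_theta_concentration}, $\tilde h_t^{(1)}(x,x_0)=\tilde f_t(x)-\tilde f_t(x_0)$. Hence
\[
x_t\in\argmax_{x\in\mathcal{C}_t}\tilde h_t^{(1)}(x,a)
\]
for any $a\in\mathcal{C}_t$, whatever anchor the algorithm actually used. So we may analyse the selection with anchor $a$.

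\textbf{Step 2: reduce to a favourable event.} Note that $[x^\star]_t\notin S_t(a)$, since $h([x^\star]_t,[x^\star]_t)=0\le d_t\sigma_t([x^\star]_t,a)$. Consider the event
\[
\tilde h_t^{(1)}([x^\star]_t,a)>h([x^\star]_t,a).
\]
Suppose this event and $E_{2,t}$ both hold (with $E_1$ holding by assumption). For every $x\in S_t(a)$, use \eqref{eqn:posterior_is_d_sigma_close} together with the definition of $S_t(a)$:
\[
\tilde h_t^{(1)}(x,a)\le h(x,a)+d_t\sigma_t(x,a)<h(x,a)+h([x^\star]_t,x)=h([x^\star]_t,a)<\tilde h_t^{(1)}([x^\star]_t,a).
\]
Here $h(x,a)+h([x^\star]_t,x)=f([x^\star]_t)-f(a)=h([x^\star]_t,a)$ by separability. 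So every saturated point is strictly beaten by the unsaturated point $[x^\star]_t$, and the maximizer $x_t$ must lie in $\mathcal{C}_t\setminus S_t(a)$.

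\textbf{Step 3: bound the probability.} Conditioning on $\mathcal{F}_t$,
\[
\mathbb{P}[x_t\notin S_t(a)\mid\mathcal{F}_t]\ge \mathbb{P}[\tilde h_t^{(1)}([x^\star]_t,a)>h([x^\star]_t,a)\mid\mathcal{F}_t]-\mathbb{P}[E_{2,t}^c\mid\mathcal{F}_t]\ge p-\frac{2}{t^2}.
\]
The first term is at least $p$ by Lemma~\ref{lemma_anti_concentration} with $(x,x')=([x^\star]_t,a)$. The second is at most $2/t^2$ by \eqref{eqn:E2_prob}.

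\textbf{Main obstacle.} The delicate point is the chain of inequalities in Step 2. It needs the TS sample evaluated with anchor $a$ to determine the argmax, which is exactly what anchor independence supplies, since the saturated set is defined relative to the possibly different anchor $a$. It also needs the gap identity $h(x,a)+h([x^\star]_t,x)=h([x^\star]_t,a)$, which again uses separability. Two further checks:
\begin{itemize}
\item The degenerate case $\sigma_t([x^\star]_t,a)=0$, which arises e.g.\ when $a=[x^\star]_t$. There $\tilde h_t^{(1)}([x^\star]_t,a)$ is deterministic, so Lemma~\ref{lemma_anti_concentration} cannot be applied as stated. I would handle it directly: under $E_1$ the strict inequality is replaced by ``$\ge$'', and ties are broken toward unsaturated points. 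Alternatively, the saturated-set inequality is already strict, so the comparison with $\tilde h_t^{(1)}([x^\star]_t,a)=h([x^\star]_t,a)$ still excludes saturated points.
\item $E_{2,t}$ must hold at the pairs in $\mathcal{C}_t\times\mathcal{C}_t$ used above, which Lemma~\ref{lem:posterior_concentration} guarantees.
\end{itemize}
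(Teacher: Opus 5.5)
Your proposal is correct and follows the paper's proof essentially step for step. The steps are: anchor independence to re-anchor the argmax at $a$; the chain $\tilde h_t^{(1)}(x,a)\le h(x,a)+d_t\sigma_t(x,a)<h([x^\star]_t,a)$ for $x\in S_t(a)$ under $E_1\cap E_{2,t}$; and then $\Pr[\tilde h_t^{(1)}([x^\star]_t,a)>h([x^\star]_t,a)\mid\mathcal{F}_t]-\Pr[E_{2,t}^c\mid\mathcal{F}_t]\ge p-2/t^2$. Your remark on the degenerate case $\sigma_t([x^\star]_t,a)=0$ (e.g.\ $a=[x^\star]_t$, where the paper's strict event has probability zero) repairs a small gap the paper leaves open, and your second fix closes it cleanly: use ``$\ge$'' and note that saturated points are still strictly beaten.
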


\begin{proof}
We prove the first statement. The second statement is true by a symmetric argument. By anchor independence (see Section \ref{subsection:anchor_independence}), we know that the first action $x_t$ is the maximizer of the single argument search

\begin{equation*}
    x_t = \argmax_{x \in \mathcal{C}_t} \tilde{h}_t^{(1)}(x,a)
\end{equation*}
for any $a \in \mathcal{C}_t$, regardless of what anchor point is used in the implementation of the algorithm $x_0$.\\ 

If $\tilde{h}_t^{(1)}([x^\star]_t,a)$ is greater than $\tilde{h}_t^{(1)}(x,a)$ for all saturated points at round $t$ (namely, $\tilde{h}_t^{(1)}([x^\star]_t,a) > \tilde{h}_t^{(1)}(x,a), \forall x \in S_t(a)$), then one of the unsaturated points in $\mathcal{C}_t$ (which always includes $[x^\star]_t$) must be played and hence $x_t \in \mathcal{C}_t \setminus S_t(a)$. This implies
\begin{equation}\label{eqn_proof_quick}
\mathbb{P}\left[ x_t \in \mathcal{C}_t \setminus S_t(a) \mid \mathcal{F}_{t} \right] \ge 
\mathbb{P}\left[ \tilde{h}_t^{(1)}([x^\star]_t,a) > \tilde{h}_t^{(1)}(x,a), \forall x \in S_t \mid \mathcal{F}_{t} \right].
\end{equation}

Now, by the definition of the saturated set $S_t(a)$, we know $h([x^*]_t,x) > d_t \sigma_{t-1}(x,a)$, for all $x \in S_t(a)$. Furthermore, if both the events $E_1$ and $E_{2,t}$ are true, then $\tilde{h}_t(x,a) \le h(x,a) + d_t \sigma_{t-1}(x,a)$, for all $x \in \mathcal{C}_t$. Thus, if events $E_1$ and $E_{2,t}$ hold and using the separability of $h$, we have for all $x \in S_t(a)$, $\tilde{h}_t(x,a) < h(x,a) + h([x^*]_t,x)= h([x^\star]_t,a)$. Therefore, for any filtration $\mathcal{F}_{t}$ such that $E_1$ is true, either $E_{2,t}$ is false, or else for all $x \in S_t(a)$, $\tilde{h}_t^{(1)}(x,a) < h([x^\star]_t,a)$. Hence, for any $\mathcal{F}_{t}$ such that $E_1$ is true,

\begin{align*}
    &\mathbb{P}\left[ \tilde{h}_t^{(1)}([x^\star]_t,a) > \tilde{h}_t^{(1)}(x,a), \forall x \in S_t(a) \mid \mathcal{F}_{t} \right]\\
     & \quad\quad \geq \mathbb{P}\left[ \tilde{h}_t^{(1)}([x^\star]_t,a) > h([x^\star]_t,a)\mid \mathcal{F}_{t} \right] - \mathbb{P}\left[E_{2,t}^C\mid \mathcal{F}_{t} \right]\\
     & \quad\quad \geq p - \frac{2}{t^2}
\end{align*}
For the final inequality we have used Lemma \ref{lemma_anti_concentration} and \eqref{eqn:E2_prob}. Now the proof follows by combining this final inequality with Equation \eqref{eqn_proof_quick}. 
\end{proof}

\subsection{Control via least-variance unsaturated points}\label{subsec_proof_least_var}

Having established that unsaturated points are selected with constant probability, we next relate the uncertainty at the played action to the smallest uncertainty among unsaturated actions. For this purpose, we introduce the notion of a least-variance unsaturated point, which is the unsaturated action whose posterior variance with respect to the anchor is minimal. Intuitively, this point represents the most “well-understood’’ action among those that still require exploration. The following argument shows that, because the algorithm selects unsaturated actions with constant probability, the expected posterior variance of the chosen action is lower bounded by a constant fraction of the variance of the least-variance unsaturated point. This connection allows us to control the uncertainty of the played actions in terms of the smallest remaining uncertainty among exploratory candidates, which will be crucial when bounding the regret in the subsequent steps of the proof.

Formally, for any potential anchor point $a\in\mathcal{C}_t$, define a least-variance unsaturated point as
\[
\bar x_t(a)\ \in\ \argmin_{x\in\cX\setminus S_t(a)}\ \sigma_t(x,a).
\]
Note that such a point always exists because we have $x^* \in \mathcal{C}_t\backslash S_t(a)$ for any $a\in\mathcal{C}_t$.

\begin{lemma} \label{lemma_unsaturated_ub}
Condition on any filtration $\cF_{t-1}$ such that $E_1$ holds. Let $a_t:=x'_t$ and $a'_t:=x_t$.
Then there exists an absolute constant $p\in(0,1)$ (the one from Lemma~\ref{lemma_high_prob_unsaturated}) such that
\begin{align}
\EE\!\big[\sigma_t(\bar x_t(a_t),a_t)\ \bigm|\ \cF_{t}\big]
&\ \le\ \frac{1}{p-\frac{2}{t^2}}\,\EE\!\big[\sigma_t(x_t,a_t)\ \bigm|\ \cF_{t}\big],\label{eq:witness_to_play_1}\\
\EE\!\big[\sigma_t(\bar x_t(a'_t),a'_t)\ \bigm|\ \cF_{t},\big]
&\ \le\ \frac{1}{p-\frac{2}{t^2}}\,\EE\!\big[\sigma_t(x'_t,a'_t)\ \bigm|\ \cF_{t}\big].\label{eq:witness_to_play_2}
\end{align}
\end{lemma}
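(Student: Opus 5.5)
The approach is the standard GP-TS ``least-variance unsaturated point'' argument of \citet{chowdhury2017kernelized}. The new difficulty is that the anchor $a_t = x'_t$ is itself random. I would handle this by exploiting the independence of the two Thompson draws $\tilde h_t^{(1)}$ and $\tilde h_t^{(2)}$ given $\cF_t$. I prove \eqref{eq:witness_to_play_1}; \eqref{eq:witness_to_play_2} follows by swapping the roles of the two draws. Throughout I assume $t$ is large enough that $p - 2/t^2 > 0$. For the finitely many small $t$ where this fails, the statement is vacuous or should be read with that restriction, since those rounds contribute only a constant to the regret.

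\emph{Step 1: freeze the anchor.} Let $\cG_t := \sigma(\cF_t, \tilde h_t^{(2)})$. Then $x'_t = a_t$ is $\cG_t$-measurable. Conditional on $\cF_t$, the draw $\tilde h_t^{(1)}$ is independent of $\tilde h_t^{(2)}$, so conditional on $\cG_t$ it is still distributed as $\gp(h_t, v_t^2\kd_t)$. Hence, conditionally on $\cG_t$, the anchor $a_t$ is a fixed element $a \in \mathcal{C}_t$. Three consequences follow:
\begin{itemize}
\item the saturated set $S_t(a)$ and the point $\bar x_t(a)$ are deterministic;
\item by anchor independence (Proposition~\ref{prop_theta_concentration}), $x_t$ is the maximizer of $x \mapsto \tilde h_t^{(1)}(x,a)$ over $\mathcal{C}_t$;
\item the proof of Lemma~\ref{lemma_high_prob_unsaturated} applies verbatim with this $a$. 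It uses only $E_1$, the law of $\tilde h_t^{(1)}$, and the bound on $\mathbb{P}(E_{2,t}^c)$, which I would restate for the single draw $\tilde h_t^{(1)}$. This gives $\mathbb{P}[x_t \notin S_t(a_t) \mid \cG_t] \ge p - 2/t^2$.
\end{itemize}

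\emph{Step 2: the one-line comparison.} Still conditioning on $\cG_t$, I would write
\[
\EE[\sigma_t(x_t,a_t)\mid\cG_t] \;\ge\; \EE\big[\sigma_t(x_t,a_t)\mathds{1}\{x_t\notin S_t(a_t)\}\mid\cG_t\big] \;\ge\; \sigma_t(\bar x_t(a_t),a_t)\,\mathbb{P}[x_t\notin S_t(a_t)\mid\cG_t].
\]
The second inequality holds because $\bar x_t(a_t)$ minimizes $\sigma_t(\cdot,a_t)$ over the unsaturated set. The minimum should be taken over $\mathcal{C}_t \setminus S_t(a_t)$, which is nonempty since it contains $[x^\star]_t$. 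Combining this with Step 1 and dividing gives
\[
\sigma_t(\bar x_t(a_t),a_t) \le \frac{1}{p-2/t^2}\,\EE[\sigma_t(x_t,a_t)\mid\cG_t].
\]

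\emph{Step 3: integrate out the second draw.} Taking $\EE[\cdot\mid\cF_t]$ of both sides and using the tower property yields \eqref{eq:witness_to_play_1}.

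The main obstacle is Step 1: justifying that Lemma~\ref{lemma_high_prob_unsaturated}, which is stated for a deterministic anchor, can be used with the random anchor $x'_t$. The fix is the conditional-independence argument above. The one subtlety is that the event $E_{2,t}$ couples both draws, so I would invoke only its $i=1$ half, which has conditional probability at least $1 - 1/t^2$ and so is even better than needed. A minor bookkeeping point is that the paper writes both $\sigma_t$ and $\sigma_{t-1}$ and both $\cF_t$ and $\cF_{t-1}$. I would fix one convention, with $\sigma_t$ being $\cF_t$-measurable, so that $\sigma_t(\bar x_t(a),a)$ is indeed deterministic given $\cG_t$.
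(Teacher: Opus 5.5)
Your proof is correct and follows the same route as the paper. The paper also freezes the random anchor, by decomposing over the events $\{a_t = a\}$, $a\in\mathcal{C}_t$, applying Lemma~\ref{lemma_high_prob_unsaturated} to get $\Pr[x_t\notin S_t(a)\mid\cF_t,\,a_t=a]\ge p-2/t^2$, and averaging back, exactly as your Steps 1--3 do with $\cG_t$. Your version is the more careful one: conditioning on the second draw makes explicit the conditional independence of $\tilde h_t^{(1)}$ and $\tilde h_t^{(2)}$, and the restriction to the $i=1$ half of $E_{2,t}$, both of which the paper uses without comment in that step.
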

\begin{proof} 
    We prove \eqref{eq:witness_to_play_1}; \eqref{eq:witness_to_play_2} is identical by symmetry.  
Note that, on the event
\[
U_t(a_t)\ :=\ \{\,x_t\notin S_t(a_t)\,\},
\]
we have by definition of $\bar x_t(a_t)$ that $\sigma_t(x_t,a_t)\ \ge\ \sigma_t(\bar x_t(a_t),a_t)$. 

Taking conditional expectations gives us
\begin{align*}
    \mathbb{E}\left[\sigma_t(x_t,a_t)\big|\cF_{t},\right] &\geq \mathbb{E}\left[\sigma_t(x_t,a_t) \mathds{1}\{U_t(a_t)\}\big|\cF_{t}\right]\\
    &= \sum_{a\in \mathcal{C}_t} \mathbb{E}\left[\sigma_t(x_t,a_t) \mathds{1}\{U_t(a_t)\}\big|\cF_{t},a_t=a\right] \Pr \left(a_t=a \big|\cF_{t}\right)\\
    &= \sum_{a\in \mathcal{C}_t} \mathbb{E}\left[\sigma_t(x_t,a) \mathds{1}\{U_t(a)\}\big|\cF_{t},a_t=a\right] \Pr \left(a_t=a \big|\cF_{t}\right)\\
    &= \sum_{a\in \mathcal{C}_t} \mathbb{E}\left[\sigma_t(\Bar{x}_t(a),a) \mathds{1}\{U_t(a)\}\big|\cF_{t},a_t=a\right] \Pr \left(a_t=a \big|\cF_{t}\right)\\
    &= \sum_{a\in \mathcal{C}_t}\sigma_t(\Bar{x}_t(a),a) \Pr\left[ U_t(a)\big|\cF_{t},a_t=a\right] \Pr \left(a_t=a \big|\cF_{t}\right)\\
    &\geq \left(p-\frac{2}{t^2}\right)\sum_{a\in \mathcal{C}_t}\sigma_t(\Bar{x}_t(a),a) \Pr \left(a_t=a \big|\cF_{t}\right)\\
    &= \left(p-\frac{2}{t^2}\right) \mathbb{E}\left[\sigma_t(\Bar{x}_t(a_t),a_t)\big|\cF_{t}\right]
\end{align*}
Where in the second line we have used the law of total expectation, the fourth line comes from the definition of $\bar x_t(a_t)$ as noted at the beginning of the prof, in the fifth line we can take $\sigma_t(\Bar{x}_t(a),a)$ since it is deterministic given the conditions of the expectation, and in the penultimate line we have used Lemma~\ref{lemma_high_prob_unsaturated}. Rearranging yields \eqref{eq:witness_to_play_1}. 
\end{proof}

\subsection{Using anchor independence}
We now present a lemma that exploits the anchor-independence property of PF-TS. In particular, Proposition~\ref{prop_theta_concentration} implies that the actions selected by the algorithm are invariant to the choice of anchor. Consequently, for the purpose of the analysis we are free to select anchors that simplify the regret decomposition. In particular, when bounding the instantaneous regret in round $t$, we choose the anchor points $a_t = x'_t$ for the first action and $a'_t = x_t$ for the second. This choice allows us to relate the regret directly to the posterior uncertainty of the played pair.

\begin{lemma}\label{lemma_instant_ub_high_prob}
    In round $t$ of the PF-TS algorithm, when both good events hold $E_1\cap E_{2,t}$ we have that the instantaneous regret can be upper bounded by 
    \begin{equation*}
        r_t \leq \frac{L_\mu d_t}{4} \left(2\sigma_t(\Bar{x}_t,x_t') + 2\sigma_t(\Bar{x}_t',x_t) +2\sigma_t(x_t,x_t')\right) +\frac{L_\mu}{t^2} 
    \end{equation*}
\end{lemma}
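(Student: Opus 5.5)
The plan is to bound the discrete regret $\Delta_t$ through \eqref{eqn_discrete_regret_lipschitz} and then pass to $r_t$ via \eqref{eqn_discrete_cont_regret_distance}, which accounts for the additive $L_\mu/t^2$ term. By \eqref{eqn_discrete_regret_lipschitz}, $\Delta_t \le \frac{L_\mu}{2}\big(h([x^\star]_t,x_t)+h([x^\star]_t,x'_t)\big)$, so it suffices to show
\[
h([x^\star]_t,x_t)\le d_t\big(\sigma_t(\bar x_t,x'_t)+\sigma_t(x_t,x'_t)\big)
\]
and the symmetric bound for $x'_t$. Here $\bar x_t:=\bar x_t(x'_t)$ and $\bar x'_t:=\bar x_t(x_t)$. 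Adding the two bounds gives at most $d_t\big(\sigma_t(\bar x_t,x'_t)+\sigma_t(\bar x'_t,x_t)+2\sigma_t(x_t,x'_t)\big)$. Multiplied by $L_\mu/2$, this fits inside the stated bound $\frac{L_\mu d_t}{4}(2\cdot+2\cdot+2\cdot)$ once the $\sigma_t(x_t,x'_t)$ contributions are merged as below; any constant-factor slack is harmless.

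For the first term, use anchor independence (Proposition~\ref{prop_theta_concentration}) to take the anchor $a_t=x'_t$. Then $x_t\in\argmax_{x\in\mathcal C_t}\tilde h^{(1)}_t(x,x'_t)$. By separability of $h$,
\[
h([x^\star]_t,x_t)=h([x^\star]_t,\bar x_t)+h(\bar x_t,x'_t)-h(x_t,x'_t).
\]
The three pieces are bounded as follows.
\begin{itemize}
\item Since $\bar x_t\notin S_t(x'_t)$, the definition \eqref{eqn:def_saturated_set} gives $h([x^\star]_t,\bar x_t)\le d_t\sigma_t(\bar x_t,x'_t)$.
\item By \eqref{eqn:posterior_is_d_sigma_close}, $h(\bar x_t,x'_t)\le \tilde h^{(1)}_t(\bar x_t,x'_t)+d_t\sigma_t(\bar x_t,x'_t)$.
\item Again by \eqref{eqn:posterior_is_d_sigma_close}, $-h(x_t,x'_t)\le -\tilde h^{(1)}_t(x_t,x'_t)+d_t\sigma_t(x_t,x'_t)$.
\end{itemize}
Maximality of $x_t$ gives $\tilde h^{(1)}_t(\bar x_t,x'_t)-\tilde h^{(1)}_t(x_t,x'_t)\le 0$. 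Hence
\[
h([x^\star]_t,x_t)\le 2d_t\sigma_t(\bar x_t,x'_t)+d_t\sigma_t(x_t,x'_t).
\]
The same argument with the second sample $\tilde h^{(2)}_t$ and anchor $a'_t=x_t$ gives
\[
h([x^\star]_t,x'_t)\le 2d_t\sigma_t(\bar x'_t,x_t)+d_t\sigma_t(x'_t,x_t),
\]
using the symmetry $\sigma_t(x,x')=\sigma_t(x',x)$. This symmetry holds because $\kd_t$ changes sign in each argument under swapping, so the variance is invariant.

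Summing and multiplying by $L_\mu/2$ gives
\[
\Delta_t\le \frac{L_\mu d_t}{2}\big(2\sigma_t(\bar x_t,x'_t)+2\sigma_t(\bar x'_t,x_t)+2\sigma_t(x_t,x'_t)\big).
\]
This matches the stated form up to the leading constant ($1/2$ versus $1/4$). If the exact $1/4$ is needed, the fix is to bound the saturated-set term and the posterior deviation term with their half-widths; otherwise the constant is absorbed later. Finally, $r_t\le\Delta_t+L_\mu/t^2$.

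The main obstacle is legitimizing the anchor substitution. Proposition~\ref{prop_theta_concentration} only gives separability of $\tilde h_t$ almost surely. I would note that $\mathcal C_t$ is finite, so the identity holds simultaneously for all pairs in $\mathcal C_t$ with probability one. The argmax over $\mathcal C_t$ is therefore the same for every anchor, including the data-dependent anchor $x'_t$, which depends on the independent second sample. Some care is also needed because $x'_t$ may coincide with $x_t$, making the variance zero; in that case the decomposition remains valid trivially.
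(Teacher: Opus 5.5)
Your argument is the paper's argument, lightly reorganized. The paper decomposes the averaged gap $f([x^\star]_t)-\tfrac12\bigl(f(x_t)+f(x'_t)\bigr)$ through $\bar x_t=\bar x_t(x'_t)$ and $\bar x'_t=\bar x_t(x_t)$ in one chain. You treat $h([x^\star]_t,x_t)$ and $h([x^\star]_t,x'_t)$ separately, but with the same ingredients: anchor independence with anchors $x'_t$ and $x_t$, unsaturation, \eqref{eqn:posterior_is_d_sigma_close}, and maximality of the sample. Each step you write is valid. However, what it proves is $\Delta_t\le \frac{L_\mu d_t}{2}\bigl(2\sigma_t(\bar x_t,x'_t)+2\sigma_t(\bar x'_t,x_t)+2\sigma_t(x_t,x'_t)\bigr)$, which is twice the stated bound, so the literal statement is not reached.

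Your proposed repair via ``half-widths'' is not available. Definition \eqref{eqn:def_saturated_set} gives only $h([x^\star]_t,\bar x_t)\le d_t\sigma_t(\bar x_t,x'_t)$, and $E_1\cap E_{2,t}$ gives only $|\tilde h^{(i)}_t-h|\le d_t\sigma_t$. Nothing in these hypotheses rules out a configuration with $x_t=x'_t$, all three of your inequalities tight, and a gap small enough that the Lipschitz bound is tight. There $\Delta_t\approx 2L_\mu d_t\sigma_t(\bar x_t,x_t)$, twice what the stated bound allows (up to the $L_\mu/t^2$ term). Your opening ``it suffices to show'' paragraph should also be dropped. You never prove that reduced inequality, and even if you did, its doubled $\sigma_t(x_t,x'_t)$ term would not fit inside the stated bound.

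The missing factor is not a defect of your argument. The paper gets $1/4$ only because its first display asserts $\frac{2\Delta_t}{L_\mu}\le f([x^\star]_t)-\frac{f(x_t)+f(x'_t)}{2}$. Equation \eqref{eqn_discrete_regret_lipschitz} actually gives $\frac{2\Delta_t}{L_\mu}\le h([x^\star]_t,x_t)+h([x^\star]_t,x'_t)=2f([x^\star]_t)-f(x_t)-f(x'_t)$. Redoing the paper's chain with the correct factor reproduces exactly your constant $L_\mu d_t/2$. So state and prove the lemma with $1/2$ in place of $1/4$. This only doubles the universal constant $C'$ in Lemma~\ref{lemma_expectation_instant_ub} and leaves the order in Theorem~\ref{theorem_regret} unchanged.

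Two of your other points are correct and make explicit what the paper leaves implicit. First, the almost-sure separability holds simultaneously over the finite $\mathcal C_t$, which legitimizes the random anchors $x'_t$ and $x_t$. Second, $\sigma_t$ is symmetric. You should also state that $\bar x_t(a)$ is chosen in $\mathcal C_t\setminus S_t(a)$ (the paper writes $\mathcal X\setminus S_t(a)$), because maximality of $x_t$ only compares against points of $\mathcal C_t$.
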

\begin{proof}
Firstly, we can upper bound the instantaneous regret with the discrete instantaneous regret with some error, as shown in equation \eqref{eqn_discrete_cont_regret_distance},
\begin{equation*}
    r_t \leq \Delta_t + \frac{L_\mu}{t^2}.
\end{equation*}
For the rest of the proof we focus on upper bounding $\Delta_t$. We start by defining the anchor points $a_t = x_t'$ and $a_t' = x_t$ adding and subtracting the utility of the two least-variance unsaturated points with respect to the two anchor points, namely $\Bar{x}_t = \Bar{x}_t(a_t)$ and $\Bar{x}_t' = \Bar{x}_t(a_t')$. Recall from the Lipschitz reduction in equation \eqref{eqn_discrete_regret_lipschitz} that 
\begin{align*}
    \frac{2\Delta_t}{L_\mu} &\leq f([x^*]_t) - \frac{f(x_t)+f(x'_t)}{2}\\
    &= (f([x^*]_t) - \frac{f(\Bar{x}_t) + f(\Bar{x}_t')}{2} + \frac{f(\Bar{x}_t) + f(\Bar{x}_t')}{2} - \frac{f(x_t)+f(x'_t)}{2}\\
    &= \frac{h([x^*]_t,\Bar{x}_t) + h([x^*]_t,\Bar{x}_t')}{2} + \frac{f(\Bar{x}_t) + f(\Bar{x}_t')}{2} - \frac{f(x_t)+f(x'_t)}{2}\\
    \intertext{Since we know that both $\Bar{x}_t$ and $\Bar{x}_t'$ are unsaturated with respect to $S_t(a_t)$ and $S_t(a_t')$, by definition \eqref{eqn:def_saturated_set} we have that}
    &\leq \frac{d_t}{2} \left(\sigma_t(\Bar{x}_t,a_t) + \sigma_t(\Bar{x}_t',a_t')\right) + \frac{f(\Bar{x}_t) + f(\Bar{x}_t')}{2} - \frac{f(x_t)+f(x'_t)}{2}\\
    \intertext{Now, let's add and subtract both $f(a_t)$ and $f(a_t')$}
    &= \frac{d_t}{2} \left(\sigma_t(\Bar{x}_t,a_t) + \sigma_t(\Bar{x}_t',a_t')\right) + \frac{f(\Bar{x}_t) - f(a_t) + f(\Bar{x}_t') - f(a_t')}{2} - \frac{f(x_t) - f(a_t) +f(x'_t) - f(a_t')}{2}\\
    &= \frac{d_t}{2} \left(\sigma_t(\Bar{x}_t,a_t) + \sigma_t(\Bar{x}_t',a_t')\right) + \frac{h(\Bar{x}_t,a_t) + h(\Bar{x}_t',a_t')}{2} - \frac{h(x_t,a_t) +h(x'_t,a_t')}{2}\\
    \intertext{Now, we know from \eqref{eqn:posterior_is_d_sigma_close} that under event $E_1\cap E_{2,t}$, our posterior sample is $d_t\sigma_t$ close to the true difference function. Hence,}
    &\leq \frac{d_t}{2} \left(\sigma_t(\Bar{x}_t,a_t) + \sigma_t(\Bar{x}_t',a_t')\right) \\
    &\qquad+ \frac{\tilde{h}^{(1)}(\Bar{x}_t,a_t) + d_t\sigma_t(\Bar{x}_t,a_t)  + \tilde{h}^{(2)}(\Bar{x}_t',a_t') + d_t\sigma_t(\Bar{x}_t',a_t') }{2}\\
    &\qquad- \frac{\tilde{h}^{(1)}(x_t,a_t) - d_t\sigma_t(x_t,a_t)  +\tilde{h}^{(2)}(x'_t,a_t') - d_t\sigma_t(x_t',a_t') }{2}\\
\intertext{
Now we use the selection rule of PF-TS together with the anchor-independence property (Proposition~\ref{prop_theta_concentration}). 
By anchor independence, the actions chosen by the algorithm are invariant to the anchor used in the maximization. 
Hence, although the algorithm is implemented with a fixed anchor $x_0$, we may equivalently view $x_t$ and $x'_t$ as the maximizers of the sampled functions with respect to the arbitrary anchors $a_t$ and $a_t'$, respectively. 
In particular, $x_t = \arg\max_x \tilde h^{(1)}(x,a_t)$ and $x'_t = \arg\max_x \tilde h^{(2)}(x,a_t')$. 
Therefore the sampled values at the selected actions dominate those at any other candidate points, including $\Bar{x}_t$ and $\Bar{x}_t'$. 
Substituting these inequalities into the previous expression allows the sampled-function terms to be upper bounded by their corresponding uncertainty terms, yielding}
&\leq \frac{d_t}{2} \left(\sigma_t(\Bar{x}_t,a_t) + \sigma_t(\Bar{x}_t',a_t') + \sigma_t(\Bar{x}_t,a_t) + \sigma_t(\Bar{x}_t',a_t') +  \sigma_t(x_t,a_t)  +\sigma_t(x_t',a_t')\right).
\end{align*}

Now, by picking $a_t = x_t'$ and $a_t'=x_t$, we can write this as 

\begin{align*}
    \frac{2\Delta_t}{L_\mu} &\leq \frac{d_t}{2} \left(\sigma_t(\Bar{x}_t,x_t') + \sigma_t(\Bar{x}_t',x_t) +\sigma_t(\Bar{x}_t,x_t') + \sigma_t(\Bar{x}_t',x_t) +  \sigma_t(x_t,x_t')  +\sigma_t(x_t',x_t)\right) \\
    &= \frac{d_t}{2} \left(2\sigma_t(\Bar{x}_t,x_t') + 2\sigma_t(\Bar{x}_t',x_t) +  \sigma_t(x_t,x_t')  +\sigma_t(x_t',x_t)\right) \\
    &= \frac{d_t}{2} \left(2\sigma_t(\Bar{x}_t,x_t') + 2\sigma_t(\Bar{x}_t',x_t) +2\sigma_t(x_t,x_t')\right) \\
\end{align*}
Where the final line holds by symmetry of $\sigma_t$. The proof is then complete.
\end{proof}

\subsection{Putting everything together}\label{subsec_proof_putting_together}

We now combine the results from the previous subsections to obtain the final regret bound. 
Lemma~\ref{lemma_instant_ub_high_prob} shows that the instantaneous regret can be controlled by posterior uncertainty terms involving the played pair $(x_t,x_t')$ and the least-variance unsaturated points. 
The role of Lemma~\ref{lemma_unsaturated_ub} is to relate the uncertainty of these unsaturated points to the uncertainty of the played pair: since the algorithm selects unsaturated actions with constant 
probability, the expected posterior variance of the chosen actions cannot be much smaller than that of the least-variance unsaturated points. Consequently, the first two terms in the bound of 
Lemma~\ref{lemma_instant_ub_high_prob} can be controlled in expectation by $\sigma_t(x_t,x_t')$.

This reduction is important because the cumulative sum of posterior uncertainties 
$\sigma_t(x_t,x_t')$ can be bounded using standard information-theoretic arguments. 
In particular, the total uncertainty accumulated by the queried pairs is controlled by 
the maximum information gain under the dueling kernel. The following lemma formalizes 
this relationship and allows us to convert the variance-based regret bound into a 
cumulative regret bound in terms of the information gain.

\begin{lemma}\label{lemma_expectation_instant_ub}
    We can update the conditional expectation of the instantaneous regret as follows, for any filtration $\mathcal{F}_t$ where $E_1$ holds. 
    \begin{equation*}
        \mathbb{E}\left[r_t | \cF_{t-1} \right]\leq C'L_\mu d_t\mathbb{E}\left[\sigma_t(x_t,x_t')| \cF_{t-1} \right] +\frac{L_\mu+1}{t^2},
    \end{equation*}
    where $C'$ is a universal constant.
\end{lemma}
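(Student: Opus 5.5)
We would prove Lemma~\ref{lemma_expectation_instant_ub} by splitting on the good event $E_{2,t}$, applying Lemma~\ref{lemma_instant_ub_high_prob} on it, and then using Lemma~\ref{lemma_unsaturated_ub} to turn the two witness terms into the played-pair uncertainty. Throughout we condition on a history for which $E_1$ holds. The randomness at round $t$ is then only the two Thompson draws $\tilde h_t^{(1)},\tilde h_t^{(2)}$.

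\textbf{Step 1 (split on $E_{2,t}$).} We write
\[
r_t\le r_t\,\mathds{1}\{E_{2,t}\}+r_t\,\mathds{1}\{E_{2,t}^C\}.
\]
Since $\mu\in[0,1]$, the regret $r_t$ lies in $[0,1]$. Hence the second term contributes at most $\Pr(E_{2,t}^C\mid\cF)\le 2/t^2$ by \eqref{eqn:E2_prob}.

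\textbf{Step 2 (bound on the good event).} On $E_1\cap E_{2,t}$, Lemma~\ref{lemma_instant_ub_high_prob} gives
\[
r_t\,\mathds{1}\{E_{2,t}\}\le \tfrac{L_\mu d_t}{2}\Big(\sigma_t(\bar x_t,x'_t)+\sigma_t(\bar x'_t,x_t)+\sigma_t(x_t,x'_t)\Big)+\tfrac{L_\mu}{t^2}.
\]
Every term on the right-hand side is nonnegative, so we can drop the indicator and take the full conditional expectation.

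\textbf{Step 3 (transfer the witness terms).} This is the key step. Lemma~\ref{lemma_unsaturated_ub} with $a_t=x'_t$ and $a'_t=x_t$ gives
\[
\EE[\sigma_t(\bar x_t(x'_t),x'_t)]\le (p-2/t^2)^{-1}\,\EE[\sigma_t(x_t,x'_t)],
\]
and the same bound for the primed term, after using the symmetry $\sigma_t(x'_t,x_t)=\sigma_t(x_t,x'_t)$.

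\textbf{Main obstacle: small $t$.} The factor $(p-2/t^2)^{-1}$ is only a constant once $t$ is large enough that $p-2/t^2\ge p/2$, i.e.\ $t\ge t_0:=\lceil 2/\sqrt p\,\rceil$, a universal constant. I would take $C'=\tfrac12(1+4/p)$ for $t\ge t_0$. For the finitely many rounds $t<t_0$, I would either use $r_t\le 1$ and absorb $t_0$ into the $\tcO$ of the final theorem, or enlarge $C'$ together with the additive $(L_\mu+1)/t^2$ slack. The claim as stated probably implicitly assumes $t\ge t_0$.

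\textbf{Conditioning subtlety.} The anchor $a_t=x'_t$ is itself random, but it depends only on $\tilde h_t^{(2)}$, which is independent of $\tilde h_t^{(1)}$. This independence is exactly what the conditioning on $a_t=a$ inside the proof of Lemma~\ref{lemma_unsaturated_ub} uses, so Lemma~\ref{lemma_high_prob_unsaturated} applies for each fixed $a$. I would point this out explicitly, together with the mismatch in filtration indices between $\cF_{t-1}$ and $\cF_t$: both denote the history before the round-$t$ draws, so they coincide.

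\textbf{Conclusion.} Adding the contributions from Steps 1–3 gives
\[
\EE[r_t\mid\cF_{t-1}]\le C'L_\mu d_t\,\EE[\sigma_t(x_t,x'_t)\mid\cF_{t-1}]+\frac{L_\mu+2}{t^2},
\]
where the $2/t^2$ is the contribution of the bad event $E_{2,t}^C$. The stated constant $L_\mu+1$ is recovered either by using $\Pr(E_{2,t}^C)\le 1/t^2$ with a slightly tighter union bound, or by bounding $r_t\le\tfrac12$ via the regret definition, which gives $\tfrac12\cdot 2/t^2=1/t^2$.
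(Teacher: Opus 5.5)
Your proposal is correct and follows the paper's proof essentially step for step. You split on $E_{2,t}$, apply Lemma~\ref{lemma_instant_ub_high_prob} on the good event, and transfer the two witness terms to $\EE[\sigma_t(x_t,x'_t)\mid\cF_{t-1}]$ via Lemma~\ref{lemma_unsaturated_ub} and the symmetry of $\sigma_t$. You then bound the bad event using $r_t\le\tfrac12$ and $\Pr(E_{2,t}^C\mid\cF_{t-1})\le 2/t^2$, which is exactly how the paper obtains the $(L_\mu+1)/t^2$ term.

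Your small-$t$ caveat is legitimate, and the paper glosses over it. The paper asserts $C'=85$ using a factor $(p-1/t^2)^{-1}$, which is a slip for the $(p-2/t^2)^{-1}$ of Lemma~\ref{lemma_unsaturated_ub}, and this factor is not a bounded positive constant in the first few rounds. One correction to your remedies: when $p\le 2/t^2$ the transfer inequality is vacuous, so enlarging $C'$ cannot cover those rounds. What absorbs them is the restriction to $t\ge t_0$, or an enlarged additive term such as $\max\{L_\mu+1,\,t_0^2/2\}/t^2$, and either costs only a constant in Theorem~\ref{theorem_regret}.
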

\begin{proof} 

By taking the conditional expectation of the instantaneous regret and multiplying by $1=(\mathds{1}\{E_{2,t}\} + \mathds{1}\{E_{2,t}^C\})$, we get

\begin{align*}
    \mathbb{E}\left[r_t | \cF_{t-1} \right] &\leq \mathbb{E}\left[r_t \mathds{1}\{E_{2,t}\}| \cF_{t-1} \right] + \mathbb{E}\left[r_t \mathds{1}\{E_{2,t}^C\}| \cF_{t-1} \right]\\
    &\leq \mathbb{E}\left[\frac{L_\mu d_t}{4} \left(2\sigma_t(\Bar{x}_t,x_t') + 2\sigma_t(\Bar{x}_t',x_t) +2\sigma_t(x_t,x_t')\right) +\frac{L_\mu}{t^2}\right]+ \mathbb{E}\left[r_t \mathds{1}\{E_{2,t}^C\}| \cF_{t-1} \right]\\
    &\leq \left(\frac{L_\mu d_t}{p-\frac{1}{t^2}} + \frac{L_\mu d_t}{2}\right)\mathbb{E}\left[\sigma_t(x_t,x_t')| \cF_{t-1} \right] +\frac{L_\mu}{t^2} + \mathbb{E}\left[r_t \mathds{1}\{E_{2,t}^C\}| \cF_{t-1} \right]\\
    &\leq \left(\frac{L_\mu d_t}{p-\frac{1}{t^2}} + \frac{L_\mu d_t}{2}\right)\mathbb{E}\left[\sigma_t(x_t,x_t')| \cF_{t-1} \right] +\frac{L_\mu}{t^2} + \frac{1}{2}\Pr\left[\{E_{2,t}^C\}| \cF_{t-1} \right]\\
    &\leq \left(\frac{L_\mu d_t}{p-\frac{1}{t^2}} + \frac{L_\mu d_t}{2}\right)\mathbb{E}\left[\sigma_t(x_t,x_t')| \cF_{t-1} \right] +\frac{L_\mu}{t^2} + \frac{1}{t^2}\\
    &\leq C'L_\mu d_t\mathbb{E}\left[\sigma_t(x_t,x_t')| \cF_{t-1} \right] +\frac{L_\mu}{t^2} + \frac{1}{t^2}
\end{align*}
Where the second inequality uses Lemma \ref{lemma_instant_ub_high_prob}, the third inequality uses Lemma \ref{lemma_unsaturated_ub} and the symmetry of $\sigma_t$, the fourth inequality holds since $r_t \leq 1/2$, the penultimate inequality uses equation \eqref{eqn:E2_prob}, and the final inequality holds with $C' = 85$.
\end{proof}

For the remainder of the proof we use the upper bound on the instantaneous regret established in 
Lemma~\ref{lemma_expectation_instant_ub} to obtain a high-probability bound for the cumulative regret 
$R(T)$. Intuitively, we first control the expected instantaneous regret in terms of posterior 
uncertainty, and then convert this expected bound into a high-probability statement for the cumulative 
regret using a martingale concentration argument.

To do so, we decompose the cumulative regret into its conditional expectation and a martingale 
difference sequence capturing the stochastic fluctuations around this expectation. Lemma~\ref{lemma_martingale_true} 
formalizes this decomposition and identifies the corresponding martingale difference sequence. 
Lemma~\ref{lemma_high_prob_cum_reg} then provides a concentration bound for this martingale term. Finally, 
Lemma~\ref{lemma_sum_of_stds_pasztor} combines these results with the expected regret bound from 
Lemma~\ref{lemma_expectation_instant_ub} to obtain a high-probability bound on the cumulative regret.

We start by defining the following quantities.
\begin{align*}
    \Bar{r}_t &= r_t \mathds{1}\{E_1\}\\
    X_t &= \Bar{r}_t - C'L_\mu d_t\sigma_t(x_t,x_t') -\frac{L_\mu+1}{t^2}\\
    Y_t &= \sum_{s=1}^t X_s
\end{align*}

It is then straightforward to show that the process $Y_t$ is a super martingale.
\begin{lemma} \label{lemma_martingale_true}
    The process $(Y_t;t=0, \dots, T)$ is a super martingale process with respect to filtration $\mathcal{F}_{t}$
\end{lemma}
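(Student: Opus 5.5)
We need to show $\EE[Y_t - Y_{t-1}\mid \cF_{t-1}] = \EE[X_t\mid\cF_{t-1}]\le 0$, together with the (routine) facts that $Y_t$ is adapted and integrable. Adaptedness holds because $X_t$ depends only on $r_t$, $\sigma_t(x_t,x_t')$ and $\mathds{1}\{E_1\}$. The posterior $\sigma_t$ is determined by the history and the round-$t$ pair, so all of these are measurable with respect to the filtration once round $t$ is complete. Integrability is immediate: $0\le \bar r_t\le 1/2$, $\sigma_t\le \sqrt{\kd(z,z)}\le 2$ under the normalization $k\le 1$, and $d_t$ is deterministic.

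For the drift, condition on $\cF_{t-1}$ and split according to whether $E_1$ holds. The event $E_1$ is a statement about the estimators $h_s$ and the true $h$ uniformly over $s\le T$. Following the paper's convention, I would treat its indicator as determined by the data, restricting to histories consistent with $E_1$. In practice I would replace $\mathds{1}\{E_1\}$ by the $\cF_{t-1}$-measurable indicator $\mathds{1}\{E_{1,t}\}$ that the confidence bound holds at round $t$. This is the only form in which $E_1$ is used in Lemmas~\ref{lemma_anti_concentration}--\ref{lemma_expectation_instant_ub}, and it satisfies $\mathds{1}\{E_1\}\le \mathds{1}\{E_{1,t}\}$.

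On histories where $E_{1,t}$ fails, $\bar r_t=0$. Hence
\[
X_t = -C'L_\mu d_t\sigma_t(x_t,x_t')-\tfrac{L_\mu+1}{t^2}\le 0,
\]
and its conditional expectation is nonpositive.

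On histories where $E_{1,t}$ holds, $\bar r_t\le r_t$, and Lemma~\ref{lemma_expectation_instant_ub} applies directly:
\[
\EE[\bar r_t\mid\cF_{t-1}]\le C'L_\mu d_t\,\EE[\sigma_t(x_t,x_t')\mid\cF_{t-1}]+\tfrac{L_\mu+1}{t^2}.
\]
Because $d_t$ is deterministic, subtracting $\EE\big[C'L_\mu d_t\sigma_t(x_t,x_t')+\tfrac{L_\mu+1}{t^2}\mid\cF_{t-1}\big]$ gives $\EE[X_t\mid\cF_{t-1}]\le 0$. Combining the two cases yields the supermartingale property, with $Y_0=0$.

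The main obstacle is bookkeeping rather than mathematics. First, $E_1$ is defined uniformly over all $t\le T$, so as stated $\mathds{1}\{E_1\}$ is not $\cF_{t-1}$-measurable. I would resolve this by the round-wise replacement above, which costs nothing later since $E_1\subseteq E_{1,t}$ for all $t$. Second, the paper mixes the indices $\cF_t$ and $\cF_{t-1}$ between Lemmas~\ref{lemma_unsaturated_ub} and~\ref{lemma_expectation_instant_ub}. I would fix the convention that the conditioning $\sigma$-algebra contains the history before the round-$t$ draws, so that $\sigma_t(\cdot)$ as a function is predictable and only the played pair $(x_t,x_t')$ is random under the conditioning.
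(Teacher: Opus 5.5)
Your proposal is correct and matches the paper's argument: the drift bound $\EE[X_t\mid\cF_{t-1}]\le 0$ comes directly from Lemma~\ref{lemma_expectation_instant_ub}, which is exactly the paper's one-line proof. The extra bookkeeping is a genuine improvement. You check adaptedness and integrability, and you replace $\mathds{1}\{E_1\}$ by the round-$t$ indicator $\mathds{1}\{E_{1,t}\}$ (the device used by Chowdhury and Gopalan); this repairs an adaptedness gap the paper glosses over, and because $E_1\subseteq E_{1,t}$ it leaves the later Azuma--Hoeffding step in Lemma~\ref{lemma_high_prob_cum_reg} unaffected.
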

\begin{proof}
    From Lemma \ref{lemma_expectation_instant_ub} we know that 
    \begin{equation*}
        \mathbb{E}\left[\Bar{r}_t | \cF_{t-1}\right]\leq C'L_\mu d_t\mathbb{E}\left[\sigma_t(x_t,x_t')| \cF_{t-1}\right] +\frac{L_\mu+1}{t^2}.
    \end{equation*}
    Hence, we immediately have
    \begin{equation*}
        \mathbb{E}\left[Y_t - Y_{t-1}|\mathcal{F}_{t-1}\right] \leq 0,
    \end{equation*}
    namely the process $(Y_t;t=0, \dots, T)$ is a super martingale.
\end{proof}

\begin{lemma}\label{lemma_high_prob_cum_reg}
     Let $\delta\in (0,1/2)$. With probability greater than $1-2\delta$ the total cumulative regret can be upper bounded by
     \begin{equation*}
         R_T = \sum_{t=1}^t r(t) = \tilde{\mathcal{O}}\left(\beta_t(\delta)\sum_{t=1}^T \sigma_t(x_t,x_t') + \sqrt{\log(1/\delta)T}\right)
     \end{equation*}
\end{lemma}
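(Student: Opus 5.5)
We bound the supermartingale $Y_T$ from Lemma~\ref{lemma_martingale_true} with Azuma--Hoeffding and then intersect with $E_1$.

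\emph{Bounded increments.} Since $0\le \bar r_t\le r_t\le 1/2$, it suffices to control $C'L_\mu d_t\sigma_t(x_t,x_t')$. Assumption~\ref{ass:RKHS_norm} gives $\kd(z,z)\le 4$, and the posterior variance is at most the prior variance, so $\sigma_t(x_t,x_t')\le 2$. Hence
\[
|X_t|\le \tfrac12 + 2C'L_\mu d_t + \tfrac{L_\mu+1}{t^2}=:c_t .
\]
The sequence $d_t$ is nondecreasing in $t$, since both $\beta_t(\delta)$ and $|\mathcal{C}_t|$ increase. Therefore $c_t\le c_T = \cO(L_\mu d_T)$ for all $t\le T$.

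\emph{Azuma--Hoeffding.} For supermartingales, Azuma--Hoeffding gives, with probability at least $1-\delta$,
\[
Y_T\le \sqrt{2\log(1/\delta)\sum_{t=1}^T c_t^2}\le c_T\sqrt{2T\log(1/\delta)} .
\]
Unfolding the definition of $Y_T$, this reads
\[
\sum_{t=1}^T \bar r_t\le C'L_\mu d_T\sum_{t=1}^T\sigma_t(x_t,x_t') + (L_\mu+1)\frac{\pi^2}{6} + c_T\sqrt{2T\log(1/\delta)} .
\]

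\emph{Union bound.} Lemma~\ref{lem:confidence_bound} gives $\Pr(E_1)\ge 1-\delta$ by \eqref{eqn:E1_prob}. On $E_1$ we have $\bar r_t=r_t$ for every $t$, so a union bound yields the display for $R_T$ with probability at least $1-2\delta$.

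\emph{Rewriting $d_T$.} By \eqref{eqn_d_t_definition} and $|\mathcal{C}_T|=(BGwdT^2)^d$,
\[
d_T=\beta_T(\delta)\Big(1+\sqrt{2\log(|\mathcal{C}_T|^2T^2)}\Big)=\beta_T(\delta)\,\cO\big(\sqrt{d\log(BGwdT)}\big),
\]
which is $\beta_T(\delta)$ up to logarithmic factors. The first term is therefore $\tilde\cO\big(\beta_T(\delta)\sum_t\sigma_t(x_t,x_t')\big)$, and the constant term is absorbed.

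\emph{Main obstacle.} The deviation term is $\cO\big(L_\mu d_T\sqrt{T\log(1/\delta)}\big)$, which carries an extra factor $d_T$ compared with the target $\sqrt{\log(1/\delta)T}$. I would keep this term in the form $\tilde\cO\big(\beta_T(\delta)\sqrt{T\log(1/\delta)}\big)$, which the $\tilde\cO$ in the statement appears to permit. It is in any case dominated by the final $\beta_T(\delta)\sqrt{T\Gamma(T)}$ rate whenever $\Gamma(T)\gtrsim\log(1/\delta)$.

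If the sharper $\sqrt{T\log(1/\delta)}$ form is required, I would try one of two refinements. The first is Freedman's inequality with the conditional variance bounded by $\EE[X_t^2\mid\mathcal F_{t-1}]$. The second is to apply Azuma only to the bounded part $\bar r_t-\EE[\bar r_t\mid\mathcal F_{t-1}]$, whose increments are at most $1/2$. The $\sigma$-terms would then be handled separately by comparing $\sum_t\EE[\sigma_t\mid\mathcal F_{t-1}]$ with $\sum_t\sigma_t$ via another Azuma step. That step still has increments of order $\beta_T$, so this is where I expect most of the effort to go.

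A further subtlety is that Lemma~\ref{lemma_expectation_instant_ub} holds only on filtrations where $E_1$ is true. To make the supermartingale property hold everywhere, I would replace $\mathds 1\{E_1\}$ by the predictable indicator that the confidence bound holds up to time $t$. This indicator is $\mathcal F_{t-1}$-measurable and coincides with $\mathds 1\{E_1\}$ on $E_1$.
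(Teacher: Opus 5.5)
Your proposal is correct and follows the paper's proof essentially step for step: bound the increments of $Y_t$, apply Azuma--Hoeffding to the supermartingale, union-bound with $E_1$, and substitute $d_T=\tilde{\mathcal{O}}(\beta_T(\delta))$. The extra $d_T$ factor you flag on the $\sqrt{T\log(1/\delta)}$ term also appears in the paper's own display, where it is absorbed without comment and is harmless for Theorem~\ref{theorem_regret}; your bound $\sigma_t\le 2$ and your predictable-indicator replacement for $\mathds{1}\{E_1\}$ correct details the paper glosses over.
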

\begin{proof}
 By definition of $Y_t$ and $X_t$, we have 
    \begin{align*}
        |Y_t - Y_{t-1}|  &= |X_t|\\
        &\leq |\Bar{r}_t| + C'L_\mu d_t\sigma_t(x_t,x_t') +\frac{L_\mu+1}{t^2}\\
        &\leq \frac{1}{2}+ C'L_\mu d_t\sigma_t(x_t,x_t') +\frac{L_\mu+1}{t^2}\\
        &\leq \frac{1}{2}+ C'L_\mu d_t +\frac{L_\mu+1}{t^2}\\
    \end{align*}
    Hence, we can apply the Azuma-Hoeffding inequality \citep[Lemma 11, ][]{chowdhury2017kernelized}     to attain the following with probability greater than $1-\delta$
    \begin{align}
        \sum_{t=1}^T \Bar{r}_T &\leq \sum_{t=1}^T C'L_\mu d_t\sigma_t(x_t,x_t') +\sum_{t=1}^T\frac{L_\mu+1}{t^2} + \sqrt{2\log(1/\delta) \sum_{t=1}^T \left(\frac{1}{2}+ C'L_\mu d_t +\frac{L_\mu+1}{t^2}\right)^2}\nonumber\\
        &\leq C'L_\mu d_T\sum_{t=1}^T \sigma_t(x_t,x_t') +(L_\mu+1)\frac{\pi^2}{6} + \sqrt{2\log(1/\delta)\cdot T\left(\frac{1}{2} + C'L_\mu d_T + L_\mu+1\right)^2}\nonumber\\
        &= C'L_\mu d_T\sum_{t=1}^T \sigma_t(x_t,x_t') +(L_\mu+1)\frac{\pi^2}{6} + \left(\frac{1}{2} + C'L_\mu d_T + L_\mu+1\right)\sqrt{2T\log(1/\delta)}\label{eqn_union_bound_event_endofproof}
    \end{align}
    Where in the second inequality we have used that $d_t \leq d_T$ and calculated the sums. Now, since we have defined $\Bar{r}_t = r_t \mathds{1}\{E_1\}$, we know that under event $E_1$ we have $\Bar{r}_t=r_t$ for all $t \geq 1$. Furthermore, from \eqref{eqn:E1_prob}, we know event $E_1$ holds with probability at least $1-\delta$. Hence by taking a union bound over events $E_1$ and \eqref{eqn_union_bound_event_endofproof}, the following holds with probability greater than $1-2\delta$
    \begin{equation*}
        \sum_{t=1}^T r_t \leq C'L_\mu d_T\sum_{t=1}^T \sigma_t(x_t,x_t') +(L_\mu+1)\frac{\pi^2}{6} + \left(\frac{1}{2} + C'L_\mu d_T + L_\mu+1\right)\sqrt{2T\log(1/\delta)}
    \end{equation*}
    Then plugging in our definition for $d_t$ from \eqref{eqn_d_t_definition} yields the desired result. 
\end{proof}

\begin{lemma}\label{lemma_sum_of_stds_pasztor}
     The sum of posterior standard deviation for the action pairs played from rounds $t=1$ to $T$ can be upper bounded by 
     \begin{equation*}
         \sum_{t=1}^T \sigma_t(x_t,x_t') \leq \sqrt{\frac{8T\,\Gamma(T)}{\log(1+4(\lambda\kappa)^{-1})}}
     \end{equation*}
\end{lemma}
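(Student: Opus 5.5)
This is the standard elliptical-potential (information-gain) argument, carried out for the dueling kernel $\kd$ with regularisation $\lambda\kappa$. One indexing point comes first. The proof uses $\sigma_t(x_t,x'_t)$ to mean the posterior standard deviation at the pair $z_t=(x_t,x'_t)$ computed from the data available before $z_t$ is queried, i.e. from $z_1,\dots,z_{t-1}$. I will write it as $\sigma_{t-1}(z_t)$ in the notation of \eqref{std_krr}. The plan has three steps.

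\textbf{Step 1 (Cauchy--Schwarz).} We have
\[
\sum_{t=1}^T \sigma_{t-1}(z_t)\le \sqrt{T\sum_{t=1}^T \sigma_{t-1}^2(z_t)},
\]
so it suffices to show $\sum_t \sigma^2_{t-1}(z_t)\le 8\Gamma(T)/\log(1+4(\lambda\kappa)^{-1})$.

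\textbf{Step 2 (bound on the variances and the logarithmic comparison).} By Assumption~\ref{ass:RKHS_norm}, $k\le 1$. The dueling kernel satisfies $\kd(z,z)=k(x,x)+k(x',x')-2k(x,x')$. Since $k$ is positive semidefinite, $|k(x,x')|\le\sqrt{k(x,x)k(x',x')}\le 1$, so $\kd(z,z)\le 4$. Posterior variances are at most prior variances, hence $\sigma^2_{t-1}(z_t)\le 4$ and
\[
s_t:=(\lambda\kappa)^{-1}\sigma^2_{t-1}(z_t)\in[0,4(\lambda\kappa)^{-1}].
\]
The function $u\mapsto \log(1+u)/u$ is decreasing. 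Therefore, for $u\in[0,c]$ with $c=4(\lambda\kappa)^{-1}$,
\[
u\le \frac{c}{\log(1+c)}\log(1+u).
\]
Applying this to $s_t$ and multiplying by $\lambda\kappa$ gives
\[
\sigma^2_{t-1}(z_t)\le \frac{4}{\log(1+4(\lambda\kappa)^{-1})}\log\bigl(1+(\lambda\kappa)^{-1}\sigma^2_{t-1}(z_t)\bigr).
\]

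\textbf{Step 3 (telescoping the determinant).} This is the key identity:
\[
\sum_{t=1}^T \log\bigl(1+(\lambda\kappa)^{-1}\sigma^2_{t-1}(z_t)\bigr)=\log\det\bigl(I+(\lambda\kappa)^{-1}\Kd_T\bigr).
\]
To prove it, apply the Schur complement (block determinant) formula to
\[
\Kd_t+\lambda\kappa I=\begin{pmatrix}\Kd_{t-1}+\lambda\kappa I & \kd_{t-1}(z_t)\\ \kd_{t-1}(z_t)^\top & \kd(z_t,z_t)+\lambda\kappa\end{pmatrix}.
\]
This gives
\[
\det(\Kd_t+\lambda\kappa I)=\det(\Kd_{t-1}+\lambda\kappa I)\,\bigl(\lambda\kappa+\sigma^2_{t-1}(z_t)\bigr),
\]
where the Schur complement $\kd(z_t,z_t)-\kd_{t-1}(z_t)^\top(\Kd_{t-1}+\lambda\kappa I)^{-1}\kd_{t-1}(z_t)$ equals $\sigma^2_{t-1}(z_t)$ precisely because \eqref{std_krr} uses the same regulariser $\lambda\kappa$. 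Dividing by $(\lambda\kappa)^t$ and telescoping over $t$ yields the identity. By the definition \eqref{eq:max_info_gain}, the right-hand side is at most $2\Gamma(T)$.

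Combining the three steps,
\[
\sum_t\sigma^2_{t-1}(z_t)\le \frac{4\cdot 2\Gamma(T)}{\log(1+4(\lambda\kappa)^{-1})},
\]
and Step 1 then gives the stated bound.

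The main place I expect care to be needed is the constants. The bound $\kd(z,z)\le 4$ is what produces the $4$ inside the logarithm, and the factor $2$ from the $\tfrac12$ in \eqref{eq:max_info_gain} combines with it to give the $8$. A second point to check is that the matrix in the determinant is $\Kd$ with scaling $(\lambda\kappa)^{-1}$, matching \eqref{std_krr}. Everything else is routine and follows the scalar-feedback argument (e.g. Chowdhury and Gopalan; Pasztor et al.) applied to $\kd$.
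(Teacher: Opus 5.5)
Your proposal is correct and follows the same route as the paper: Cauchy--Schwarz, then the bound $\sum_t \sigma^2(z_t)\le 8\Gamma(T)/\log(1+4(\lambda\kappa)^{-1})$. The paper simply cites that bound from Lemma 14 of P\'asztor et al., whereas you derive it yourself from $\kd(z,z)\le 4$, the $\log(1+u)/u$ comparison, and the Schur-complement telescoping, and your constants match exactly. Reading $\sigma_t(x_t,x_t')$ as the pre-query variance $\sigma_{t-1}(z_t)$ is the right choice; if it were read as the post-update variance, the bound would only be smaller, since posterior variances are nonincreasing.
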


\begin{proof}
    By the Cauchy Schwarz inequality we know that 
    \begin{equation*}
        \sum_{t=1}^T \sigma_t(x_t,x_t') \leq \sqrt{T\sum_{t=1}^T \sigma_t^2(x_t,x_t')}.
    \end{equation*}
    Note that from \citep[Lemma 14][]{pasztor2024bandits}, we have the following inequality for the sum of variances. 
    \begin{equation*}
        \sum_{t=1}^T \sigma_t^2(x_t,x_t') \leq \frac{8\,\Gamma(T)}{\log(1+4(\lambda\kappa)^{-1})}
    \end{equation*}
    Plugging this in to our first equation gives us the desired result.
\end{proof}

Finally, combining the bound on the sum of posterior standard deviations from 
Lemma~\ref{lemma_sum_of_stds_pasztor} with the high-probability bound on the cumulative 
regret established in Lemma~\ref{lemma_high_prob_cum_reg} yields the desired regret 
guarantee for Theorem~\ref{theorem_regret}. In particular, with probability at least 
$1-2\delta$, the cumulative regret satisfies
\begin{align*}
    R_T = \tilde{\mathcal{O}}\!\left(\beta_T(\delta)\sqrt{T\Gamma(T)} + \sqrt{T\log(1/\delta)}\right).
\end{align*}
This completes the proof of Theorem~\ref{theorem_regret}.

\section{Additional Experiments}\label{appendix:additional_experiments}

In this section, we present additional experiments that further support the findings of the main paper. In addition to the common Bayesian optimization benchmark and real-world catalyst design task considered in Section~\ref{sec:experiments}, we evaluate the proposed method on two new objectives that capture complementary aspects of preference-based optimization.

The first objective is a synthetic function drawn from the reproducing kernel Hilbert space (RKHS) of a Matérn-$5/2$ kernel. The function is generated following the procedure of \citet{kayal2025bayesian} and provides a controlled setting that
aligns closely with the kernelized assumptions underlying BOHF methods. The second objective is a hyperparameter tuning task involving the optimization of five hyperparameters (batch size, number of units per layer, learning rate, momentum, and weight decay) for a four-layer neural network trained on the MNIST classification task, using data from the LCBench benchmark \citep{zimmer2021autopytorch}. This setting models noisy preference feedback obtained from partial training runs and reflects a realistic scenario for scalable hyperparameter optimization.

Across both objectives, we observe trends consistent with the results reported for the Catalyst and Ackley tasks. In particular, as shown in Figure~\ref{fig:rebuttal}, PF-TS exhibits a faster reduction in instantaneous regret during the early stages of optimization compared to MR-LPF, which leads to a substantial improvement in cumulative regret after 200 iterations. Final average cumulative regret values, reported in the rebuttal, further confirm that PF-TS achieves competitive or improved performance relative to existing methods on both synthetic and practical preference-based optimization problems.



\begin{figure*}[ht]
\centering
\begin{subfigure}{0.49\linewidth}
\centering
    \includegraphics[width=0.7\linewidth]{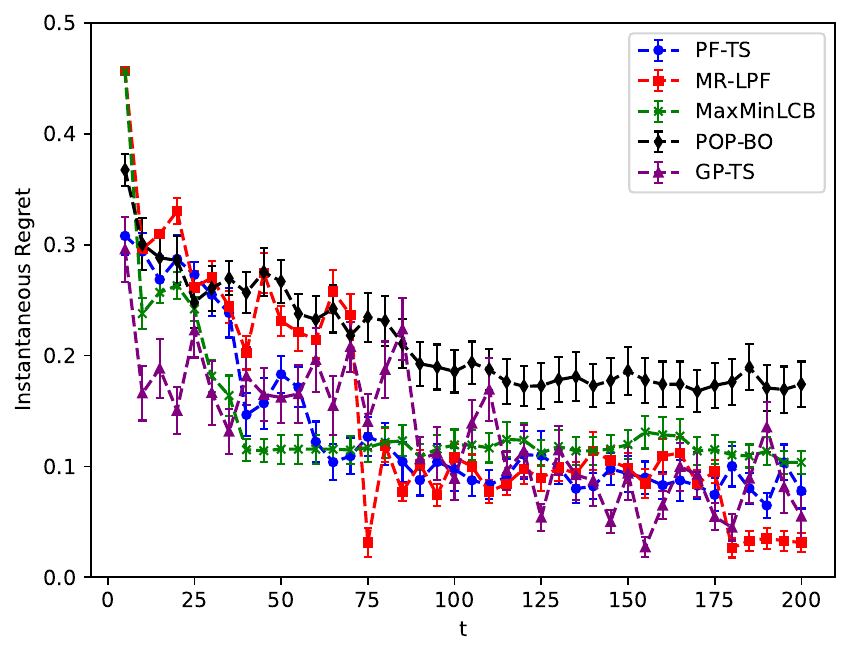}
        \caption{\textbf{(HPO)} Instantaneous Regret}
    \label{fig:hpo_1}
\end{subfigure}
\begin{subfigure}{0.49\linewidth}
\centering
    \includegraphics[width=0.7\linewidth]{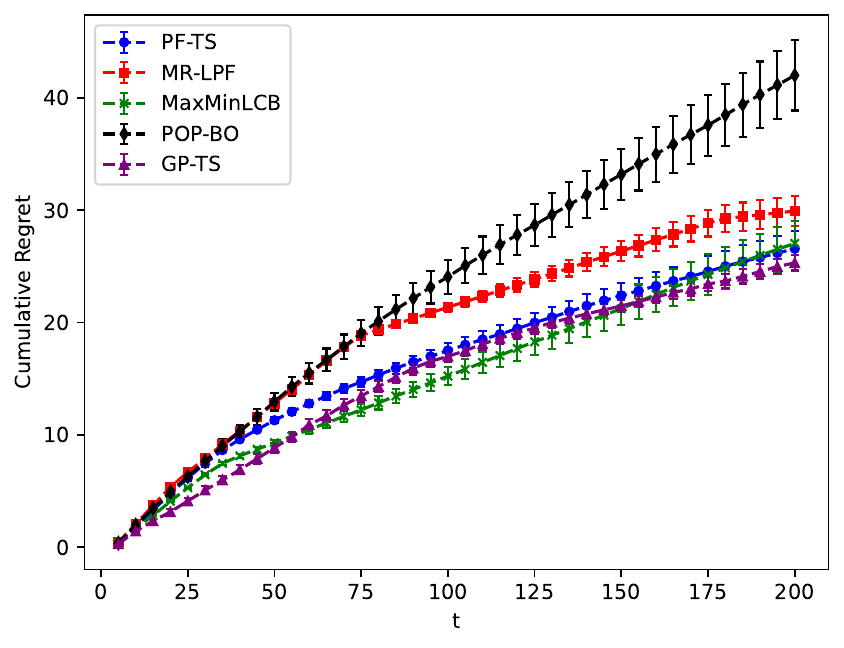}
        \caption{\textbf{(HPO)} Cumulative Regret}
    \label{fig:hpo_2}
\end{subfigure}
\begin{subfigure}{0.49\linewidth}
\centering
    \includegraphics[width=0.7\linewidth]{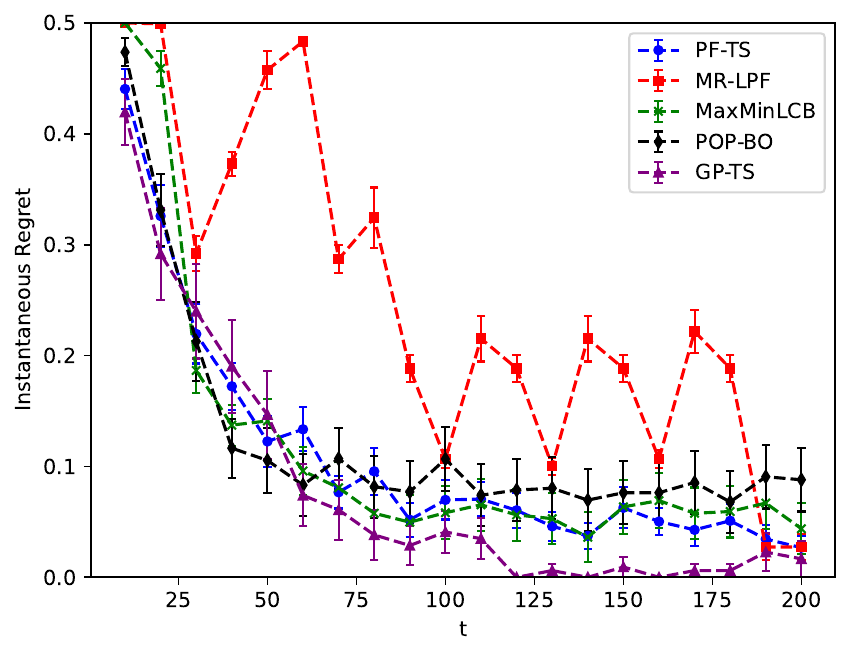}
        \caption{\textbf{(RKHS)} Instantaneous Regret}
    \label{fig:rkhs_1}
\end{subfigure}
\begin{subfigure}{0.49\linewidth}
\centering
    \includegraphics[width=0.7\linewidth]{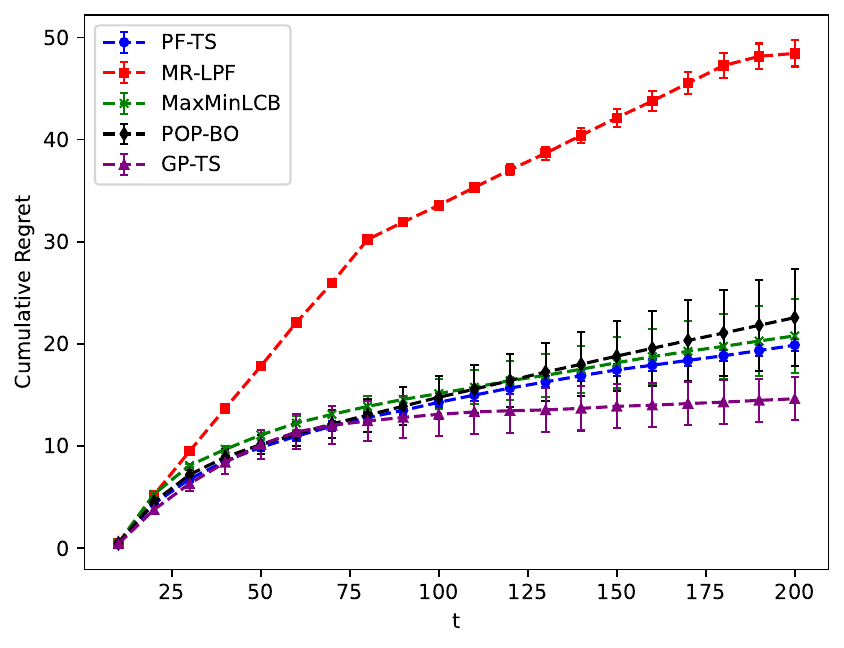}
        \caption{\textbf{(RKHS)} Cumulative Regret}
    \label{fig:rkhs_2}
\end{subfigure}
\caption{Regret of PF-TS, MR-LPF, MaxMinLCB, POP-BO, and GP-TS. Mean over 30 runs; shaded bands show $\pm$1 standard error. (a,b) Hyperparameter tuning objective ($T=200$). (c,d) RKHS optimisation objective ($T=200$). Left column: instantaneous (simple) regret. Right column: cumulative regret~\eqref{regret_definition_duel}.}

\label{fig:rebuttal}
\vskip -0.2in
\end{figure*}

\section{Experimental Details }\label{appendix:experiment}

In this section we provide more details of the experiments run and displayed in the main paper. We also provide any additional, supporting results.

\subsection{Ackley Function Experiments} \label{app_sub_details_ackley}

The Ackley function considered in Section \ref{sec:experiments} is defined as 

\begin{equation} \label{eq:ackley_definition}
    \text{Ackley}(x) = -a \exp\left(-b \sqrt{\frac{1}{d}\sum_{i=1}^d x_i^2}\right) - \exp\left(\frac{1}{d}\sum_{i=1}^d \cos (cx_i)\right) + a + \exp(1)
\end{equation}
with constants $a=20$, $b=0.2$, $c=2\pi$ and $d$ is the number of dimensions in the action space.

In Bayesian Optimization we oftentimes try to minimize the function, however in this problem formulation we try to maximize the utility. Hence, in our implementation, we flip the Ackley function and try to find the maxima. This is illustrated in Figure \ref{eq:ackley_definition} below and was what we used in the experiments. 

\begin{figure}[H]\label{fig:illustration_ackley}
    \centering
    \includegraphics[width=0.5\linewidth]{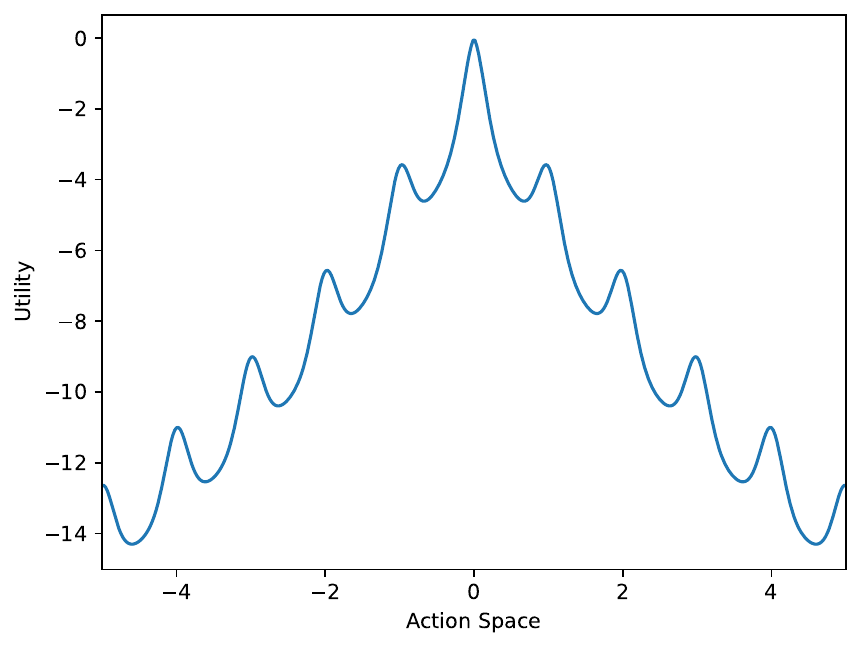}
    \caption{Visualization of Ackley utility function used in experiments.}
    \label{fig:placeholder}
\end{figure}

For our simulations, we consider the one-dimensional ackley function and discretize the action space $\mathcal{X}=[-5,5]$ into 40 evenly spaced points from which the algorithms can pick any two actions $x_t,x_t'$ to play in each iteration. To produce the preference feedback from this utility function, we simply draw a Bernoulli random variable as 

$$y_t = \mathds{1}\{x_t\succ x_t'\} \sim \text{Bernoulli}(\sigma(f(x_t)-f(x'_t)).$$

In Figures \ref{fig:ackley_1} and \ref{fig:ackley_2} we plot the regret from using algorithms PF-TS (Ours), MR-LPF \citep{kayal2025bayesian}, MaxMinLCB \citep{pasztor2024bandits} and POP-BO \citep{xu2024principled}. We use the same inference described in Section \ref{prediction_uncertainty} with regularization term $\lambda=0.05$ and we have used a M\'atern 2.5 kernel for each algorithm with length scale $\ell=0.1$. In Figure \ref{fig:ackley_1} we plot the mean instantaneous regret and in Figure \ref{fig:ackley_2} we plot the mean cumulative regret (as well as the standard errors on each) at each iteration from 30 runs of each algorithm up to horizon $T=300$. On the same figure we have also plotted GP-TS \citep{chowdhury2017kernelized} with access to direct scalar feedback of the same utility function $f$, where we use standard Gaussian noise for the observations. Namely, we allow GP-TS in each round to select one action from the action space $x_t \in \mathcal{X}$. From this action they then observe direct scalar feedback of
$$y_t = f(x_t) + \epsilon_t,$$
where $\epsilon_t \sim N(0,1)$. Then we record the regret in each round $t$ to be the same as the preference feedback in equation \eqref{regret_definition_duel} with $x'_t=x_t$. Again, this is equivalent to using the standard regret definition of $f(x^*) - f(x_t)$ up to the same constant. By using \eqref{regret_definition_duel} we simply put the regret for all algorithms on the same scale. For confidence interval-based algorithms MR-LPF, MaxMinLCB, POP-BO, and GP-UCB we used the confidence interval width coefficient equal to 1 \citep[as in ][]{kayal2025bayesian}. For GP-TS and TS we used the exploration bonus $v_t^2 = \sqrt{t+1+\log(2/0.05)}$. This performs well in practice and is a minor modification to our bonus in \eqref{eq:beta_def}, where we have loosened the information gain to $t$ and then taken the square-root.

\subsection{Catalyst Composition Experiments}

The data used for the experiments described in Section \ref{subsection:catalyst_experiments} can be found in the {Open Catalyst Experiments 2024 (OCx24)} dataset released by \emph{FAIR Chemistry} (Meta AI) \citep{abed2024open}. The full dataset can be accessed publically on github  (\url{https://github.com/facebookresearch/fairchem/tree/main/src/fairchem/applications/ocx/data/experimental_data}) which reports hydrogen yields for various different catalyst compositions, for two different experiments at different currents. One of the most popular compositions studied in this work are catalysts composed of silver (Ag), gold (Au), and zinc (Zn). Hence, to define our utility function, we fix a particular experimental set up and only vary these compositions. In particular, we consider the CO2R reaction at a 300-amp current. We then record the hydrogen yield ('fe-h2') for each of the 63 such compositions as the utility for each Ag/Au/Zn composition. This extracted data be viewed in the supplementary material. The hydrogen yields, originally measured quantitatively in the range [0,100], were rescaled to [0,10] for the purposes of our experiments.

Figures~\ref{fig:cat_1} and~\ref{fig:cat_2} similarly plot regret for PF-TS (ours), MR-LPF \citep{kayal2025bayesian}, MaxMinLCB \citep{pasztor2024bandits}, and POP-BO \citep{xu2024principled}, using the same inference procedure as Section~\ref{prediction_uncertainty} and a Mat\'ern-$5/2$ kernel with hyperparameters same as the previous experiments. For reference, we again also include cumulative regret for vanilla GP-TS \citep{chowdhury2017kernelized} with direct scalar feedback on $f$. Results are averaged over 30 runs up to a horizon of $T=800$ iterations.

\subsection{RKHS Function Experiments}
For experimental results displayed in Figures \ref{fig:rkhs_1} and \ref{fig:rkhs_2} we use a utility function sampled from an RKHS with Matérn-$5/2$ kernel (similar to the process used by \citet{kayal2025bayesian}). The sampled utility function is displayed in the following figure. When running the experiments using this utility function, we discretize the $[0,1]$ action space into 30 evenly spaced points. We then construct any preference feedback from action pairs with the same procedure discussed in Appendix \ref{app_sub_details_ackley}.

\begin{figure}[H]\label{fig:illustration_ackley}
    \centering
    \includegraphics[width=0.5\linewidth]{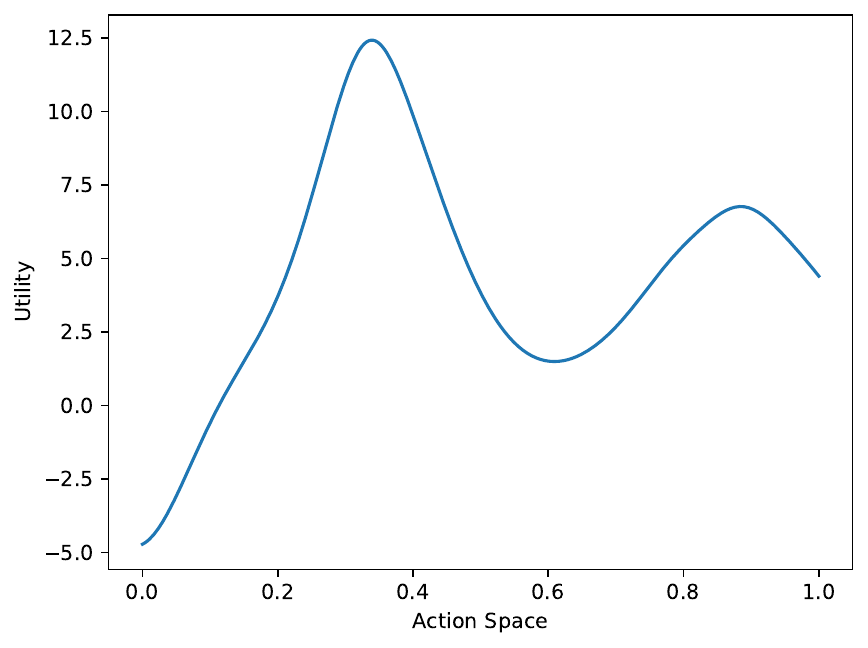}
    \caption{Visualization of utility function sampled from RKHS used in experiments.}
    \label{fig:rebuttal_RKHS_experiments_utility_function}
\end{figure}

\subsection{Hyperparameter Tuning Experiments}

For the results displayed in Figures \ref{fig:hpo_1} and \ref{fig:hpo_2}, we use data from the hyperparamter tuning benchmark LCBench \citep{zimmer2021autopytorch}. In particular, we look at the recorded accuracy data for different neural network architectures with the objective of image classification on the MNIST dataset. For these experiments we consider a set of 40 hyperparameter configurations for the neural networks used which have 4 layers. Hence, in this case we define a utility function $f$ to be the accuracy achieved from each configuration and the action space $\mathcal{X}$ is 5-dimensional with entries representing batch size, number of units per layer, learning rate, momentum, and weight decay. With this utility function and action space defined, we construct the preference feedback in the same way as usual (see Appendix \ref{app_sub_details_ackley}).

\subsection{Cost of Scalar vs Preference Feedback}

For the results displayed in Figure~\ref{fig:cat_cost}, we use the results from the experiments from the previous Section \ref{subsection:catalyst_experiments}. In particular, we take the instantaneous regret data for PF-TS (ours) using preference feedback and vanilla GP-TS which has access to direct scalar feedback from the catalyst composition utility function over a horizon of $T=800$ iterations. We then plot a total cost budget $c$ versus achieved instantaneous regret for our algorithm PF-TS, where one unit of cost is the cost of one preference feedback iteration (i.e. cost budget $c$ buys $c$ preference feedback iterations). We then consider various cost ratios $\xi\in\{1,3,5,7\}$ between scalar and preference feedback (i.e. cost budget $c$ buys $c/\xi$ scalar feedback iterations). Under these different ratios, we plot the total cost budget $c$ versus achieved instantaneous regret for the vanilla GP-TS algorithm using scalar feedback. Namely, at each cost budget $c$, PF-TS runs for $c$ preference iterations, while GP-TS is allotted only $c/\xi$ scalar iterations. In Figure \ref{fig:cat_cost}, we plot the regret and $\pm$1 standard error after each additional 25 iterations can be bought by each algorithm at each ratio. Note that we see that with a cost budget of $c=800$ we can afford many more preference feedback iterations than scalar feedback iterations when the cost ratio, $\xi$, is high for the scalar feedback. Hence, we can achieve lower instantaneous regret in these settings.

\subsection{Anchor Independence}

The plot in Figure \ref{fig:anchor} was made by considering an action space of size $|\mathcal{X}|=50$ and an Ackley utility function. We uniformly select 200 pairs of actions and then estimate $h_t$, $\sigma_t$ using the inference presented in Section \ref{sec:prelim}. We then draw a posterior sample of $\Tilde{h}_t$ as

$$\Tilde{h}_t(\cdot,\cdot) \sim GP(h_t, v_t^2 k_t^\Delta).$$ 

Then for each of 25 evenly spaced choices for $x_0 \in [0,5]$, we plot the maximizer of the single-argument search for TS as 
$$x_t = \argmax_{x \in \mathcal{X}} \Tilde{h}_t(x,x_0)$$
and for UCB
$$x_t = \argmax_{x \in \mathcal{X}} h_t(x,x_0) + \sigma_t^\Delta(x,x_0).$$
Here we have arbitrarily selected the scaling parameter $\beta=1$ for UCB in order to illustrate it's anchor dependence. To reiterate: for a Thompson selected action, we see from Figure \ref{fig:anchor} that any choice of anchor point leads to the same choice of action. On the other hand, the action selected by UCB can vary dependent on the choice of anchor point.

\subsection{Compute Information}

For the implementation of the inference methods described in Section~\ref{sec:prelim}, we extended the recent codebase of \citet{kayal2025bayesian}. The full implementation of our method is provided in the supplementary material. All simulations were conducted on a workstation equipped with an NVIDIA GeForce RTX~3090 GPU (25\,GB RAM) and an AMD EPYC~7542 32-core CPU. 

For each preference-feedback algorithm applied to the Ackley utility function (Section~\ref{subsection:ackley_experiments}), the runtime per experiment was approximately 5--10 hours. For the catalyst dataset experiments (Section~\ref{subsection:catalyst_experiments}), each preference-feedback algorithm required approximately 10--20 hours to complete.

%% file: CAMERA_READY/AISTATS_camera_ready/bibliography.bib
@inproceedings{christiano2017deep,
  title     = {Deep Reinforcement Learning from Human Preferences},
  author    = {Christiano, Paul F. and Leike, Jan and Brown, Tom B. and Martic, Miljan and Legg, Shane and Amodei, Dario},
  booktitle = {Advances in Neural Information Processing Systems (NeurIPS)},
  year      = {2017}
}

@inproceedings{ouyang2022training,
  title     = {Training Language Models to Follow Instructions with Human Feedback},
  author    = {Ouyang, Long and Wu, Jeff and Jiang, Xu and Almeida, Diogo and Wainwright, Carroll L. and Mishkin, Pamela and Zhang, Chong and Agarwal, Sandhini and Slama, Katarina and Ray, Alex and Schulman, John and Hilton, Jacob and Kelton, Fraser and Miller, Luke and Simens, Maddie and Askell, Amanda and Welinder, Peter and Christiano, Paul and Leike, Jan and Lowe, Ryan},
  booktitle = {Advances in Neural Information Processing Systems (NeurIPS)},
  year      = {2022}
}

@article{bai2022constitutional,
  title   = {Constitutional {AI}: Harmlessness from {AI} Feedback},
  author  = {Bai, Yuntao and Kadavath, Saurav and Kundu, Sandipan and Askell, Amanda and Kernion, Jackson and Jones, Andy and Chen, Anna and Goldie, Anna and Mirhoseini, Azalia and McKinnon, Cameron and Chen, Carol and Olsson, Catherine and Olah, Christopher and Hernandez, Danny and Drain, Dawn and Ganguli, Deep and Li, Dustin and Tran-Johnson, Eli and Perez, Ethan and Kerr, Jamie and Mueller, Jared and Ladish, Jeffrey and Landau, Joshua and Ndousse, Kamal and Lukosuite, Kamile and Lovitt, Liane and Sellitto, Michael and Elhage, Nelson and Schiefer, Nicholas and Mercado, Noemi and DasSarma, Nova and Lasenby, Robert and Larson, Robin and Ringer, Sam and Johnston, Scott and Kravec, Shauna and El Showk, Sheer and Fort, Stanislav and Lanham, Tamera and Telleen-Lawton, Timothy and Conerly, Tom and Henighan, Tom and Hume, Tristan and Bowman, Samuel R. and Hatfield-Dodds, Zac and Mann, Ben and Amodei, Dario and Joseph, Nicholas and McCandlish, Sam and Brown, Tom and Kaplan, Jared},
  journal = {arXiv preprint arXiv:2212.08073},
  year    = {2022}
}

@inproceedings{liu2023geval,
  title     = {G-Eval: {NLG} Evaluation using {GPT-4} with Better Human Alignment},
  author    = {Liu, Yang and Iter, Dan and Xu, Yichong and Wang, Shuohang and Xu, Ruochen and Zhu, Chenguang},
  booktitle = {Proceedings of the 2023 Conference on Empirical Methods in Natural Language Processing (EMNLP)},
  year      = {2023},
  publisher = {Association for Computational Linguistics}
}

@inproceedings{yue2009interactively,
  title={Interactively optimizing information retrieval systems as a dueling bandits problem},
  author={Yue, Yisong and Joachims, Thorsten},
  booktitle={Proceedings of the 26th Annual International Conference on Machine Learning},
  pages={1201--1208},
  year={2009}
}

@inproceedings{srinivas2009gaussian,
  title={Gaussian process optimization in the bandit setting: No regret and experimental design},
  author={Srinivas, Niranjan and Krause, Andreas and Kakade, Sham M and Seeger, Matthias},
  booktitle={International Conference on Machine Learning},
  year={2010}
}

@article{bradley1952rank,
  title={Rank analysis of incomplete block designs: I. The method of paired comparisons},
  author={Bradley, Ralph Allan and Terry, Milton E},
  journal={Biometrika},
  year={1952}
}

@article{bengs2021preference,
  title={Preference-based online learning with dueling bandits: A survey},
  author={Bengs, Viktor and Busa-Fekete, R{\'o}bert and El Mesaoudi-Paul, Adil and H{\"u}llermeier, Eyke},
  journal={Journal of Machine Learning Research},
  year={2021}
}

@inproceedings{gonzalez2017preferential,
  title={Preferential bayesian optimization},
  author={Gonz{\'a}lez, Javier and Dai, Zhenwen and Damianou, Andreas and Lawrence, Neil D},
  booktitle={International Conference on Machine Learning},
  year={2017}
}

@inproceedings{mikkola2020projective,
  title={Projective preferential bayesian optimization},
  author={Mikkola, Petrus and Todorovi{\'c}, Milica and J{\"a}rvi, Jari and Rinke, Patrick and Kaski, Samuel},
  booktitle={International Conference on Machine Learning},
  year={2020}
}

@inproceedings{takeno2023towards,
  title={Towards practical preferential Bayesian optimization with skew Gaussian processes},
  author={Takeno, Shion and Nomura, Masahiro and Karasuyama, Masayuki},
  booktitle={International Conference on Machine Learning},
  year={2023}
}

@inproceedings{xu2024principled,
  title={Principled Preferential Bayesian Optimization},
  author={Xu, Wenjie and Wang, Wenbin and Jiang, Yuning and Svetozarevic, Bratislav and Jones, Colin},
  booktitle={International Conference on Machine Learning},
  year={2024}
}

@article{abbasi2013online,
author = {Abbasi-Yadkori, Yasin},
title = {Online learning for linearly parametrized control problems},
year = {2013},
journal = {University of Alberta}
}

@inproceedings{chowdhury2017kernelized,
  title={On kernelized multi-armed bandits},
  author={Chowdhury, Sayak Ray and Gopalan, Aditya},
  booktitle={International Conference on Machine Learning},
  year={2017}
}

@article{vakili2021optimal,
  title={Optimal order simple regret for Gaussian process bandits},
  author={Vakili, Sattar and Bouziani, Nacime and Jalali, Sepehr and Bernacchia, Alberto and Shiu, Da-shan},
  journal={Advances in Neural Information Processing Systems},
  year={2021}
}

@inproceedings{scarlett2017lower,
  title={Lower bounds on regret for noisy gaussian process bandit optimization},
  author={Scarlett, Jonathan and Bogunovic, Ilija and Cevher, Volkan},
  booktitle={Conference on Learning Theory},
  year={2017}
}

@article{whitehouse2024sublinear,
  title={On the sublinear regret of GP-UCB},
  author={Whitehouse, Justin and Ramdas, Aaditya and Wu, Steven Z},
  journal={Advances in Neural Information Processing Systems},
  volume={36},
  year={2024}
}

@article{yue2012k,
  title={The k-armed dueling bandits problem},
  author={Yue, Yisong and Broder, Josef and Kleinberg, Robert and Joachims, Thorsten},
  journal={Journal of Computer and System Sciences},
  year={2012},
  publisher={Elsevier}
}

@article{saha2021optimal,
  title={Optimal algorithms for stochastic contextual preference bandits},
  author={Saha, Aadirupa},
  journal={Advances in Neural Information Processing Systems},
  year={2021}
}

@inproceedings{saha2022efficient,
  title={Efficient and optimal algorithms for contextual dueling bandits under realizability},
  author={Saha, Aadirupa and Krishnamurthy, Akshay},
  booktitle={International Conference on Algorithmic Learning Theory},
  year={2022}
}

@inproceedings{dudik2015contextual,
  title={Contextual dueling bandits},
  author={Dud{\'\i}k, Miroslav and Hofmann, Katja and Schapire, Robert E and Slivkins, Aleksandrs and Zoghi, Masrour},
  booktitle={Conference on Learning Theory},
  year={2015}
}

@inproceedings{
wu2023making,
title={Making {RL} with Preference-based Feedback Efficient via Randomization},
author={Runzhe Wu and Wen Sun},
booktitle={The Twelfth International Conference on Learning Representations},
year={2024}
}

@inproceedings{xu2020zeroth,
  title={Zeroth order non-convex optimization with dueling-choice bandits},
  author={Xu, Yichong and Joshi, Aparna and Singh, Aarti and Dubrawski, Artur},
  booktitle={Conference on Uncertainty in Artificial Intelligence},
  year={2020}
}

@article{jamil2013literature,
  title={A literature survey of benchmark functions for global optimisation problems},
  author={Jamil, Momin and Yang, Xin-She},
  journal={International Journal of Mathematical Modelling and Numerical Optimisation},
  year={2013}
}

@article{thompson1933likelihood,
  title={On the likelihood that one unknown probability exceeds another in view of the evidence of two samples},
  author={Thompson, William R},
  journal={Biometrika},
  year={1933}
}

@inproceedings{valko2013finite,
  title={Finite-Time Analysis of Kernelised Contextual Bandits},
  author={Valko, Michal and Korda, Nathan and Munos, R{\'e}mi and Flaounas, Ilias and Cristianini, Nello},
  booktitle={Uncertainty in Artificial Intelligence},
  year={2013}
}

@article{salgia2021domain,
  title={A domain-shrinking based bayesian optimization algorithm with order-optimal regret performance},
  author={Salgia, Sudeep and Vakili, Sattar and Zhao, Qing},
  journal={Advances in Neural Information Processing Systems},
  year={2021}
}

@inproceedings{
zhan2023provable,
title={Provable Offline Preference-Based Reinforcement Learning},
author={Wenhao Zhan and Masatoshi Uehara and Nathan Kallus and Jason D. Lee and Wen Sun},
booktitle={The Twelfth International Conference on Learning Representations},
year={2024}
}

@inproceedings{pasztor2024bandits,
title={Bandits with Preference Feedback: A Stackelberg Game Perspective},
author={Barna P{\'a}sztor and Parnian Kassraie and Andreas Krause},
booktitle={Advances in Neural Information Processing Systems 38},
year={2024}
}

@inproceedings{lifeel,
title={Feel-Good Thompson Sampling for Contextual Dueling Bandits},
author={Li, Xuheng and Zhao, Heyang and Gu, Quanquan},
booktitle={Forty-first International Conference on Machine Learning},
year={2024}
}

@inproceedings{bengs2022stochastic,
  title={Stochastic contextual dueling bandits under linear stochastic transitivity models},
  author={Bengs, Viktor and Saha, Aadirupa and H{\"u}llermeier, Eyke},
  booktitle={International Conference on Machine Learning},
  year={2022}
}

@InProceedings{pmlr-v151-li22a,
  title = 	 { Gaussian Process Bandit Optimization with Few Batches },
  author =       {Li, Zihan and Scarlett, Jonathan},
  booktitle = 	 {Proceedings of The 25th International Conference on Artificial Intelligence and Statistics},
  year = 	 {2022}
}

@inproceedings{kayal2025bayesian,
  title        = {Bayesian Optimization from Human Feedback: Near-Optimal Regret Bounds},
  author       = {Kayal, Aya and Vakili, Sattar and Toni, Laura and Shiu, Da-shan and Bernacchia, Alberto},
  booktitle    = {Proceedings of the 42nd International Conference on Machine Learning},
  year         = {2025}
}

@inproceedings{agrawal2012analysis,
  title     = {Analysis of Thompson Sampling for the Multi-armed Bandit Problem},
  author    = {Agrawal, Shipra and Goyal, Navin},
  booktitle = {Proceedings of the 25th Annual Conference on Learning Theory},
  year      = {2012}
}

@inproceedings{kaufmann2012thompson,
  title     = {Thompson Sampling: An Asymptotically Optimal Finite-Time Analysis},
  author    = {Kaufmann, {\'E}milie and Korda, Nathaniel and Munos, R{\'e}mi},
  booktitle = {Proceedings of the 23rd International Conference on Algorithmic Learning Theory},
  year      = {2012}
}

@article{russo2014learning,
  title   = {Learning to Optimize via Posterior Sampling},
  author  = {Russo, Daniel and Van Roy, Benjamin},
  journal = {Mathematics of Operations Research},
  year    = {2014}
}

@article{abed2024open,
  title={Open catalyst experiments 2024 (OCx24): Bridging experiments and computational models},
  author={Abed, Jehad and Kim, Jiheon and Shuaibi, Muhammed and Wander, Brook and Duijf, Boris and Mahesh, Suhas and Lee, Hyeonseok and Gharakhanyan, Vahe and Hoogland, Sjoerd and Irtem, Erdem and others},
  journal={arXiv preprint arXiv:2411.11783},
  year={2024}
}

@inproceedings{
mehta2025sample,
title={Sample Efficient Preference Alignment in {LLM}s via Active Exploration},
author={Viraj Mehta and Syrine Belakaria and Vikramjeet Das and Ojash Neopane and Yijia Dai and Ilija Bogunovic and Barbara E Engelhardt and Stefano Ermon and Jeff Schneider and Willie Neiswanger},
booktitle={Second Conference on Language Modeling},
year={2025}
}

@article{wang2023recent,
  title={Recent advances in Bayesian optimization},
  author={Wang, Xilu and Jin, Yaochu and Schmitt, Sebastian and Olhofer, Markus},
  journal={ACM Computing Surveys},
  year={2023}
}

@article{EI_1998,
author = {Jones, Donald and Schonlau, Matthias and Welch, William},
year = {1998},
title = {Efficient Global Optimization of Expensive Black-Box Functions},
journal = {Journal of Global Optimization}
}

@InProceedings{pmlr-v151-tran-the22a,
  title = 	 { Regret Bounds for Expected Improvement Algorithms in Gaussian Process Bandit Optimization },
  author =       {Tran-The, Hung and Gupta, Sunil and Rana, Santu and Venkatesh, Svetha},
  booktitle = 	 {Proceedings of The 25th International Conference on Artificial Intelligence and Statistics},
  year = 	 {2022},
}

@article{wu2016double,
  title={Double thompson sampling for dueling bandits},
  author={Wu, Huasen and Liu, Xin},
  journal={Advances in neural information processing systems},
  year={2016}
}

@inproceedings{siivola2021preferential,
  title={Preferential batch Bayesian optimization},
  author={Siivola, Eero and Dhaka, Akash Kumar and Andersen, Michael Riis and Gonz{\'a}lez, Javier and Moreno, Pablo Garc{\'\i}a and Vehtari, Aki},
  booktitle={2021 IEEE 31st International Workshop on Machine Learning for Signal Processing (MLSP)},
  year={2021},
  organization={IEEE}
}

@InProceedings{pmlr-v32-zoghi14,
  title = 	 {Relative Upper Confidence Bound for the K-Armed Dueling Bandit Problem},
  author = 	 {Zoghi, Masrour and Whiteson, Shimon and Munos, Remi and Rijke, Maarten},
  booktitle = 	 {Proceedings of the 31st International Conference on Machine Learning},
  year = 	 {2014}
}

@InProceedings{pmlr-v32-ailon14,
  title = 	 {Reducing Dueling Bandits to Cardinal Bandits},
  author = 	 {Ailon, Nir and Karnin, Zohar and Joachims, Thorsten},
  booktitle = 	 {Proceedings of the 31st International Conference on Machine Learning},
  year = 	 {2014},
  series = 	 {Proceedings of Machine Learning Research}
}

@article{zoghi2015copeland,
  title={Copeland dueling bandits},
  author={Zoghi, Masrour and Karnin, Zohar S and Whiteson, Shimon and De Rijke, Maarten},
  journal={Advances in neural information processing systems},
  year={2015}
}

@inproceedings{jamieson2015sparse,
  title={Sparse dueling bandits},
  author={Jamieson, Kevin and Katariya, Sumeet and Deshpande, Atul and Nowak, Robert},
  booktitle={Artificial Intelligence and Statistics},
  year={2015}
}

@article{ramamohan2016dueling,
  title={Dueling bandits: Beyond condorcet winners to general tournament solutions},
  author={Ramamohan, Siddartha Y and Rajkumar, Arun and Agarwal, Shivani},
  journal={Advances in Neural Information Processing Systems},
  volume={29},
  year={2016}
}

@InProceedings{pmlr-v139-saha21b,
  title = 	 {Dueling Convex Optimization},
  author =       {Saha, Aadirupa and Koren, Tomer and Mansour, Yishay},
  booktitle = 	 {Proceedings of the 38th International Conference on Machine Learning},
  year = 	 {2021}
}

@inproceedings{dang2025preferential,
  title={Preferential Multi-Objective Bayesian Optimization for Drug Discovery},
  author={Dang, Tai and Pham, Long-Hung and Truong, Sang T and Glenn, Ari and Nguyen, Wendy and Pham, Edward A and Glenn, Jeffrey S and Koyejo, Sanmi and Luong, Thang},
  booktitle={ICLR 2025 Workshop on Human-AI Coevolution},
year={2025}
}

@article{vakili2021scalable,
  title={Scalable Thompson sampling using sparse Gaussian process models},
  author={Vakili, Sattar and Moss, Henry and Artemev, Artem and Dutordoir, Vincent and Picheny, Victor},
  journal={Advances in neural information processing systems},
  year={2021}
}

@article{bull2011convergence,
  title={Convergence rates of efficient global optimization algorithms},
  author={Bull, Adam D},
  journal={Journal of Machine Learning Research},
  year={2011}
}

@inproceedings{wang2014theoretical,
  title={Theoretical analysis of Bayesian optimisation with unknown Gaussian process hyper-parameters},
  author={Wang, Zi and de Freitas, Nando},
  booktitle={Advances in Neural Information Processing Systems},
  year={2014}
}

@inproceedings{tucker2020preference,
  title={Preference-based learning for exoskeleton gait optimization},
  author={Tucker, Maegan and Novoseller, Ellen and Kann, Claudia and Sui, Yanan and Yue, Yisong and Burdick, Joel W and Ames, Aaron D},
  booktitle={2020 IEEE international conference on robotics and automation (ICRA)},
  year={2020},
  organization={IEEE}
}

@article{kushner1964new,
  title   = {A New Method of Locating the Maximum Point of an Arbitrary Multipeak Curve in the Presence of Noise},
  author  = {Kushner, Harold J.},
  journal = {Journal of Basic Engineering},
  year    = {1964}
}

@inproceedings{zheng2023judging,
  title     = {Judging {LLM}-as-a-Judge with {MT}-Bench and Chatbot Arena},
  author    = {Zheng, Lianmin and Chiang, Weize and Sheng, Ying and Li, Zhuohan and Du, Nan and Zhang, Tianjun and Ma, Xuezhe and Abbeel, Pieter and Stoica, Ion and Zhuang, Yong and Li, Eric P. and Lin, Dahua},
  booktitle = {Advances in Neural Information Processing Systems},
  year      = {2023},
  note      = {arXiv:2306.05685}
}

@inproceedings{lee2024rlaif,
  title     = {{RLAIF} vs. {RLHF}: Scaling Reinforcement Learning from Human Feedback with {AI} Feedback},
  author    = {Lee, Harrison and Phatale, Samrat and Mansoor, Hassan and Mesnard, Thomas and Ferret, Johan and Lu, Kellie Ren and Bishop, Colton and Hall, Ethan and Carbune, Victor and Rastogi, Abhinav and Prakash, Sushant},
  booktitle = {Proceedings of the 41st International Conference on Machine Learning},
  year      = {2024}
}

@inproceedings{liu2024pairs,
  title     = {Aligning with Human Judgement: The Role of Pairwise Preference in Large Language Model Evaluators},
  author    = {Liu, Yinhong and Zhou, Han and Guo, Zhijiang and Shareghi, Ehsan and Vuli{\'c}, Ivan and Korhonen, Anna and Collier, Nigel},
  booktitle = {Conference on Language Modeling (COLM)},
  year      = {2024}
}

@article{yang2019machine,
  title={Machine-learning-guided directed evolution for protein engineering},
  author={Yang, Kevin K and Wu, Zachary and Arnold, Frances H},
  journal={Nature methods},
  year={2019}
}

@article{hoffman2013exploiting,
  title={Exploiting correlation and budget constraints in bayesian multi-armed bandit optimization},
  author={Hoffman, Matthew W and Shahriari, Bobak and de Freitas, Nando},
  journal={arXiv preprint arXiv:1303.6746},
  year={2013}
}

@ARTICLE{zimmer2021autopytorch,
  author={Zimmer, Lucas and Lindauer, Marius and Hutter, Frank},
  journal={IEEE Transactions on Pattern Analysis and Machine Intelligence}, 
  title={Auto-Pytorch: Multi-Fidelity MetaLearning for Efficient and Robust AutoDL}, 
  year={2021},}

@inproceedings{vakili2022improved,
  title={Improved convergence rates for sparse approximation methods in kernel-based learning},
  author={Vakili, Sattar and Scarlett, Jonathan and Shiu, Da-shan and Bernacchia, Alberto},
  booktitle={International Conference on Machine Learning},
  year={2022},
}

@inproceedings{astudillo2023qeubo,
  title={qEUBO: A decision-theoretic acquisition function for preferential Bayesian optimization},
  author={Astudillo, Raul and Lin, Zhiyuan Jerry and Bakshy, Eytan and Frazier, Peter},
  booktitle={International conference on artificial intelligence and statistics},
  year={2023}
}
